\documentclass[11pt]{article}

\usepackage{fullpage}
\usepackage{graphicx}
\usepackage[dvipsnames]{xcolor}

\usepackage{float}

\usepackage{amsmath}
\usepackage{amssymb}
\usepackage{amsthm}
\usepackage{bm}
\usepackage{bbm}

\usepackage{multirow}
\usepackage[backref=page]{hyperref}
\usepackage{booktabs}
\usepackage{multirow}

\newtheorem{theorem}{Theorem}
\newtheorem{proposition}{Proposition}
\newtheorem{lemma}{Lemma}
\newtheorem{corollary}{Corollary}
\newtheorem{definition}{Definition}
\newtheorem{assumption}{Assumption}
\newtheorem{example}{Example}
\newtheorem{remark}{Remark}

\usepackage[sort]{natbib}

\usepackage{color}

\hypersetup{
    colorlinks=true,
    citecolor=Blue,
    linkcolor=Blue,
    urlcolor=Blue,
}

\title{Towards Understanding the Mechanism of Contrastive Learning via Similarity Structure: A Theoretical Analysis}

\author{%
        \begin{tabular}{c}
             Hiroki Waida$^1$  \\
             waida.h.aa@m.titech.ac.jp
        \end{tabular} \and 
        \begin{tabular}{c}
             Yuichiro Wada$^{2,3}$  \\
             wada.yuichiro@fujitsu.com
        \end{tabular} \and
        \begin{tabular}{c}
             Léo Andéol$^{4,5,6,7}$  \\
             leo.andeol@math.univ-toulouse.fr
        \end{tabular} \and
        \begin{tabular}{c}
             Takumi Nakagawa$^{1,3}$  \\
             nakagawa.t.as@m.titech.ac.jp
        \end{tabular} \and
        \begin{tabular}{c}
             Yuhui Zhang$^1$  \\
             zhang.y.av@m.titech.ac.jp
        \end{tabular} \and
        \begin{tabular}{c}
             Takafumi Kanamori$^{1,3}$  \\
             kanamori@c.titech.ac.jp
        \end{tabular}}

\date{$^1$Tokyo Institute of Technology, Japan\\
      $^2$Fujitsu, Japan\\
      $^3$RIKEN AIP, Japan\\
      $^4$Institut de Mathématiques de Toulouse, France\\
      $^5$SNCF, France\\
      $^6$Université de Toulouse, France\\
      $^7$CNRS, France}

\hyphenpenalty=1000\relax
\exhyphenpenalty=1000\relax
\sloppy

\setlength{\parskip}{6pt}

\begin{document}

\maketitle

\begin{abstract}
Contrastive learning is an efficient approach to self-supervised representation learning.
Although recent studies have made progress in the theoretical understanding of contrastive learning, the investigation of how to characterize the clusters of the learned representations is still limited.
In this paper, we aim to elucidate the characterization from theoretical perspectives. 
To this end, we consider a kernel-based contrastive learning framework termed Kernel Contrastive Learning (KCL), where kernel functions play an important role when applying our theoretical results to other frameworks.
We introduce a formulation of the similarity structure of learned representations by utilizing a statistical dependency viewpoint.
We investigate the theoretical properties of the kernel-based contrastive loss via this formulation.
We first prove that the formulation characterizes the structure of representations learned with the kernel-based contrastive learning framework.
We show a new upper bound of the classification error of a downstream task, which explains that our theory is consistent with the empirical success of contrastive learning. 
We also establish a generalization error bound of KCL.
Finally, we show a guarantee for the generalization ability of KCL to the downstream classification task via a surrogate bound.
\end{abstract}

\section{Introduction}
\label{sec:introduction}

Recently, many studies on self-supervised representation learning have been paying much attention to contrastive learning~\citep{chen2020simple,chen2020improved,caron2020unsupservised,haochen2021provable,dwibedi2021little,li2021selfsupervised}.
Through contrastive learning, encoder functions acquire how to encode unlabeled data to good representations by utilizing some information of similarity behind the data, where recent works~\citep{chen2020simple,chen2020improved,dwibedi2021little} use several data augmentation techniques to produce pairs of similar data.
It is empirically shown by many works~\citep{chen2020simple,chen2020improved,caron2020unsupservised,haochen2021provable,dwibedi2021little} that contrastive learning produces effective representations that are fully adaptable to downstream tasks, such as classification and transfer learning.

Besides the practical development of contrastive learning, the theoretical understanding of contrastive learning is essential to construct more efficient self-supervised learning algorithms.
In this paper, we tackle the following fundamental question of contrastive learning from the theoretical side: 
\textit{How are the clusters of feature vectors output from an encoder model pretrained by contrastive learning characterized?}

Recently, several works have shed light on several theoretical perspectives on this problem to study the generalization guarantees of contrastive learning to downstream classification tasks~\citep{pmlr-v97-saunshi19a,dufumier2022rethinking,haochen2021provable,huang2021towards,wang2022chaos,haochen2022theoretical,zhao2023arcl}.
One of the primary approaches of these works is to introduce some similarity measures in the data. \citet{pmlr-v97-saunshi19a} has introduced the conditional independence assumption, which assumes that data $x$ and its positive data $x^{+}$ are sampled independently according to the conditional probability distribution $\mathcal{D}_{c}$, given the \textit{latent class} $c$ drawn from the latent class distribution.
Although the concepts of latent classes and conditional independence assumption are often utilized to formulate semantic similarity of $x$ and $x^{+}$~\citep{pmlr-v97-saunshi19a,ash2022investigating,awasthi2022do,bao2022on,zou2023generalization}, it is pointed out by several works~\citep{haochen2021provable,wang2022chaos} that this assumption can be violated in practice.
Several works~\citep{haochen2021provable,wang2022chaos} have introduced different ideas about the similarity between data to alleviate this assumption.
\citet{haochen2021provable} have introduced the notion called \textit{population augmentation graph} to provide a theoretical analysis for Spectral Contrastive Learning (SCL) without the conditional independence assumption on $x$ and $x^{+}$.
Some works also focus on various graph structures~\citep{dufumier2022rethinking,haochen2022theoretical,wang2022chaos}.
Although these studies give interesting insights into contrastive learning, the applicable scope of their analyses is limited to specific objective functions.
Recently, several works~\citep{huang2021towards,zhao2023arcl} consider the setup where raw data in the same latent class are aligned well in the sense that the \textit{augmented distance} is small enough.
Although their theoretical guarantees can apply to multiple contrastive learning frameworks, their assumptions on the function class of encoders are strong, and it needs to be elucidated whether their guarantees can hold in practice.
Therefore, the investigation of the above question from unified viewpoints is ongoing, and more perspectives are required to understand the structure learned by contrastive learning.

\subsection{Contributions}
\label{subsec:our contributions}

In this paper, we aim to theoretically investigate the above question from a unified perspective by introducing a formulation based on a statistical similarity between data.
The main contributions of this paper are summarized below:

\begin{enumerate}
\item Since we aim to elucidate the mechanism of contrastive learning, we need to consider a unified framework that can apply to others, not specific frameworks such as SimCLR~\citep{chen2020simple} and SCL~\citep{haochen2021provable}.
\citet{li2021selfsupervised} pointed out that kernel-based self-supervised learning objectives are related to other contrastive losses, such as the InfoNCE loss~\citep{oord2018representation,chen2020simple}.
Therefore, via a kernel-based contrastive learning framework, other frameworks can be investigated through the lens of kernels.
Motivated by this, we utilize the framework termed \textit{Kernel Contrastive Learning} (KCL) as a tool for achieving the goal.
The loss of KCL, which is called \textit{kernel contrastive loss}, is a contrastive loss that has a simple and general form, where the similarity between two feature vectors is measured by a reproducing kernel~\citep{berlinet2004reproducing,steinwart2008support,aronszajn1950theory} (Section~\ref{subsec:introduction to kernel contrastive loss}).
One of our contributions is employing KCL to study the mechanism of contrastive learning from a new unified theoretical perspective.
\item We introduce a new formulation of similarity between data (Section~\ref{subsec:assumptions and definitions}). Our formulation of similarity begins with the following intuition: if raw or augmented data $x$ and $x'$ belong to the same class, then the similarity measured by some function should be higher than a threshold.
Following this, we introduce a formulation (Assumption~\ref{assumption:mixture of clusters}) based on the  similarity function \eqref{def:similarity function}.
\item We present the theoretical analyses towards elucidating the above question (Section~\ref{sec:theoretical results}).
We first show that KCL can distinguish the clusters of representations according to this formulation (Section~\ref{subsec:explaining what's going on contrastive learning}).
This result shows that our formulation is closely connected to the mechanism of contrastive learning.
Next, we establish a new upper bound for the classification error of the downstream task (Section~\ref{subsec:a surrogate upper bound of NN classification error}), which indicates that our formulation does not contradict the practical efficiency of contrastive learning shown by a line of work~\citep{chen2020simple,chen2020improved,haochen2021provable,dwibedi2021little}.
Notably, our upper bound is valid under more realistic assumptions on the encoder functions, compared to the previous works~\citep{huang2021towards,zhao2023arcl}.
We also establish the generalization error bound for KCL~(Section~\ref{subsec:rethinking generalization of contrastive learning}).
Finally, applying our theoretical results, we show a guarantee for the generalization of KCL to the downstream classification task via a surrogate bound (Section~\ref{subsec:main result section 5.4}).
\end{enumerate}

\subsection{Related Work}
\label{subsec:related work}

Contrastive learning methods have been investigated from the empirical side~ \citep{caron2020unsupservised,chen2020simple,chen2020improved,chen2021intriguing}.
\citet{chen2020simple} propose a method called SimCLR, which utilizes a variant of InfoNCE~\citep{oord2018representation,chen2020simple}.
Several works have recently improved contrastive methods from various viewpoints~\citep{robinson2020contrastive,dwibedi2021little,robinson2021shortcuts,caron2020unsupservised}.
Contrastive learning is often utilized in several fundamental tasks, such as clustering~\citep{vangansbeke2020scan} and domain adaptation~\citep{singh2021clda}, and applied to some domains such as vision~\citep{chen2020simple}, natural language processing~\citep{gao2021simcse}, and speech~\citep{jiang2021speech}.
Besides the contrastive methods, several works~\citep{grill2020bootstrap,chen2021exploring} also study non-contrastive methods.
Investigation toward the theoretical understanding of contrastive learning is also a growing focus.
For instance, the generalization ability of contrastive learning to the downstream classification task has been investigated from many kinds of settings~\citep{pmlr-v97-saunshi19a,tosh2021contrastive,haochen2021provable,wang2022chaos,bao2022on,saunshi2022understanding,huang2021towards,haochen2022theoretical,zhao2023arcl}.
Several works investigate contrastive learning from various theoretical and empirical viewpoints to elucidate its mechanism, such as the geometric properties of contrastive losses~\citep{wang2020understanding,huang2021towards},
formulation of similarity between data~\citep{pmlr-v97-saunshi19a,haochen2021provable,kugelgen2021self,wang2022chaos,huang2021towards,dufumier2022rethinking,zhao2023arcl},
inductive bias~\citep{saunshi2022understanding,haochen2022theoretical},
transferablity~\citep{haochen2022beyond,shen2022connect,zhao2023arcl},
feature suppression~\citep{chen2021intriguing,robinson2021shortcuts}, negative sampling methods~\citep{Chuan2020debiased,robinson2020contrastive}, and optimization viewpoints~\citep{pmlr-v139-wen21c,tian2022understanding}.

Several works~\citep{li2021selfsupervised,zhang2022fmutual,tsai2022conditional,johnson2022contrastive,dufumier2022rethinking,kiani2022joint} study the connection between contrastive learning and the theory of kernels.
\citet{li2021selfsupervised} investigate some contrastive losses, such as InfoNCE, from a kernel perspective.
\citet{zhang2022fmutual} show a relation between the kernel method and $f$-mutual information and apply their theory to contrastive learning.
\citet{tsai2022conditional} tackle the conditional sampling problem using kernels as similarity measurements.
\citet{dufumier2022rethinking} consider incorporating prior information in contrastive learning by using the theory of kernel functions.
\citet{kiani2022joint} connect several self-supervised learning algorithms to kernel methods through optimization problem viewpoints.
Note that different from these works, our work employs kernel functions to investigate a new unified perspective of contrastive learning via the statistical similarity.

Many previous works investigate various interpretations of self-supervised representation learning objectives. For instance, the InfoMax principle~\citep{poole2019variational,Tschannen2020On}, spectral clustering~\citep{haochen2021provable} (see \citet{ng2002spectral} for spectral clustering), and Hilbert-Schmidt Independence Criterion (HSIC)~\citep{li2021selfsupervised} (see~\citet{gretton2005measuring} for HSIC).
However, the investigation of contrastive learning from unified perspectives is worth addressing to elucidate its mechanism, as recent works on self-supervised representation learning tackle it from the various standpoints~\citep{huang2021towards,tian2022understanding,johnson2022contrastive,kiani2022joint,dubois2022improving}.

\section{Preliminaries}
\label{sec:kcl}

\subsection{Problem Setup}
\label{subsec:problem setup}

We give the standard setup of contrastive learning.
Our setup closely follows that of \citet{haochen2021provable}, though we also introduce additional technically necessary settings to maintain the mathematical rigorousness.
Let $\overline{\mathbb{X}}\subset\mathbb{R}^{p}$ be a topological space consisting of raw data, and let $P_{\overline{\mathbb{X}}}$ be a Borel probability measure on $\overline{\mathbb{X}}$.
A line of work on contrastive learning~\citep{chen2020simple,chen2020improved,dwibedi2021little,haochen2021provable} uses data augmentation techniques to obtain similar augmented data points.
Hence, we define a set $\mathcal{T}$ of maps transforming a point $\overline{x}\in\overline{\mathbb{X}}$ into $\mathbb{R}^{p}$, where we assume that $\mathcal{T}$ includes the identity map on $\mathbb{R}^{p}$.
Then, let us define $\mathbb{X}=\bigcup_{t\in\mathcal{T}}\{t(\overline{x})\,:\,\overline{x}\in\overline{\mathbb{X}}\}$.
Every element $t$ in $\mathcal{T}$ can be regarded as a map returning an augmented data $x=t(\overline{x})$ for a raw data point $\overline{x}\in\overline{\mathbb{X}}$.
Note that since the identity map belongs to $\mathcal{T}$, $\overline{\mathbb{X}}$ is a subset of $\mathbb{X}$.
We endow $\mathbb{X}$ with some topology.
Let $\nu_{\mathbb{X}}$ be a $\sigma$-finite and non-negative Borel measure in $\mathbb{X}$.
Following the idea of \citet{haochen2021provable}, we denote $a(x|\overline{x})$ as the conditional probability density function of $x$ given $\overline{x}\sim P_{\overline{\mathbb{X}}}$ and define the weight function $w:\mathbb{X}\times \mathbb{X}\to\mathbb{R}$ as $w(x,x')=\mathbb{E}_{\overline{x}\sim P_{\overline{\mathbb{X}}}}\left[a(x|\overline{x})a(x'|\overline{x})\right].$
From the definition, $w$ is a joint probability density function on $\mathbb{X}\times\mathbb{X}$.
Let us define the marginal $w(\cdot)$ of the weight function to be $w(x)=\int w(x,x')d\nu_{\mathbb{X}}(x')$.
The marginal $w(\cdot)$ is also a probability density function on $\mathbb{X}$, and the corresponding probability measure is denoted by $dP_{\mathbb{X}}(x)=w(x)d\nu_{\mathbb{X}}(x)$.
Denote by $\mathbb{E}_{x,x^{+}}[\cdot], \mathbb{E}_{x,x^{-}}[\cdot]$ respectively, the expectation w.r.t. the probability measure $w(x,x')d\nu_{\mathbb{X}}^{\otimes 2}(x,x'), w(x)w(x')d\nu_{\mathbb{X}}^{\otimes 2}(x,x')$ on $\mathbb{X}\times\mathbb{X}$, where $\nu_{\mathbb{X}}^{\otimes 2}:=\nu_{\mathbb{X}}\otimes \nu_{\mathbb{X}}$ is the product measure on $\mathbb{X}\times\mathbb{X}$.
To rigorously formulate our framework of contrastive learning, we assume that the marginal $w$ is positive on $\mathbb{X}$.
Indeed, a point $x\in\mathbb{X}$ satisfying $w(x)=0$ is not included in the support of $a(\cdot|\overline{x})$ for $P_{\overline{\mathbb{X}}}$-almost surely $\overline{x}\in\overline{\mathbb{X}}$, which means that such a point $x$ merely appears as augmented data.

Let $f_{0}:\mathbb{X}\to\mathbb{R}^{d}$ be an encoder model mapping augmented data to the feature space, and let $\mathcal{F}_{0}$ be a class of functions consisting of such encoders.
In practice, $f_{0}$ is defined by a backbone architecture~(e.g., ResNet~\citep{he2016deep}; see~\citet{chen2020simple}), followed by the additional multi-layer perceptrons called \textit{projection head}~\citep{chen2020simple}.
We assume that $\mathcal{F}_{0}$ is uniformly bounded, i.e., there exists a universal constant $c\in\mathbb{R}$ such that $\sup_{f_{0}\in\mathcal{F}_{0}}\sup_{x\in\mathbb{X}}\|f_{0}(x)\|_{2}\leq c$.
For instance, a function space of bias-free fully connected neural networks on a bounded domain, where every neural network has the continuous activate function at each layer, satisfies this condition.
Since a feature vector output from the encoder model is normalized using the Euclidean norm in many empirical studies~\citep{chen2020simple,dwibedi2021little} and several theoretical studies~\citep{wang2020understanding,wang2022chaos}, we consider the function space of normalized functions $\mathcal{F}=\{f\;|\;\exists f_{0}\in\mathcal{F}_{0},\;f(x)=f_{0}(x)/\|f_{0}(x)\|_{2}\textup{ for }\forall x\in\mathbb{X}\}$.
Here, to guarantee that every $f\in\mathcal{F}$ is well-defined, suppose that $\mathfrak{m}(\mathcal{F}_{0}):=\inf_{f\in\mathcal{F}}\inf_{x\in\mathbb{X}}\|f_{0}(x)\|_{2}>0$ holds.

Finally, we introduce several notations used throughout this paper.
Let $\mathbb{M}\subset\mathbb{X}$ be a measurable set, then we write
\begin{align*}
\mathbb{E}[f(x)|\mathbb{M}]:=\int_{\mathbb{X}}f(x)P_{\mathbb{X}}(dx|\mathbb{M})=P_{\mathbb{X}}(\mathbb{M})^{-1}\int_{\mathbb{M}}f(x)w(x)d\nu_{\mathbb{X}}(x).
\end{align*}
We also use the notation $\mathbb{E}[f(x);\mathbb{M}]:=\int_{\mathbb{M}}f(x)w(x)d\nu_{\mathbb{X}}(x)$.
Denote by $\mathbbm{1}_{\mathbb{M}}(\cdot)$, the indicator function of a set $\mathbb{M}$.
We also use $[n]:=\{1,\cdots,n\}$ for $n\in\mathbb{N}$.

\subsection{Reproducing Kernels}
\label{subsec:reproducing kernel}

We provide several notations of reproducing kenrels~\citep{berlinet2004reproducing,steinwart2008support,aronszajn1950theory}.
Let $k:\mathbb{S}^{d-1}\times\mathbb{S}^{d-1}\to\mathbb{R}$ be a real-valued, continuous, symmetric, and positive-definite kernel, where $\mathbb{S}^{d-1}$ denotes the unit hypersphere centered at the origin $\bm{0}\in\mathbb{R}^{d}$, and the positive-definiteness means that for every $\{z_{i}\}_{i=1}^{n}\subset\mathbb{S}^{d-1}$ and $\{c_{i}\}_{i=1}^{n}\subset\mathbb{R}$, $\sum_{i,j=1}^{n}c_{i}c_{j}k(z_{i},z_{j})\geq 0$ holds~\citep{berlinet2004reproducing}.
Let $\mathcal{H}_{k}$ be the Reproducing Kernel Hilbert Space (RKHS) with kernel $k$~\citep{aronszajn1950theory}, which satisfies $\phi(z)=\langle \phi,k(\cdot,z)\rangle_{\mathcal{H}_{k}}$ for all $\phi\in\mathcal{H}_{k}$ and $z\in\mathbb{S}^{d-1}$.
Denote $h(z)=k(\cdot, z)$ for $z\in\mathbb{S}^{d-1}$, where such a map is often called feature map \citep{steinwart2008support}.
Here, we impose the following condition.
\begin{assumption}
\label{assumption:kernel}
For the kernel function $k$, there exists some $\rho$-Lipschitz function $\psi:[-1,1]\to\mathbb{R}$ such that for every $z,z'\in\mathbb{S}^{d-1}$, $k(z,z')=\psi(z^{\top}z')$ holds.
\end{assumption}
Several popular kernels in machine learning such as the linear kernel, quadratic kernel, and Gaussian kernel, satisfy Assumption~\ref{assumption:kernel}.
Note that the Lipschitz condition in Assumption~\ref{assumption:kernel} is useful to derive the generalization error bound for the kernel contrastive loss (see Section~\ref{subsec:rethinking generalization of contrastive learning}), and sometimes it can be removed when analyzing for a specific kernel. We use this assumption to present general results.
Here, we also use the following notion in this paper:

\begin{proposition}
\label{prop:kernel mean embedding}
Let $\mathbb{M}\subset \mathbb{X}$ be a measurable set and $f\in\mathcal{F}$. 
Define $\mu_{\mathbb{M}}(f):=\mathbb{E}_{P_{\mathbb{X}}}[h(f(x))|\mathbb{M}]$.
Then, $\mu_{\mathbb{M}}(f)\in\mathcal{H}_{k}$.
\end{proposition}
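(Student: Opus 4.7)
The plan is to establish Proposition~\ref{prop:kernel mean embedding} by verifying that the $\mathcal{H}_{k}$-valued map $x\mapsto h(f(x))$ is Bochner integrable with respect to the conditional probability measure $P_{\mathbb{X}}(\cdot|\mathbb{M})$; once Bochner integrability is in hand, the integral automatically lives in $\mathcal{H}_{k}$. Throughout I may assume $P_{\mathbb{X}}(\mathbb{M})>0$, because otherwise $\mu_{\mathbb{M}}(f)$ is defined as the zero element of $\mathcal{H}_{k}$ by convention and the claim is trivial.

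First, I would check strong measurability of $x\mapsto h(f(x))$ as a map from $(\mathbb{X},\mathcal{B}(\mathbb{X}))$ to $\mathcal{H}_{k}$. Because $f$ is measurable into $\mathbb{S}^{d-1}$ (every $f\in\mathcal{F}$ is obtained by continuous normalization from $f_{0}$), it suffices to prove that $h:\mathbb{S}^{d-1}\to\mathcal{H}_{k}$ is continuous. By the reproducing property and Assumption~\ref{assumption:kernel},
\begin{align*}
\|h(z)-h(z')\|_{\mathcal{H}_{k}}^{2}
= k(z,z)-2k(z,z')+k(z',z')
= 2\psi(1)-2\psi(z^{\top}z'),
\end{align*}
which tends to $0$ as $z'\to z$ by the Lipschitz (hence continuous) property of $\psi$. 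Since $\mathbb{S}^{d-1}$ is separable, continuity gives strong measurability of $h$, and composition with the measurable map $f$ gives strong measurability of $x\mapsto h(f(x))$.

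Second, I would verify the Bochner integrability condition $\mathbb{E}_{P_{\mathbb{X}}}[\|h(f(x))\|_{\mathcal{H}_{k}}\mid\mathbb{M}]<\infty$. By the reproducing property and Assumption~\ref{assumption:kernel},
\begin{align*}
\|h(f(x))\|_{\mathcal{H}_{k}}^{2}=k(f(x),f(x))=\psi(f(x)^{\top}f(x))=\psi(1),
\end{align*}
because $f(x)\in\mathbb{S}^{d-1}$. Hence $\|h(f(x))\|_{\mathcal{H}_{k}}=\sqrt{\psi(1)}$ is a finite constant (note $\psi(1)\geq 0$ from positive-definiteness of $k$), so integrability is immediate.

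With both conditions verified, Bochner's theorem yields that the integral $\mu_{\mathbb{M}}(f)=P_{\mathbb{X}}(\mathbb{M})^{-1}\int_{\mathbb{M}}h(f(x))w(x)\,d\nu_{\mathbb{X}}(x)$ exists as an element of $\mathcal{H}_{k}$, completing the proof. The only real subtlety is the abstract strong-measurability step; everything else is bookkeeping that exploits the sphere-normalization of $f$ to make the RKHS norm of the feature map a constant. I do not anticipate a significant obstacle beyond this, since the paper's standing setting already presupposes that $f$ is measurable enough for expectations such as $\mathbb{E}_{x,x^{+}}[\cdot]$ to be defined.
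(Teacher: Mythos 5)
Your proof is correct but takes a genuinely different route from the paper's. The paper (following Lemma~3.1 of Muandet et al.) argues via the Riesz representation theorem: it shows the functional $\phi\mapsto\mathbb{E}[\phi(f(x))\mid\mathbb{M}]$ is bounded on $\mathcal{H}_k$ using Cauchy--Schwarz and $\sup_{z,z'}k(z,z')<\infty$, obtains a representer $\xi\in\mathcal{H}_k$, and then identifies $\xi$ with $\mathbb{E}[k(\cdot,f(x))\mid\mathbb{M}]$ by plugging in $\phi=k(\cdot,z)$. You instead verify Bochner integrability of $x\mapsto h(f(x))$ directly: strong measurability via continuity of $h$ on the separable sphere $\mathbb{S}^{d-1}$ (using the Lipschitz part of Assumption~\ref{assumption:kernel}), plus the scalar integrability bound $\|h(f(x))\|_{\mathcal{H}_k}=\sqrt{\psi(1)}$. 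Your route is slightly more work (you have to check strong measurability explicitly, which the Riesz argument sidesteps) but it is also more honest about the fact that $\mu_{\mathbb{M}}(f)$ is \emph{defined} as a Bochner integral: the paper's argument, strictly speaking, produces a Pettis-type representer $\xi$ and then tacitly identifies it with the Bochner integral in the last line, whereas you prove the Bochner integral exists outright. Your observation that the norm is a constant (not merely bounded), exploiting the sphere normalization, is a small sharpening of the estimate the paper uses. Both proofs are sound and give the same conclusion.
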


The quantity $\mu_{\mathbb{M}}(f)$ with a measurable subset $\mathbb{M}$ can be regarded as a variant of the kernel mean embedding~\citep{muandet2017kernel}.
The proof of Proposition~\ref{prop:kernel mean embedding} is a slight modification of ~\citet{muandet2017kernel}; see Appendix~\ref{appsubsec:proof of proposition kernel mean embedding}.

\section{Kernel Contrastive Learning}
\label{subsec:introduction to kernel contrastive loss}

In this section, we introduce a contrastive learning framework to analyze the mechanism of contrastive learning.
In representation learning, the InfoNCE loss~\citep{oord2018representation,chen2020simple} are widely used in application domains such as vision~\citep{chen2020simple,chen2021intriguing,dwibedi2021little}.
Following previous works~\citep{oord2018representation,chen2020simple,bao2022on}, we define the InfoNCE loss as,
\begin{align*}
    L_{\textup{NCE}}(f;\tau)=-\mathbb{E}_{\substack{x,x^{+}\\\{x_{i}^{-}\}{\sim}P_{\mathbb{X}}}}\left[\log\frac{e^{f(x)^{\top}f(x^{+})/\tau}}{e^{f(x)^{\top}f(x^{+})/\tau}+\sum_{i=1}^{M}e^{f(x)^{\top}f(x_{i}^{-})/\tau}}\right],
\end{align*}
where $\{x_{i}^{-}\}$ are i.i.d. random variables, $\tau>0$, and $M$ is the number of negative samples.
\citet{wang2020understanding} introduce the asymptotic of the InfoNCE loss:
\begin{align*}
    L_{\infty\textup{-NCE}}(f;\tau)=-\mathbb{E}_{x,x^{+}}\left[\frac{f(x)^{\top}f(x^{+})}{\tau}\right]+\mathbb{E}_{x}\left[\log\mathbb{E}_{x'}\left[e^{\frac{f(x)^{\top}f(x')}{\tau}}\right]\right].
\end{align*}
According to the theoretical analysis of~\citet{wang2020understanding}, they show that the first term represents the \textit{alignment}, i.e., the averaged closeness of feature vectors of the pair $(x,x^{+})$, while the second one indicates the \textit{uniformity}, i.e., how far apart the feature vectors of negative samples $x,x'$ are.
Besides, \citet{chen2021intriguing} report the efficiency of the generalized contrastive losses, which have the additional weight hyperparameter.
Meanwhile, since we aim to study the mechanism of contrastive learning, a simple and general form of contrastive losses related to other frameworks is required.
Here, \citet{li2021selfsupervised} find the connection between self-supervised learning and kernels by showing that some HSIC criterion is proportional to the objective function $\mathbb{E}_{x,x^{+}}[k(f(x),f(x^{+}))]-\mathbb{E}_{x,x^{-}}[k(f(x),f(x^{-}))]$ (for more detail, see Appendix~\ref{appsubsubsec:relations to ssl-hsic}).
Motivated by this connection, we consider a contrastive learning objective where a kernel function measures the similarity of the feature vectors of augmented data points.
More precisely, for the kernel function $k$ introduced in Section~\ref{subsec:reproducing kernel}, we define the kernel contrastive loss as,
\begin{align*}
    L_{\textup{KCL}}(f;\lambda)=-\mathbb{E}_{x,x^{+}}\left[k(f(x),f(x^{+}))\right]+\lambda \mathbb{E}_{x,x^{-}}\left[k(f(x),f(x^{-}))\right],
\end{align*}
where the weight hyperparameter $\lambda$ is inspired by \citet{chen2021intriguing}.
Here, the kernel contrastive loss $L_{\textup{KCL}}$ is minimized during the pretraining stage of contrastive learning.
Throughout this paper, the contrastive learning framework with the kernel contrastive loss is called \textit{Kernel Contrastive Learning} (KCL).

Next, we show the connections to other contrastive learning objectives.
First, for the InfoNCE loss, we consider the linear kernel contrastive loss $L_{\textup{LinKCL}}(f;\lambda)$ defined by selecting $k(z,z')=z^{\top}z'$.
Note that $L_{\textup{LinKCL}}$ and its empirical loss are also discussed in several works~\citep{wang2021understanding,huang2021towards}.
For $L_{\textup{LinKCL}}(f;1)$, we have,
\begin{align}
\label{eq:lin vs nce}
    \tau^{-1}L_{\textup{LinKCL}}(f;1)\leq L_{\textup{NCE}}(f;\tau)+\log M^{-1}.
\end{align}
In Appendix~\ref{appsubsec:supplementary information of section 3.2}, we show a generalized inequality of \eqref{eq:lin vs nce} for the generalized loss~\citep{chen2021intriguing}.
Note that similar relations hold when $L_{\textup{NCE}}$ is replaced with the asymptotic loss~\citep{wang2020understanding} or decoupled contrastive learning loss~\citep{yeh2021decoupled}; see Appendix~\ref{appsubsec:supplementary information of section 3.2}.
Therefore, it is possible to analyze the InfoNCE loss and its variants via $L_{\textup{LinKCL}}(f;\lambda)$.

The kernel contrastive loss is also related to other contrastive learning objectives.
For instance, the quadratic kernel contrastive loss with the quadratic kernel $k(z,z')=(z^{\top}z')^{2}$ becomes a lower bound of the spectral contrastive loss \citep{haochen2021provable} up to an additive constant (see Appendix~\ref{appsubsec:supplementary information of section 3.2}).
Thus, theoretical analyses of the kernel contrastive loss can apply to other contrastive learning objectives.

Note that we empirically demonstrate that the KCL frameworks with the Gaussian kernel and quadratic kernel work, although simple; see Appendix~\ref{appsubsec:experimental setup} for the experimental setup and Appendix~\ref{appsubsec:results of linear evaluation} and \ref{appsubsec:ablation study} for the results in the supplementary material. The experimental results also motivate us to use KCL as a theoretical tool for studying contrastive learning.

\section{A Formulation Based on Statistical Similarity}
\label{subsec:assumptions and definitions}

\subsection{Key Ingredient: Similarity Function}
\label{subsubsec:key ingredient}

To study the mechanism of contrastive learning, we introduce a notion of similarity between two augmented data points, which is a key component in our analysis.
Let us define,
\begin{align}
\label{def:similarity function}
\text{sim}(x,x';\lambda):=\frac{w(x,x')}{w(x)w(x')}-\lambda,
\end{align}
where $\lambda\geq 0$ is the weight parameter of $L_{\textup{KCL}}$, and $w(x,x')$ and $w(x)$ have been introduced in Section~\ref{subsec:problem setup}.
Note that \eqref{def:similarity function} is well-defined since $w(x)>0$ holds for every $x\in\mathbb{X}$.
The quantity $\text{sim}(x,x';\lambda)$ represents how much statistical dependency $x$ and $x'$ have.
The density ratio $w(x,x')/(w(x)w(x'))$ can be regarded as an instance of \textit{point-wise dependency} introduced by \citet{TsaiNeural2020}.
The hyperparameter $\lambda$ controls the degree of relevance between two augmented data $x,x'$ via their (in-)dependency.
For instance, with the fixed $\lambda=1$, $\text{sim}(x,x';1)$ is positive if $w(x,x')>w(x)w(x')$, i.e., $x$ and $x'$ are correlated.

Here, we note that several theoretical works on representation learning~\citep{tosh2021contrastive,johnson2022contrastive,wang2022spectral} use the density ratio to study the optimal representations of several contrastive learning objectives.
\citet{tosh2021contrastive} focus on the fact that the minimizer of a logistic loss can be written in terms of the density ratio and utilize it to study \textit{landmark embedding}~\citep{tosh2021jmlrcontrastive}.
\citet{johnson2022contrastive} connect the density ratio to the minimizers of several contrastive learning objectives and investigate the quality of representations related to the minimizers.
In addition, \citet{wang2022spectral} study the minimizer of the spectral contrastive loss~\citep{haochen2021provable}.
Meanwhile, we emphasize that the purpose of using the density ratio in~\eqref{def:similarity function} is not to study the optimal representation of KCL but to give a formulation based on the statistical similarity between augmented data.

\begin{remark}
We can show that the kernel contrastive loss can be regarded as a relaxation of the population-level Normalized Cut problem~\citep{terada2019kernel}, where the integral kernel is defined with \eqref{def:similarity function}. 
Thus, \eqref{def:similarity function} defines the similarity structure utilized by KCL.
Detailed arguments and comparison to related works \citep{haochen2021provable,tian2022understanding} can be found in Appendix~\ref{appsec:connections between kcl and normalized cut}.
\end{remark}

\subsection{Formulation and Example}
\label{subsubsec:assumptions and examples}
We introduce the following formulation based on our problem setting.

\begin{assumption}
\label{assumption:mixture of clusters}
There exist some $\delta\in\mathbb{R}$, number of clusters $K\in\mathbb{N}$, measurable subsets $\mathbb{M}_{1},\cdots,\mathbb{M}_{K}\subset\mathbb{X}$, and a deterministic labeling function $y:\mathbb{X}\to[K]$ such that the following conditions hold:
\begin{enumerate}
    \item[\textbf{\textup{(A)}}] $\bigcup_{i=1}^{K}\mathbb{M}_{i}=\mathbb{X}$ holds.
    \item[\textbf{\textup{(B)}}] For every $i\in[K]$, any points $x,x'\in\mathbb{M}_{i}$ satisfy $\textup{sim}(x,x';\lambda)\geq \delta$.
    \item[\textbf{\textup{(C)}}] For every $x\in\mathbb{X}$ and the set of indices $J_{x}=\{j\in[K]\;|\;x\in\mathbb{M}_{j}\}$, $y(x)\in J_{x}$ holds. Moreover, each set $\{x\in\mathbb{X}\;|\;y(x)=i\}$ is measurable.
\end{enumerate}
\end{assumption}

Assumption~\ref{assumption:mixture of clusters} does not require that $\mathbb{M}_{1},\cdots,\mathbb{M}_{K}$ are disjoint, which is a realistic setting, as \citet{wang2022chaos} show that clusters of augmented data can have inter-cluster connections depending on the strength of data augmentation.
The conditions \textbf{(A)} and \textbf{\textup{(B)}} in Assumption~\ref{assumption:mixture of clusters} guarantee that each subset $\mathbb{M}_{i}$ consists of augmented data that have high similarity.
The condition \textbf{\textup{(C)}} enables to incorporate label information in our analysis.
Note that several works on contrastive learning~\citep{haochen2021provable,saunshi2022understanding,haochen2022theoretical} also employ deterministic labeling functions.

These conditions are useful to analyze the theory of contrastive learning.
To gain more intuition, we provide a simple example that satisfies Assumption~\ref{assumption:mixture of clusters}.

\begin{example}[The proof can be found in Appendix~\ref{appsec:proofs in examples}]
\label{example:when the space meets the assumptions}
Suppose that $\mathbb{X}$ consists of disjoint open balls $\mathbb{B}_{1},\cdots,\mathbb{B}_{K}$ of the same radius in $\mathbb{R}^{p}$, and $\overline{\mathbb{X}}=\mathbb{X}$.
Let $a(x|\overline{x})=\textup{vol}(\mathbb{B}_{1})^{-1}\sum_{i=1}^{K}\mathbbm{1}_{\mathbb{B}_{i}\times\mathbb{B}_{i}}(x,\overline{x})$, and let $p_{\overline{\mathbb{X}}}(\overline{x})=(K\textup{vol}(\mathbb{B}_{1}))^{-1}$ be the probability density function of $P_{\overline{\mathbb{X}}}$. Then, $w(x)>0$ and $\textup{sim}(x,x';\lambda)=K\mathbbm{1}_{\bigcup_{i\in[K]}\mathbb{B}_{i}\times\mathbb{B}_{i}}(x,x')-\lambda$ hold.
Hence, for instance let $\lambda=1$ and $\delta=K-1$, and take  $\mathbb{M}_{i}:=\mathbb{B}_{i}$.
Also, define $y:\mathbb{X}\to[K]$ as $y(x)=i$ if $x\in\mathbb{B}_{i}$ for some $i\in[K]$.
Then, Assumption~\ref{assumption:mixture of clusters} is satisfied in this setting.
\end{example}

Here, theoretical formulations of similarity have been investigated by several works on contrastive learning~\citep{pmlr-v97-saunshi19a,haochen2021provable,kugelgen2021self,wang2022chaos,huang2021towards,haochen2022theoretical,dufumier2022rethinking,zhao2023arcl,parulekar2023infonce}.
The basic notions introduced by these works are: 
\textit{latent classes} of unlabeled data and conditional independence assumption~\citep{pmlr-v97-saunshi19a}, 
graph structures of augmented data~\citep{haochen2021provable,wang2022chaos,haochen2022theoretical,dufumier2022rethinking},
the decomposition of unlabled data into the 
\textit{content} (i.e., invariant against data augmentation) and \textit{style} (i.e., changeable by data augmentation) variables~\citep{kugelgen2021self,parulekar2023infonce},
and the geometric structure based on \textit{augmented distance} and the concentration within class subsets~\citep{huang2021towards,zhao2023arcl}.
Meanwhile, our formulation in Assumption~\ref{assumption:mixture of clusters} uses the similarity function~\eqref{def:similarity function}, which differs from the previous works.
Note that Assumption~\ref{assumption:mixture of clusters} has some relation to Assumption~3 in~\citet{haochen2022theoretical}; see Appendix~\ref{appsubsec:relation to haochen ma}.
Our formulation gives deeper insights into contrastive learning, as shown in Section~\ref{sec:theoretical results}.

\section{Theoretical Results}
\label{sec:theoretical results}

\subsection{KCL as Representation Learning with Statistical Similarity}
\label{subsec:explaining what's going on contrastive learning}

First, we connect the kernel contrastive loss to the formulation based on the similarity $\textup{sim}(x,x';\lambda)$.
The following theorem indicates that KCL has two effects on the way of representation learning, where the clusters $\mathbb{M}_{1},\cdots,\mathbb{M}_{K}$ involve to explain the mathematical relation.

\begin{theorem}
\label{thm:decomposition}
Suppose that Assumption~\ref{assumption:kernel} and \ref{assumption:mixture of clusters} hold.
Take $\delta\in\mathbb{R}$, $K\in\mathbb{N}$, and $\mathbb{M}_{1},\cdots,\mathbb{M}_{K}$ such that the conditions \textbf{\textup{(A)}} and \textbf{\textup{(B)}} in Assumption~\ref{assumption:mixture of clusters} are satisfied.
Then, the following inequality holds for every $f\in\mathcal{F}$:
\begin{align}
\label{eq:decomposition}
    \frac{\delta}{2}\cdot\mathfrak{a}(f)+\lambda\cdot\mathfrak{c}(f)\leq L_{\textup{KCL}}(f;\lambda)+R(\lambda).
\end{align}
where $R(\lambda)$ is a function of $\lambda$, $\mu_{i}(f)=\mu_{\mathbb{M}_{i}}(f)$, and
\begin{align*}
    \mathfrak{a}(f)&:=\sum_{i=1}^{K}\mathbb{E}_{x,x^{-}}\left[\|h(f(x))-h(f(x^{-}))\|_{\mathcal{H}_{k}}^{2};\mathbb{M}_{i}\times\mathbb{M}_{i}\right],\\
    \mathfrak{c}(f)&:=\sum_{i\neq j}P_{\mathbb{X}}(\mathbb{M}_{i})P_{\mathbb{X}}(\mathbb{M}_{j})\langle \mu_{i}(f),\mu_{j}(f)\rangle_{\mathcal{H}_{k}}.
\end{align*}
\end{theorem}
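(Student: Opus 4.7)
The plan is to express $L_{\textup{KCL}}(f;\lambda)$ directly in terms of the similarity function and the RKHS distance, then localize the resulting integral to the clusters $\mathbb{M}_i$ using condition~\textbf{\textup{(B)}}. First, since every $f(x)$ lies on $\mathbb{S}^{d-1}$, Assumption~\ref{assumption:kernel} gives the constant $k(f(x),f(x)) = \psi(1)$, and the reproducing property together with polarization yields $k(z,z') = \psi(1) - \tfrac{1}{2}\|h(z)-h(z')\|^2_{\mathcal{H}_k}$ for all $z,z'\in\mathbb{S}^{d-1}$. Using $w(x,x') = (\textup{sim}(x,x';\lambda)+\lambda)w(x)w(x')$,
\begin{equation*}
L_{\textup{KCL}}(f;\lambda) = -\int k(f(x),f(x'))\,\textup{sim}(x,x';\lambda)\,w(x)w(x')\,d\nu_{\mathbb{X}}^{\otimes 2}(x,x'),
\end{equation*}
and after substituting the kernel identity and noting $\int \textup{sim}(x,x';\lambda)w(x)w(x')d\nu_{\mathbb{X}}^{\otimes 2} = 1-\lambda$, one arrives at the key identity
\begin{equation*}
L_{\textup{KCL}}(f;\lambda) + \psi(1)(1-\lambda) = \tfrac{1}{2}\int \|h(f(x))-h(f(x'))\|^2_{\mathcal{H}_k}\,\textup{sim}(x,x';\lambda)\,w(x)w(x')\,d\nu_{\mathbb{X}}^{\otimes 2}.
\end{equation*}

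Next I would apply condition~\textbf{\textup{(B)}}: on each $\mathbb{M}_i\times\mathbb{M}_i$, $\textup{sim}(x,x';\lambda)\geq\delta$ and $\|h(f(x))-h(f(x'))\|^2 w(x)w(x')\geq 0$, so the integrand is at least $\delta\|h-h'\|^2 w(x)w(x')$ there. Summing over $i\in[K]$ (treating the $\mathbb{M}_i$'s as disjoint for the moment, which is the essential setting illustrated by Example~\ref{example:when the space meets the assumptions}) and using $\sum_i\mathbbm{1}_{\mathbb{M}_i\times\mathbb{M}_i}=\mathbbm{1}_{\bigcup_i\mathbb{M}_i\times\mathbb{M}_i}$, the key identity splits across $\bigcup_i\mathbb{M}_i\times\mathbb{M}_i$ and its complement $\mathbb{D}=\bigcup_{i\neq j}\mathbb{M}_i\times\mathbb{M}_j$, giving
\begin{equation*}
\tfrac{\delta}{2}\mathfrak{a}(f)\leq L_{\textup{KCL}}(f;\lambda) + \psi(1)(1-\lambda) - \tfrac{1}{2}\int_{\mathbb{D}}\|h-h'\|^2\,\textup{sim}(x,x';\lambda)\,w(x)w(x')\,d\nu_{\mathbb{X}}^{\otimes 2}.
\end{equation*}

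To surface $\lambda\mathfrak{c}(f)$, I would expand the $\mathbb{D}$-integral via $\textup{sim}\cdot w(x)w(x') = w(x,x')-\lambda w(x)w(x')$. Using $k=\psi(1)-\tfrac{1}{2}\|h-h'\|^2$ and the definition of $\mathfrak{c}(f)$, the negative-pair piece evaluates to $\tfrac{\lambda}{2}\int_{\mathbb{D}}\|h-h'\|^2 w(x)w(x')d\nu_{\mathbb{X}}^{\otimes 2} = \lambda\psi(1)\sum_{i\neq j}P_{\mathbb{X}}(\mathbb{M}_i)P_{\mathbb{X}}(\mathbb{M}_j) - \lambda\mathfrak{c}(f)$, which is exactly the term needed to move $\lambda\mathfrak{c}(f)$ to the left-hand side. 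Dropping the non-negative positive-pair remainder $\tfrac{1}{2}\int_{\mathbb{D}}\|h-h'\|^2 w(x,x')d\nu_{\mathbb{X}}^{\otimes 2}\geq 0$ then yields \eqref{eq:decomposition} with $R(\lambda) = \psi(1)(1-\lambda) + \lambda\psi(1)\sum_{i\neq j}P_{\mathbb{X}}(\mathbb{M}_i)P_{\mathbb{X}}(\mathbb{M}_j)$. The main obstacle is that the theorem does not require $\mathbb{M}_1,\ldots,\mathbb{M}_K$ to be disjoint, so $\sum_i\mathbbm{1}_{\mathbb{M}_i\times\mathbb{M}_i}$ can exceed $1$ and the identification $\sum_i\int_{\mathbb{M}_i\times\mathbb{M}_i}=\int_{\bigcup_i\mathbb{M}_i\times\mathbb{M}_i}$ used above fails; I expect the paper to resolve this either by leveraging the deterministic labeling $y$ from condition~\textbf{\textup{(C)}} to carve out disjoint refinements $\{y=i\}\subseteq\mathbb{M}_i$ and control the resulting cross terms so that the inner products $\langle\mu_i(f),\mu_j(f)\rangle_{\mathcal{H}_k}$ still aggregate cleanly into $\mathfrak{c}(f)$, or by absorbing the overlap multiplicity (bounded by $K$) into a modified $R(\lambda)$.
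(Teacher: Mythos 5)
Your key identity $L_{\textup{KCL}}(f;\lambda)+(1-\lambda)\psi(1)=\tfrac{1}{2}\int\|h(f(x))-h(f(x'))\|^2_{\mathcal{H}_k}\,\textup{sim}(x,x';\lambda)\,w(x)w(x')\,d\nu_\mathbb{X}^{\otimes 2}$ is correct (it equals $\tfrac{1}{2}\widetilde L_{\textup{KCL}}(f;\lambda)$ in the paper's notation), and in the pairwise-disjoint case your argument is sound and recovers the paper's $R(\lambda)$ exactly: the overlap term $\tfrac{M_k}{2}\sum_{i\neq j}P_+((\mathbb{M}_i\cap\mathbb{M}_j)^2)$ vanishes, and $\sum_{i\neq j}P_\mathbb{X}(\mathbb{M}_i)P_\mathbb{X}(\mathbb{M}_j)=\sum_iP_\mathbb{X}(\mathbb{M}_i)(1-P_\mathbb{X}(\mathbb{M}_i))$ when the $\mathbb{M}_i$ partition $\mathbb{X}$.

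The gap you flag is the real one, but of your two guessed repairs the first is a dead end: the theorem invokes only conditions \textbf{(A)} and \textbf{(B)}, and condition \textbf{(C)} plays no role in its proof. The precise obstruction to your unified-integrand route is that $\|h-h'\|^2\,\textup{sim}\,ww'$ is not sign-definite. After inserting $\textup{sim}\geq\delta$ you are left with $\delta\|h-h'\|^2ww'$, whose sign depends on $\delta$, and the $\mathbb{D}$-piece likewise contains $k(f(x),f(x'))ww'$ of indefinite sign; Bonferroni-type inclusion--exclusion (needed to pass from $\int_{\bigcup_i\mathbb{M}_i^2}$ to $\sum_i\int_{\mathbb{M}_i^2}$ with a controllable remainder) requires a non-negative integrand. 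The paper avoids this by never forming the combined integrand: it writes $\widetilde L_{\textup{KCL}}=\mathbb{E}_{x,x^+}\|h-h'\|^2_{\mathcal{H}_k}-\lambda\,\mathbb{E}_{x,x^-}\|h-h'\|^2_{\mathcal{H}_k}$ and treats the two non-negative pieces separately. For the positive-pair piece it restricts to $\bigcup_i\mathbb{M}_i^2$, applies inclusion--exclusion there (valid since $\|h-h'\|^2 w(x,x')\geq 0$), bounds the overlap remainder by $M_k\sum_{i\neq j}P_+((\mathbb{M}_i\cap\mathbb{M}_j)^2)$ with $M_k=\sup_{z,z'}\|k(\cdot,z)-k(\cdot,z')\|^2_{\mathcal{H}_k}$, and only then uses \textbf{(B)} in the form $w(x,x')\geq(\lambda+\delta)w(x)w(x')$ on each $\mathbb{M}_i^2$; for the negative-pair piece a plain union bound suffices since $\|h-h'\|^2ww'\geq 0$. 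If you reorganize your computation along these lines the argument closes in full generality, and that is exactly where the $M_k$ term in $R(\lambda)$ originates.
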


\begin{figure}[t]
    \centerline{\includegraphics[scale=0.4]{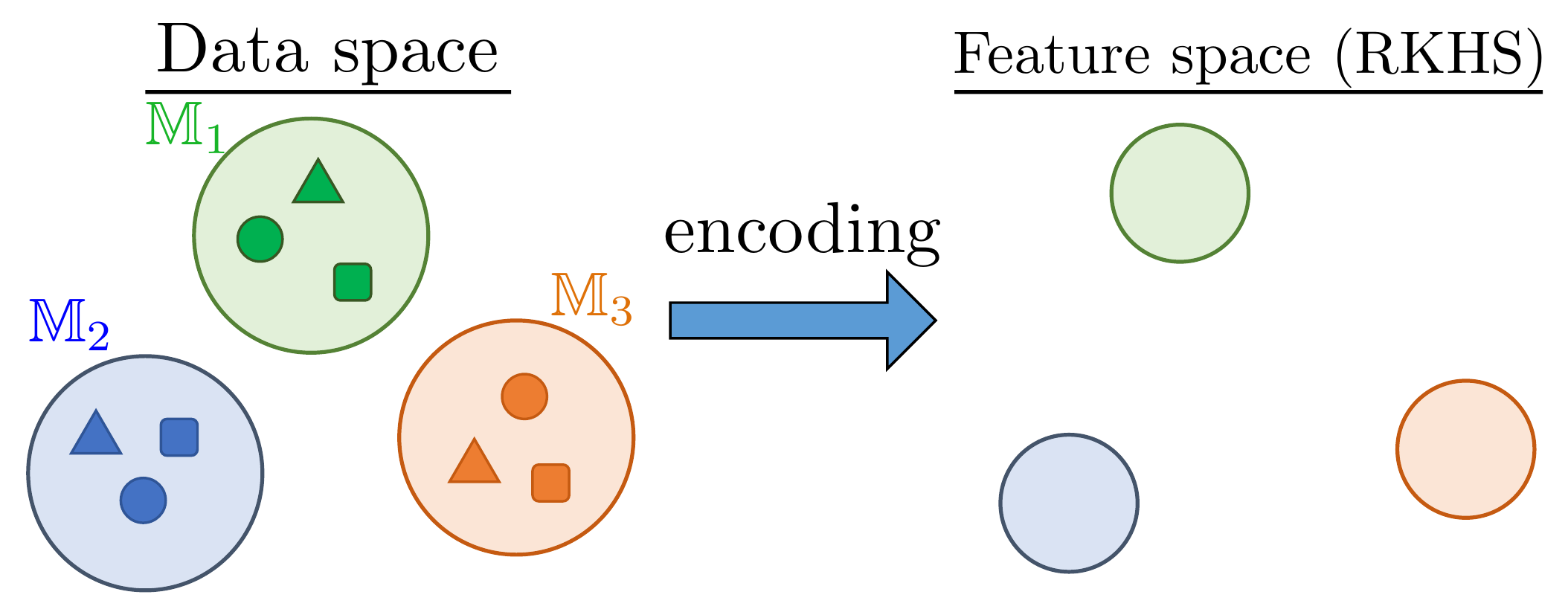}}
    \caption{An illustration of Theorem~\ref{thm:decomposition}. The clusters $\mathbb{M}_{1},\cdots,\mathbb{M}_{K}$ in the data space are mapped into the RKHS, where each cluster in the RKHS shrinks or expands (via $\mathfrak{a}(f)$) while maintaining the distance to other clusters (via $\mathfrak{c}(f)$).
    }
    \label{fig:kcl_paper_illustration}
\end{figure}
For the proof of Theorem~\ref{thm:decomposition}, see Appendix~\ref{appsubsec:proof of theorem 4.1}.
Note that the key point of the proof is the following usage of Assumption~\ref{assumption:mixture of clusters}: for any $x,x'\in\mathbb{M}_{i}$, the inequality $\textup{sim}(x,x';\lambda)\geq\delta$ implies the relation $w(x,x')\geq (\lambda+\delta)w(x)w(x')$.
Here we briefly explain each symbol in Theorem~\ref{thm:decomposition}.
The value $\mathfrak{a}(f)$ quantifies the concentration within each cluster consisting of the representations of augmented data in $\mathbb{M}_{i}$.
The quantity $\mathfrak{c}(f)$ measures how far the subsets $h(f(\mathbb{M}_{1})),\cdots,h(f(\mathbb{M}_{K}))$ are, since $\langle \mu_{i}(f),\mu_{j}(f)\rangle_{\mathcal{H}_{k}}=\int_{\mathbb{M}_{i}}\int_{\mathbb{M}_{j}}k(f(x),f(x'))P_{\mathbb{X}}(dx|\mathbb{M}_{i})P_{\mathbb{X}}(dx'|\mathbb{M}_{j})$ holds (see Lemma~\ref{lem:inner product} in Appendix~\ref{appsubsec:useful lemmas for theorem 4.1}).
These quantities indicate that representation learning by KCL can distinguish the subsets $\mathbb{M}_{1},\cdots,\mathbb{M}_{K}$ in the RKHS (see Figure~\ref{fig:kcl_paper_illustration} for illustration).
The function $R(\lambda)$ includes the term that represents the hardness of the pretraining task in the space of augmented data $\mathbb{X}$: if the overlaps between two different subsets $\mathbb{M}_{i}$ and $\mathbb{M}_{j}$ expand, then $R(\lambda)$ increases (the precise definition is given in Appendix~\ref{appsubsec:proof of theorem 4.1}). 

A key point of Theorem~\ref{thm:decomposition} is that $\delta$ and $\lambda$ can determine how learned representations distribute in the RKHS.
If $\delta>0$, then representations of augmented data in each $\mathbb{M}_{i}$ tend to align as controlling the trade-off between $\mathfrak{a}(f)$ and $\mathfrak{c}(f)$.
For $\delta\leq 0$, not just the means $\mu_{1}(f),\cdots,\mu_{K}(f)$ but also the representations tend to scatter.
We remark that $\delta$ depends on the fixed weight $\lambda$ due to the condition \textbf{(B)} in Assumption~\ref{assumption:mixture of clusters}.
Intuitively, larger $\lambda$ makes $\delta$ smaller and vice versa.

Here we should remark that under several assumptions, the equality holds in \eqref{eq:decomposition}, as shown below:
\begin{corollary}
\label{cor:equality holds}
Suppose that Assumption~\ref{assumption:kernel} and \ref{assumption:mixture of clusters} hold.
Take $\delta\in\mathbb{R}$, $K\in\mathbb{N}$, and $\mathbb{M}_{1},\cdots,\mathbb{M}_{K}$ such that the conditions \textbf{\textup{(A)}} and \textbf{\textup{(B)}} in Assumption~\ref{assumption:mixture of clusters} are satisfied.
Suppose that $\mathbb{M}_{1},\cdots,\mathbb{M}_{K}$ are disjoint, and for every pair $(i,j)\in[K]\times[K]$ such that $i\neq j$, every $(x,x')\in\mathbb{M}_{i}\times\mathbb{M}_{j}$ satisfies $w(x,x')=0$.
Suppose that for every $i\in[K]$, it holds that $\textup{sim}(x,x';\lambda)=\delta$ for any $x,x'\in\mathbb{M}_{i}$.
Then, for every $f\in\mathcal{F}$, the equality holds in \eqref{eq:decomposition}, i.e.,
\begin{align*}
    L_{\textup{KCL}}(f;\lambda)=\frac{\delta}{2}\cdot\mathfrak{a}(f)+\lambda\cdot\mathfrak{c}(f)-R(\lambda).
\end{align*}
\end{corollary}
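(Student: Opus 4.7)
The plan is to revisit the proof of Theorem~\ref{thm:decomposition} and verify that each inequality used there collapses to an equality under the three additional hypotheses of the corollary (disjointness of the $\mathbb{M}_{i}$, $w(x,x')=0$ on off-diagonal blocks, and $\textup{sim}(x,x';\lambda)=\delta$ on diagonal blocks). The inequality~\eqref{eq:decomposition} is driven by the single algebraic step in which condition~\textbf{(B)} of Assumption~\ref{assumption:mixture of clusters} is invoked in the form $w(x,x')\geq(\lambda+\delta)w(x)w(x')$ on $\mathbb{M}_{i}\times\mathbb{M}_{i}$. Once this becomes an equality, the rest of the derivation is purely an identity manipulation.

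First, I would exploit disjointness to write the product space as the disjoint union $\mathbb{X}\times\mathbb{X}=\bigsqcup_{i}(\mathbb{M}_{i}\times\mathbb{M}_{i})\sqcup\bigsqcup_{i\neq j}(\mathbb{M}_{i}\times\mathbb{M}_{j})$, so that both expectations in $L_{\textup{KCL}}$ split as sums of block integrals. For the positive-pair term, the hypothesis $w(x,x')=0$ on off-diagonal blocks eliminates every cross term, giving
\begin{align*}
\mathbb{E}_{x,x^{+}}[k(f(x),f(x^{+}))]
=\sum_{i=1}^{K}\int_{\mathbb{M}_{i}\times\mathbb{M}_{i}}k(f(x),f(x'))w(x,x')\,d\nu_{\mathbb{X}}^{\otimes 2}(x,x').
\end{align*}
Substituting the equality $w(x,x')=(\lambda+\delta)w(x)w(x')$ from $\textup{sim}(x,x';\lambda)=\delta$, and then using the reproducing property $k(f(x),f(x'))=\langle h(f(x)),h(f(x'))\rangle_{\mathcal{H}_{k}}$ together with Proposition~\ref{prop:kernel mean embedding}, this block integral becomes $(\lambda+\delta)P_{\mathbb{X}}(\mathbb{M}_{i})^{2}\|\mu_{i}(f)\|_{\mathcal{H}_{k}}^{2}$.

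Next, for the negative-pair term, disjointness alone gives
\begin{align*}
\mathbb{E}_{x,x^{-}}[k(f(x),f(x^{-}))]
=\sum_{i,j=1}^{K}P_{\mathbb{X}}(\mathbb{M}_{i})P_{\mathbb{X}}(\mathbb{M}_{j})\langle\mu_{i}(f),\mu_{j}(f)\rangle_{\mathcal{H}_{k}},
\end{align*}
via Lemma~\ref{lem:inner product}. Splitting this sum into its diagonal and off-diagonal parts identifies the off-diagonal piece with $\mathfrak{c}(f)$, while the diagonal piece is $\sum_{i}P_{\mathbb{X}}(\mathbb{M}_{i})^{2}\|\mu_{i}(f)\|_{\mathcal{H}_{k}}^{2}$. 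Now I would combine the two terms and rewrite $\|\mu_{i}(f)\|_{\mathcal{H}_{k}}^{2}$ using the polarization-type identity
\begin{align*}
2\|\mu_{i}(f)\|_{\mathcal{H}_{k}}^{2}
=2\,\mathbb{E}[k(f(x),f(x))\,|\,\mathbb{M}_{i}]
-\mathbb{E}\bigl[\|h(f(x))-h(f(x^{-}))\|_{\mathcal{H}_{k}}^{2}\,\big|\,\mathbb{M}_{i}\times\mathbb{M}_{i}\bigr],
\end{align*}
which lets the $\mathfrak{a}(f)$ quantity appear with coefficient $\delta/2$. All remaining $\lambda$-dependent constants (involving $k(f(x),f(x))$ averages and the weights $P_{\mathbb{X}}(\mathbb{M}_{i})^{2}$) are precisely what constitutes $R(\lambda)$ in the proof of Theorem~\ref{thm:decomposition}, with no slack added, yielding the claimed equality.

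The main obstacle is bookkeeping rather than a conceptual hurdle: one must verify that the constant absorbed into $R(\lambda)$ in the proof of Theorem~\ref{thm:decomposition} matches term-for-term the constant produced here, so that the same $R(\lambda)$ appears on both sides. Because the proof of Theorem~\ref{thm:decomposition} introduces its single inequality exactly at the place where $\textup{sim}\geq\delta$ is invoked and discards off-diagonal contributions via the upper bound $w(x,x')\leq w(x,x')$ trivially (so they are absorbed into $R(\lambda)$ only when nonzero), the hypotheses of the corollary guarantee that no slack is generated anywhere else. Consequently, the chain of manipulations becomes a chain of equalities and the identity $L_{\textup{KCL}}(f;\lambda)=\frac{\delta}{2}\mathfrak{a}(f)+\lambda\mathfrak{c}(f)-R(\lambda)$ follows.
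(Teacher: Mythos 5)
Your plan and its execution diverge. You open by saying you would retrace the proof of Theorem~\ref{thm:decomposition} and check each inequality is tight, which is exactly what the paper does; but you then assert the theorem's proof is ``driven by the single algebraic step'' where $\textup{sim}\geq\delta$ is used, with the rest ``purely an identity manipulation.'' That is not accurate: the proof of Theorem~\ref{thm:decomposition} uses eight distinct inequality steps (restriction of the positive integral to $\bigcup_i\mathbb{M}_i\times\mathbb{M}_i$, two inclusion--exclusion/union bounds, the $M_k$ bound on overlap, the union bound for the negative term, the covering bound, the $M_k$ bound again, and the dropping of the $\sup k$ term), and the paper's proof of the corollary verifies \emph{each} of them becomes an equality under disjointness, $w\equiv 0$ off the diagonal blocks, and $\textup{sim}\equiv\delta$ on them. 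Collapsing only the $\textup{sim}\geq\delta$ step is not enough.

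What you actually carry out is a genuinely different, direct block-by-block computation. That route can be made to work, but it has an unclosed gap: after substituting $w(x,x')=(\lambda+\delta)w(x)w(x')$, using Lemma~\ref{lem:inner product}, and applying your polarization identity with $k(z,z)=\psi(1)$, you arrive at
\begin{align*}
L_{\textup{KCL}}(f;\lambda)=\frac{\delta}{2}\,\mathfrak{a}(f)+\lambda\,\mathfrak{c}(f)-\delta\,\psi(1)\sum_{i=1}^{K}P_{\mathbb{X}}(\mathbb{M}_{i})^{2},
\end{align*}
while Definition~\ref{def:R(lambda)} under disjointness gives $R(\lambda)=\psi(1)-\lambda\psi(1)\sum_{i}P_{\mathbb{X}}(\mathbb{M}_{i})^{2}$. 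These constants are not equal term-for-term as you suggest; matching them requires the normalization identity $(\lambda+\delta)\sum_{i}P_{\mathbb{X}}(\mathbb{M}_{i})^{2}=1$. That identity does follow from the corollary's hypotheses (integrate $w(x,x')=(\lambda+\delta)w(x)w(x')$ over each $\mathbb{M}_{i}\times\mathbb{M}_{i}$, use $w(x,x')=0$ elsewhere, and use $\int w(x,x')\,d\nu_{\mathbb{X}}^{\otimes2}=1$), but it is not mere ``bookkeeping'' and you never invoke it. The paper's route of retracing the eight inequalities avoids this matching step entirely, because it never re-expands the formula into a new constant; your route buys a cleaner algebraic derivation at the cost of having to notice this implicit constraint explicitly.
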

The proof of Corollary~\ref{cor:equality holds} can be found in Appendix~\ref{appsubsec:proof of corollary equality holds}.
The above corollary means that, under these assumptions, the minimization of the kernel contrastive loss is equivalent to that of the objective function $(\delta/2)\cdot\mathfrak{a}(f)+\lambda\cdot\mathfrak{c}(f)$.
Note that Example~\ref{example:when the space meets the assumptions} satisfies all the assumptions enumerated in the statement.
In summary, Theorem~\ref{thm:decomposition} and Corollary~\ref{cor:equality holds} imply that contrastive learning by the KCL framework is characterized as representation learning with the similarity structure of augmented data space $\mathbb{X}$.

\subsubsection{Comparison to Related Work}
The quantity $\mathfrak{a}(f)$ is closely related to the property called \textit{alignment}~\citep{wang2020understanding} since the representations of similar data are learned to be close in the RKHS.
Also, the quantity $\mathfrak{c}(f)$ has some connection to \textit{divergence property}~\citep{huang2021towards} since it measures how far apart the means $\mu_{i}(f)$ and $\mu_{j}(f)$ are.
Although the relations between these properties and contrastive learning have been pointed out by \citet{wang2020understanding,huang2021towards}, we emphasize that our result gives a new characterization of the learned clusters.
Furthermore, this theorem also implies that the trade-off between $\mathfrak{a}(f)$ and $\mathfrak{c}(f)$ is determined with the threshold $\delta$ and the hyperparameter $\lambda$.
Therefore, Theorem~\ref{thm:decomposition} provides deeper insights into understanding the mechanism of contrastive learning.

\subsection{A New Upper Bound of the Classification Error}
\label{subsec:a surrogate upper bound of NN classification error}

Next, we show how minimization of the kernel contrastive loss guarantees good performance in the downstream classification task, according to our formulation of similarity. To this end, we prove that the properties of contrastive learning shown in Theorem~\ref{thm:decomposition} yield the linearly well-separable representations in the RKHS.
First, we quantify the linear separability as follows: 
following~\citet{haochen2021provable,saunshi2022understanding}, under Assumption~\ref{assumption:mixture of clusters}, for a model $f\in\mathcal{F}$, a linear weight $W:\mathcal{H}_{k}\to\mathbb{R}^{K}$, and a bias $\beta\in\mathbb{R}^{K}$, we define the downstream classification error as,
\begin{align*}
    L_{\textup{Err}}(f, W, \beta;y):=P_{\mathbb{X}}\left(g_{f,W,\beta}(x)\neq y(x)\right),
\end{align*}
where $\textstyle g_{f,W,\beta}(x):=\textup{arg~max}_{i\in[K]}\{\langle W_{i},h(f(x))\rangle_{\mathcal{H}_{k}}+\beta_{i}\}$ for $W_{i}\in\mathcal{H}_{k}$ and $\beta_{i}\in\mathbb{R}$.
Note that we let $\textup{arg~max}$ and $\textup{arg~min}$ break tie arbitrary as well as~\citet{haochen2021provable}.
Note that in our definition, after augmented data $x\in\mathbb{X}$ is encoded to $f(x)\in\mathbb{S}^{d-1}$, $f(x)$ is further mapped to $h(f(x))$ in the RKHS, and then linear classification is performed using $W$ and $\beta$.

To derive a generalization guarantee for KCL, we focus on the 1-Nearest Neighbor (NN) classifier in the RKHS $\mathcal{H}_{k}$ as a technical tool, which is a generalization of the 1-NN classifiers utilized in \citet{robinson2020contrastive,huang2021towards}.

\begin{definition}[1-NN classifier in $\mathcal{H}_{k}$]
\label{def:nearest neighbor classifier}
Suppose $\mathbb{C}_{1},\cdots,\mathbb{C}_{K}$ are subsets of $\mathbb{X}$.
For a model $f:\mathbb{X}\to\mathbb{R}^{d}$, the 1-NN classifier $g_{\textup{1-NN}}:\mathbb{X}\to[K]$ associated with the RKHS $\mathcal{H}_{k}$ is defined as
\begin{align*}
    g_{\textup{1-NN}}(x):=\textup{arg~min}_{i\in[K]}\|h(f(x))-\mu_{\mathbb{C}_{i}}(f)\|_{\mathcal{H}_{k}}.
\end{align*}
\end{definition}

\citet{huang2021towards} show that the 1-NN classifier they consider can be regarded as a mean classifier~\citep{pmlr-v97-saunshi19a,wang2022chaos} (see Appendix~E in~\citet{huang2021towards}).
This fact can also apply to our setup: indeed, under Assumption~\ref{assumption:kernel}, $g_{\textup{1-NN}}$ is equal to
\begin{align*}
g_{f,W_{\mu},\beta_{\mu}}(x)=\textup{arg~max}_{i\in[K]}\left\{\langle W_{\mu,i},h(f(x))\rangle_{\mathcal{H}_{k}}-\beta_{\mu,i}\right\},
\end{align*}
where $W_{\mu}:\mathcal{H}_{k}\to\mathbb{R}^{K}$ is defined as $W_{\mu}(\phi)_{i}:=\langle W_{\mu,i},\phi\rangle_{\mathcal{H}_{k}}=\langle \mu_{\mathbb{C}_{i}}(f),\phi\rangle_{\mathcal{H}_{k}}$ for each coordinate $i\in[K]$, and $\beta_{\mu,i}:=(\|\mu_{\mathbb{C}_{i}}(f)\|_{\mathcal{H}_{k}}^{2}+\psi(1))/2$ for $i\in[K]$.

Before presenting the result, we need the following notion:
\begin{definition}[Meaningful encoder]
\label{def:meaningful encoders}
An encoder $f\in\mathcal{F}$ is said to be meaningful if $\min_{i\neq j}\|\mu_{i}(f)-\mu_{j}(f)\|_{\mathcal{H}_{k}}^{2}>0$ holds.
\end{definition}
Note that a meaningful encoder $f\in\mathcal{F}$ avoids the \textit{complete collapse} of feature vectors~\citep{hua2021feature,jing2022understanding}, where many works on self-supervised representation learning~\citep{chen2020simple,grill2020bootstrap,chen2021exploring,haochen2021provable,li2021selfsupervised} introduce various architectures and algorithms to prevent it.
Now, the theoretical guarantee is presented:

\begin{theorem}
\label{thm:guarantee}
Suppose that Assumption~\ref{assumption:kernel} and \ref{assumption:mixture of clusters} hold.
Take $\delta\in\mathbb{R}$, $K\in\mathbb{N}$, $\mathbb{M}_{1},\cdots,\mathbb{M}_{K}$, and $y$ such that the conditions \textbf{\textup{(A)}}, \textbf{\textup{(B)}}, and \textbf{\textup{(C)}} in Assumption~\ref{assumption:mixture of clusters} are satisfied.
Then, for each meaningful encoder $f\in\mathcal{F}$, we have
\begin{align*}
    L_{\textup{Err}}(f, W_{\mu},\beta_{\mu};y)\leq 
    \frac{8(K-1)}{\Delta_{\textup{min}}(f)\cdot\min_{i\in[K]}P_{\mathbb{X}}(\mathbb{M}_{i})}\mathfrak{a}(f)
\end{align*}
where $\Delta_{\textup{min}}(f)=\min_{i\neq j}\|\mu_{i}(f)-\mu_{j}(f)\|_{\mathcal{H}_{k}}^{2}$.
\end{theorem}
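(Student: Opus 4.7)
The plan is to reduce the classification error to a tail probability for the distance $\|h(f(x)) - \mu_i(f)\|_{\mathcal{H}_k}^2$ and then control that tail by $\mathfrak{a}(f)$ via Markov's and Jensen's inequalities. The starting observation is the identification $g_{f,W_\mu,\beta_\mu} = g_{\textup{1-NN}}$ (with $\mathbb{C}_i=\mathbb{M}_i$) already laid out in the discussion preceding Definition~\ref{def:meaningful encoders}, which rewrites the error as a sum over target classes:
\begin{align*}
L_{\textup{Err}}(f, W_\mu, \beta_\mu; y) = \sum_{i=1}^{K} P_{\mathbb{X}}(g_{\textup{1-NN}}(x) \neq i,\, y(x) = i).
\end{align*}

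For each $i$, I would union-bound over the $K-1$ competing classes $j \neq i$, so that it suffices to bound the pairwise probability of the event $\{\|h(f(x)) - \mu_j(f)\|_{\mathcal{H}_k} \leq \|h(f(x)) - \mu_i(f)\|_{\mathcal{H}_k},\, y(x)=i\}$. On this event the triangle inequality gives $\|\mu_i(f) - \mu_j(f)\|_{\mathcal{H}_k} \leq 2\|h(f(x)) - \mu_i(f)\|_{\mathcal{H}_k}$, hence $\|h(f(x)) - \mu_i(f)\|_{\mathcal{H}_k}^2 \geq \Delta_{\textup{min}}(f)/4$. Applying Markov's inequality and invoking condition \textbf{\textup{(C)}} of Assumption~\ref{assumption:mixture of clusters} (which, since $y(x) \in J_x$, yields the one-sided inclusion $\{y(x) = i\} \subseteq \mathbb{M}_i$ and so $\mathbbm{1}_{\{y(x)=i\}} \leq \mathbbm{1}_{\mathbb{M}_i}(x)$) bounds each pairwise probability by
\begin{align*}
\frac{4}{\Delta_{\textup{min}}(f)} \cdot \mathbb{E}_{P_{\mathbb{X}}}\left[\|h(f(x)) - \mu_i(f)\|_{\mathcal{H}_k}^2\, \mathbbm{1}_{\mathbb{M}_i}(x)\right].
\end{align*}

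The final ingredient is to relate this kernel-mean squared deviation to the $i$-th summand of $\mathfrak{a}(f)$. Since $\mu_i(f) = \mathbb{E}_{x'|\mathbb{M}_i}[h(f(x'))]$ by Proposition~\ref{prop:kernel mean embedding}, Jensen's inequality in $\mathcal{H}_k$ yields $\|h(f(x)) - \mu_i(f)\|_{\mathcal{H}_k}^2 \leq \mathbb{E}_{x'|\mathbb{M}_i}\bigl[\|h(f(x)) - h(f(x'))\|_{\mathcal{H}_k}^2\bigr]$, and integrating over $\mathbb{M}_i$ with respect to $w(x)\,d\nu_{\mathbb{X}}(x)$ and applying Fubini recovers exactly $\mathfrak{a}_i(f)/P_{\mathbb{X}}(\mathbb{M}_i)$, where $\mathfrak{a}_i(f)$ denotes the $i$-th summand of $\mathfrak{a}(f)$. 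Summing over $i$ and $j\neq i$, bounding $P_{\mathbb{X}}(\mathbb{M}_i) \geq \min_{i} P_{\mathbb{X}}(\mathbb{M}_i)$, and using $\sum_i \mathfrak{a}_i(f) = \mathfrak{a}(f)$, produces the claimed bound (with a small amount of slack in the triangle and Jensen constants absorbed into the overall numerical factor).

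The main technical subtlety I anticipate is that the clusters $\mathbb{M}_1,\dots,\mathbb{M}_K$ are not assumed to be disjoint, so condition \textbf{\textup{(C)}} only provides the \emph{one-sided} inclusion $\{y(x) = i\} \subseteq \mathbb{M}_i$; equality is unavailable, and the argument must be careful to upper bound the indicator $\mathbbm{1}_{\{y=i\}}$ by $\mathbbm{1}_{\mathbb{M}_i}$ rather than equate them, so that the $\mathcal{H}_k$-expectation is taken over the larger set on which $\mu_i(f)$ is the natural conditional mean. The meaningful-encoder hypothesis ensures $\Delta_{\textup{min}}(f) > 0$, which is precisely what is needed for the Markov threshold above to be nondegenerate and for the final bound to be finite.
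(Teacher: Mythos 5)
Your proof is correct and follows the same high-level strategy as the paper — reduce the error to per-class tail probabilities of $\|h(f(x)) - \mu_i(f)\|_{\mathcal{H}_k}$, bound via Markov, then Jensen to reach $\mathfrak{a}(f)$ — but the middle step is genuinely simpler. The paper's proof (Theorem~\ref{thm:nn classification error} in the appendix) introduces orthogonal projections $\pi_{ij}$ onto the line through $\mu_i(f)$ and $\mu_j(f)$, splits the error event into two cases $\mathbb{D}_1, \mathbb{D}_2$ according to whether $\|\pi_{ij}(h(f(x))) - \mu_j(f)\|_{\mathcal{H}_k}$ exceeds $\tfrac{1}{2}\Delta_{ij}^{1/2}$, and applies Markov's inequality to each, accumulating a pairwise constant of $8/\Delta_{ij}$. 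Your observation that on the misclassification event $\{\|h(f(x)) - \mu_j(f)\|_{\mathcal{H}_k} \leq \|h(f(x)) - \mu_i(f)\|_{\mathcal{H}_k}\}$ the triangle inequality directly yields $\|\mu_i(f) - \mu_j(f)\|_{\mathcal{H}_k} \leq 2\|h(f(x)) - \mu_i(f)\|_{\mathcal{H}_k}$ dispenses with both the projections and the case split; it produces $4/\Delta_{ij}$ per pair, so your route actually establishes the theorem with a factor of two to spare, which is why the slack you flag is indeed benign. Both proofs then pass through the same endgame: the one-sided inclusion $\{y = i\} \subseteq \mathbb{M}_i$ supplied by condition \textbf{(C)} (the paper encodes this via the disjoint refinement $\widetilde{\mathbb{M}}_i \subset \mathbb{M}_i$), Jensen's inequality to rewrite $\mathbb{E}\left[\|h(f(x)) - \mu_i(f)\|_{\mathcal{H}_k}^2; \mathbb{M}_i\right]$ as $P_{\mathbb{X}}(\mathbb{M}_i)^{-1}$ times the $i$-th summand of $\mathfrak{a}(f)$, and the uniform lower bounds $\Delta_{ij} \geq \Delta_{\textup{min}}(f)$ and $P_{\mathbb{X}}(\mathbb{M}_i) \geq \min_i P_{\mathbb{X}}(\mathbb{M}_i)$. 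The projection-based decomposition appears to be carried over from the treatment in \citet{huang2021towards}, where the geometry may call for it; in this setting it is unnecessary and your direct argument is cleaner and slightly sharper.
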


The proof of Theorem~\ref{thm:guarantee} can be found in Appendix~\ref{appsubsec:proof of theorem nn classification error}. 
The upper bound in Theorem~\ref{thm:guarantee} becomes smaller if the representations of any two points $x,x'$ belonging to $\mathbb{M}_{i}$ are closer for each $i\in[K]$ and the closest centers $\mu_{i}(f)$ and $\mu_{j}(f)$ of different subsets $\mathbb{M}_{i}$ and $\mathbb{M}_{j}$ become distant from each other.
Since $\|\mu_{i}(f)-\mu_{j}(f)\|_{\mathcal{H}_{k}}^{2}=-2\langle \mu_{i}(f),\mu_{j}(f)\rangle_{\mathcal{H}_{k}}+\|\mu_{i}(f)\|_{\mathcal{H}_{k}}^{2}+\|\mu_{j}(f)\|_{\mathcal{H}_{k}}^{2}$, Theorem~\ref{thm:decomposition} and \ref{thm:guarantee} indicate that during the optimization for the kernel contrastive loss, the quantities $\mathfrak{a}(f)$ and $\mathfrak{c}(f)$ can contribute to making the learned representations linearly well-separable.
Thus, our theory is consistent with the empirical success of contrastive learning shown by a line of research~\citep{chen2020simple,chen2020improved,haochen2021provable,dwibedi2021little}.

\subsubsection{Comparison to Related Work}
We discuss Theorem~\ref{thm:guarantee}. 
1) Several works~\citep{robinson2020contrastive,huang2021towards} also show that the classification loss or error is upper bounded by the quantity related to the alignment of feature vectors within each cluster.
However, their results do not address the following conjecture:
\textit{Does the distance between the centers of each cluster consisting of feature vectors affect the linear separability?}
Theorem~\ref{thm:guarantee} indicates that the answer is yes via the quantity $\Delta_{\textup{min}}(f)$.
Note that Theorem~3.2 of \citet{zhao2023arcl} implies a similar answer, but their result is for the squared loss and requires several strong assumptions on the encoder functions. Meanwhile, our Theorem~\ref{thm:guarantee} for the classification error requires the meaningfulness of encoder functions (Definition~\ref{def:meaningful encoders}), which is more practical than those of~\citet{zhao2023arcl}.
2) Furthermore, our result is different from previous works~\citep{robinson2020contrastive,huang2021towards,zhao2023arcl} in the problem setup. Indeed, \citet{robinson2020contrastive} follow the setup of~\citet{pmlr-v97-saunshi19a}, and~\citet{huang2021towards,zhao2023arcl} formulate their setup by imposing the $(\sigma,\delta)$\textit{-augmentation} property to given latent class subsets.
Meanwhile, our formulation is mainly based on the statistical similarity~\eqref{def:similarity function}.
Furthermore, we note that our Theorem~\ref{thm:guarantee} can be extended to the case that $K\in\mathbb{N}$, $\mathbb{M}_{1},\cdots,\mathbb{M}_{K}\subset\mathbb{X}$, and $y$ are taken to satisfy the conditions \textbf{(A)} and \textbf{(C)} in Assumption~\ref{assumption:mixture of clusters} (see Theorem~\ref{thm:nn classification error} in Appendix~\ref{appsubsec:proof of theorem nn classification error}), implying that our result can apply to other problem setups of contrastive learning.
Due to space limitations, we present more detailed explanations in Appendix~\ref{appsubsec:comparison to robinson} and \ref{appsubsec:comparison to huang}.

\subsection{A Generalization Error Bound for KCL}
\label{subsec:rethinking generalization of contrastive learning}

Since in practice we minimize the empirical kernel contrastive loss, we derive a generalization error bound for KCL.
The empirical loss is defined as follows: 
denote $dP_{+}(x,x')=w(x,x')d\nu_{\mathbb{X}}^{\otimes 2}(x,x')$.
Let $(X_{1},X_{1}'),\cdots,(X_{n},X_{n}')$ be pairs of random variables drawn independently from $P_{+}$, where $X_{i}$ and $X_{j}'$ are assumed to be independent for each pair of distinct indices $i,j\in[n]$.
Following the standard setup that a pair of two augmented samples is obtained by randomly transforming the same raw sample, which is considered in many empirical works~\citep{chen2020simple,chen2020improved,dwibedi2021little}, for each $i\in[n]$, we consider the case that $X_{i},X_{i}'$ are not necessarily independent.
The empirical kernel contrastive loss is defined as,
\begin{align}
\label{def:empirical kernel contrastive loss}
    \widehat{L}_{\textup{KCL}}(f;\lambda)=-\frac{1}{n}\sum_{i=1}^{n}k(f(X_{i}),f(X_{i}'))+\frac{\lambda}{n(n-1)}\sum_{i\neq j}k(f(X_{i}),f(X_{j}')).
\end{align}
In the statement below, denote $\mathcal{Q}=\{f(\cdot)^{\top}f(\cdot):\mathbb{X}\times\mathbb{X}\to\mathbb{R}\;|\;f\in\mathcal{F}\}$.
Define the Rademacher complexity~\citep{mohri2018foundations} as, $\mathfrak{R}_{n}^{+}(\mathcal{Q}):=\mathbb{E}_{P_{+},\sigma_{1:n}}[\sup_{q\in\mathcal{Q}}n^{-1}\sum_{i=1}^{n}\sigma_{i}q(X_{i},X_{i}')]$, where $\sigma_{1},\cdots,\sigma_{n}$ are independent random variables taking $\pm 1$ with probability one half for each.
We also define the Rademacher compexltiy $\mathfrak{R}_{n/2}^{-}(\mathcal{Q};s^{*})$ with the optimal choice $s^{*}$ from the symmetric group $S_{n}$ of degree $n$: 
\begin{align*}
    \mathfrak{R}_{n/2}^{-}(\mathcal{Q};s^{*}):=\max_{s\in S_{n}}\mathbb{E}_{\substack{X,X'\\\sigma_{1:(n/2)}}}\left[\sup_{q\in\mathcal{Q}}\frac{2}{n}\sum_{i=1}^{n/2}\sigma_{i}q(X_{s(2i-1)},X_{s(2i)}')\right].
\end{align*}
Note that $\mathfrak{R}_{n/2}^{-}(\mathcal{Q};s^{*})$ is related to the \textit{average of "sums-of-i.i.d." blocks} technique for $U$-statistics explained in~\citet{clemencon2008ranking}; for more detail, see also Remark~\ref{remark:relation to clemencon} in Appendix~\ref{appsubsec:proof of theorem dependent ulln}.
The generalization error bound for KCL is presented below:
\begin{theorem}
\label{lem:ulln}
Suppose that Assumption~\ref{assumption:kernel} holds, and $n$ is even.
Furthermore, suppose that the minimizer $\widehat{f}\in\mathcal{F}$ of $\widehat{L}_{\textup{KCL}}(f;\lambda)$ exists. Then, with probability at least $1-2\varepsilon$ where $\varepsilon>0$, we have
\begin{align*}
    L_{\textup{KCL}}(\widehat{f};\lambda)\leq L_{\textup{KCL}}(f;\lambda)+2\cdot\textup{Gen}(n,\lambda,\varepsilon),
\end{align*}
where $\mathfrak{R}_{n}(\mathcal{Q}):=\mathfrak{R}_{n}^{+}(\mathcal{Q})+\lambda\mathfrak{R}_{n/2}^{-}(\mathcal{Q};s^{*})$, and
\begin{align*}
    \textup{Gen}(n,\lambda,\varepsilon)=O\left(\mathfrak{R}_{n}(\mathcal{Q})+(1+\lambda)\sqrt{\frac{\log(2/\varepsilon)}{n}}\right).
\end{align*}
\end{theorem}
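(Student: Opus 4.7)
The plan is to combine the standard two-sup reduction for empirical risk minimization with a decomposition of the kernel contrastive loss into its ``alignment'' and ``uniformity'' parts, handling each via symmetrization and a bounded-differences concentration inequality. Since $\widehat{f}$ minimizes $\widehat{L}_{\textup{KCL}}(\cdot;\lambda)$, for every $f\in\mathcal{F}$ one has
$$L_{\textup{KCL}}(\widehat{f};\lambda) - L_{\textup{KCL}}(f;\lambda) \;\leq\; 2\sup_{f'\in\mathcal{F}}\bigl|L_{\textup{KCL}}(f';\lambda) - \widehat{L}_{\textup{KCL}}(f';\lambda)\bigr|,$$
so it is enough to bound the uniform deviation. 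I write $L_{\textup{KCL}} = -L^{+} + \lambda L^{-}$, define $\Phi^{\pm}:=\sup_{f}|L^{\pm}(f)-\widehat{L}^{\pm}(f)|$, and bound each piece separately. Assumption~\ref{assumption:kernel} combined with $\|f(x)\|_{2}=1$ guarantees that $(x,y)\mapsto k(f(x),f(y))$ is uniformly bounded and that $t\mapsto\psi(t)$ is $\rho$-Lipschitz; both facts underpin McDiarmid concentration and the Talagrand contraction step after symmetrization.

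For $\Phi^{+}$, the sample consists of i.i.d.\ pairs $\{(X_{i},X_{i}')\}_{i=1}^{n}$, so replacing one pair perturbs $\widehat{L}^{+}$ by $O(1/n)$; McDiarmid yields concentration at rate $\sqrt{\log(1/\varepsilon)/n}$, and symmetrization followed by contraction bounds $\mathbb{E}[\Phi^{+}]$ by a constant multiple of $\rho\,\mathfrak{R}_{n}^{+}(\mathcal{Q})$. The delicate term is $\Phi^{-}$, because $\widehat{L}^{-}(f)$ is a $U$-statistic-like sum over the $n(n-1)$ ordered cross-pairs $(X_{i},X_{j}')$, and a direct i.i.d.\ symmetrization fails. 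The key device is the Hoeffding representation
$$\frac{1}{n(n-1)}\sum_{i\neq j} g(X_{i},X_{j}') \;=\; \frac{1}{n!}\sum_{s\in S_{n}}\frac{2}{n}\sum_{i=1}^{n/2} g\bigl(X_{s(2i-1)},X_{s(2i)}'\bigr),$$
which rewrites the $U$-statistic as an average over permutations of a sum of $n/2$ i.i.d.\ blocks. For any fixed $s\in S_{n}$, the blocks $(X_{s(2i-1)},X_{s(2i)}')_{i=1}^{n/2}$ are genuinely i.i.d.: $s(2i-1)\neq s(2i)$ together with the cross-independence assumption ($X_{a}\perp X_{b}'$ for $a\neq b$) yields independence within each block, while independence across blocks follows from the pairs $\{(X_{i},X_{i}')\}_{i}$ being i.i.d.

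Applying Jensen's inequality inside the supremum converts $\mathbb{E}[\Phi^{-}]$ into the expected uniform deviation of one such block average, to which standard symmetrization and $\rho$-Lipschitz contraction apply, producing the bound $O(\rho\,\mathfrak{R}_{n/2}^{-}(\mathcal{Q};s^{*}))$; the max over $s$ in the definition of $\mathfrak{R}_{n/2}^{-}$ simply absorbs the permutation index. For concentration of $\Phi^{-}$, I apply McDiarmid directly: altering one i.i.d.\ pair $(X_{i},X_{i}')$ changes at most $2(n-1)$ of the $n(n-1)$ summands, each by a bounded amount, giving bounded differences of order $1/n$ and concentration at rate $\sqrt{\log(1/\varepsilon)/n}$. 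A union bound over the two concentration events, combined with the two-sup factor $2$, delivers the stated inequality with $\mathfrak{R}_{n}(\mathcal{Q})=\mathfrak{R}_{n}^{+}(\mathcal{Q})+\lambda\mathfrak{R}_{n/2}^{-}(\mathcal{Q};s^{*})$ and the $(1+\lambda)$ factor absorbing both $\Phi^{\pm}$ contributions. The principal obstacle is the $U$-statistic structure of the uniformity term; the Hoeffding representation into sums of i.i.d.\ blocks is the essential trick, and a secondary concern is tracking bounded-differences constants carefully so that they scale as $1/n$ rather than $1/\sqrt{n}$, ensuring the Rademacher terms, not the concentration correction, dominate the bound.
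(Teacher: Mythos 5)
Your overall blueprint matches the paper's proof: a two-sup (or equivalent three-term) reduction using $\widehat{L}_{\textup{KCL}}(\widehat{f})\leq\widehat{L}_{\textup{KCL}}(f)$, a split into positive and negative terms, standard symmetrization and Talagrand contraction for the positive term, and for the negative ($U$-statistic) term, the Hoeffding representation into an $S_n$-average of sums of $n/2$ i.i.d.\ blocks followed by Jensen, symmetrization over a fixed permutation, and contraction — producing exactly $\mathfrak{R}_n^+(\mathcal{Q})$ and $\mathfrak{R}_{n/2}^-(\mathcal{Q};s^*)$ as in the theorem.

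The one place you depart from the paper is the concentration inequality for the negative term, and your route is genuinely different (and arguably cleaner). The paper regards $\sup_f F(f)$ as a function of the $2n$ individual variables $X_1,X_1',\dots,X_n,X_n'$, which are not all independent because $X_i$ and $X_i'$ may be correlated, and therefore invokes the McDiarmid extension of \citet{zhang2019mcdiarmid} for graph-dependent variables (Theorem~\ref{thm:ulln for dependent rvs}), together with the forest-complexity bound $\Lambda(G_n)\leq 5n$ from Lemma~\ref{lem:lambda g n}; with per-coordinate bounded differences $2b/n$ this yields the tail $\sqrt{10b^2\log(1/\varepsilon)/n}$. You instead treat each pair $(X_i,X_i')$ as a single random element of $\mathbb{X}\times\mathbb{X}$, so that the relevant random vector consists of $n$ genuinely i.i.d.\ coordinates; changing one pair affects $2(n-1)$ of the $n(n-1)$ cross-terms, giving bounded differences $4b/n$, and classical McDiarmid then yields the tail $\sqrt{8b^2\log(1/\varepsilon)/n}$. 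Both are correct, and both give the same $O((1+\lambda)\sqrt{\log(2/\varepsilon)/n})$ rate; your version avoids the dependency-graph machinery entirely and in fact gives a marginally sharper constant, because aggregating the two dependent coordinates into one i.i.d.\ unit is more efficient here than paying the forest-complexity price. What the paper's route buys is generality: the graph-dependent tool would still work under weaker assumptions where the pairs are merely locally dependent rather than exactly partitioned into independent blocks, though that generality is not used in this theorem.
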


\begin{remark}
In Appendix~\ref{appsubsec:proof of lemma chaining}, we show that under some conditions, $\textup{Gen}(n,\lambda,\varepsilon)\downarrow 0$ holds as $n\to\infty$.
\end{remark}

The proof of Theorem~\ref{lem:ulln} can be found in Appendix~\ref{appsubsec:proofs of lemma and theorem in section 4.3}.
Since $X_{i},X_{i}'$ are not necessarily independent for each $i\in[n]$, the standard techniques (e.g., Theorem~3.3 in~\citet{mohri2018foundations}) are not applicable in the proof.
We instead utilize the results by~\citet{zhang2019mcdiarmid} to overcome this difficulty, which is different from  the previous bounds for contrastive learning~\citep{pmlr-v97-saunshi19a,nozawa2020pac,haochen2021provable,ash2022investigating,zhang2022fmutual,zou2023generalization,lei2023generalization,wang2022spectral} (see Appendix~\ref{appsubsec:discussion of generalization bounds} for more detail).
Here, if $\mathfrak{R}_{n}(\mathcal{Q})\downarrow 0$ as $n\to\infty$, then by using our result, we can prove the consistency of the empirical contrastive loss to the population one for each $f\in\mathcal{F}$.

\subsection{Application of the Theoretical Results: A New Surrogate Bound}
\label{subsec:main result section 5.4}

Recent works~\citep{pmlr-v97-saunshi19a,nozawa2020pac,tosh2021contrastive,nozawa2021understanding,haochen2021provable,wang2022chaos,ash2022investigating,bao2022on,awasthi2022do,saunshi2022understanding,zou2023generalization,dufumier2022rethinking} show that some contrastive learning objectives can be regarded as surrogate losses of the supervised loss or error in downstream tasks.
Here, \citet{pmlr-v97-saunshi19a} show that, a contrastive loss $L_{\textup{CL}}$ \textit{surrogates} a supervised loss or error $L_{\textup{Sup}}$: for every $f\in\mathcal{F}$, $L_{\textup{sup}}(W\circ f)\lesssim L_{\textup{CL}}(f)+\alpha$ holds for some $\alpha\in\mathbb{R}$ and matrix $W$.
This type of inequality is also called \textit{surrogate bound}~\citep{bao2022on}.
\citet{pmlr-v97-saunshi19a} show that the inequality guarantees that $L_{\textup{sup}}(W^{*}\circ\widehat{f})\lesssim L_{\textup{CL}}(f)+\alpha$ holds with high probability, where $\widehat{f}$ is a minimizer of the empirical loss for $L_{\textup{CL}}$, $W^{*}$ is the optimal weight, and $\alpha$ is some term.
Motivated by these works, we show a surrogate bound for KCL.
\begin{theorem}
\label{cor:surrogate bound}
Suppose that Assumption~\ref{assumption:kernel} and \ref{assumption:mixture of clusters} hold, $n$ is even, and there exists a minimizer $\widehat{f}$ of $\widehat{L}_{\textup{KCL}}(f;\lambda)$ such that $\widehat{f}$ is meaningful.
Take $\delta\in\mathbb{R}$, $K\in\mathbb{N}$, $\mathbb{M}_{1},\cdots,\mathbb{M}_{K}$, and $y$ such that the conditions \textup{\textbf{(A)}}, \textup{\textbf{(B)}}, and \textup{\textbf{(C)}} in Assumption~\ref{assumption:mixture of clusters} are satisfied.
Then, for any $f\in\mathcal{F}$ and $\varepsilon>0$, with probability at least $1-2\varepsilon$,
\begin{align*}
    L_{\textup{Err}}(\widehat{f}, W^{*},\beta^{*};y)\lesssim L_{\textup{KCL}}(f;\lambda)+(1-\frac{\delta}{2})\mathfrak{a}(\widehat{f})-\lambda\mathfrak{c}(\widehat{f})+R(\lambda)+2\textup{Gen}(n,\lambda,\varepsilon),
\end{align*}
where $L_{\textup{Err}}(\widehat{f},W^{*},\beta^{*};y)=\inf_{W,\beta}L_{\textup{Err}}(\widehat{f},W,\beta;y)$, and $\lesssim$ omits the coefficient
$8(K-1)/(\Delta_{\textup{min}}(\widehat{f})\cdot\textup{min}_{i\in[K]}P_{\mathbb{X}}(\mathbb{M}_{i}))$.
\end{theorem}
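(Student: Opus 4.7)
The plan is to chain Theorems~\ref{thm:guarantee}, \ref{thm:decomposition}, and \ref{lem:ulln} on the meaningful minimizer $\widehat{f}$, with a single algebraic decomposition of $\mathfrak{a}(\widehat{f})$ that allows the statement to hold without restricting the sign of $\delta$.

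First, I would apply Theorem~\ref{thm:guarantee} to $\widehat{f}$, which is legitimate since $\widehat{f}$ is meaningful by hypothesis and all the conditions of Assumption~\ref{assumption:mixture of clusters} hold. This yields
\begin{align*}
L_{\textup{Err}}(\widehat{f}, W_{\mu}, \beta_{\mu}; y) \;\leq\; C\cdot\mathfrak{a}(\widehat{f}),
\qquad C := \frac{8(K-1)}{\Delta_{\textup{min}}(\widehat{f})\cdot \min_{i\in[K]} P_{\mathbb{X}}(\mathbb{M}_{i})}.
\end{align*}
Since $(W^{*},\beta^{*})$ attains $\inf_{W,\beta} L_{\textup{Err}}(\widehat{f},W,\beta;y)$ by definition, in particular $L_{\textup{Err}}(\widehat{f}, W^{*},\beta^{*}; y)\leq L_{\textup{Err}}(\widehat{f}, W_{\mu},\beta_{\mu}; y)$, so the same bound $C\cdot\mathfrak{a}(\widehat{f})$ controls the quantity of interest on the left-hand side.

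Next, I would rewrite the trivial identity $\mathfrak{a}(\widehat{f}) = (1-\tfrac{\delta}{2})\,\mathfrak{a}(\widehat{f}) + \tfrac{\delta}{2}\,\mathfrak{a}(\widehat{f})$ and control only the second summand via Theorem~\ref{thm:decomposition}, which gives
\begin{align*}
\tfrac{\delta}{2}\,\mathfrak{a}(\widehat{f}) \;\leq\; L_{\textup{KCL}}(\widehat{f};\lambda) + R(\lambda) - \lambda\,\mathfrak{c}(\widehat{f}).
\end{align*}
This is the step that removes any sign restriction on $\delta$: the residual $(1-\delta/2)\mathfrak{a}(\widehat{f})$ is carried intact into the final bound, matching the form stated in the theorem. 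Substituting back,
\begin{align*}
\mathfrak{a}(\widehat{f}) \;\leq\; (1-\tfrac{\delta}{2})\,\mathfrak{a}(\widehat{f}) + L_{\textup{KCL}}(\widehat{f};\lambda) + R(\lambda) - \lambda\,\mathfrak{c}(\widehat{f}).
\end{align*}

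Finally, I would invoke the generalization bound (Theorem~\ref{lem:ulln}) with the arbitrary comparator $f\in\mathcal{F}$ appearing in the statement: with probability at least $1-2\varepsilon$, $L_{\textup{KCL}}(\widehat{f};\lambda) \leq L_{\textup{KCL}}(f;\lambda) + 2\,\textup{Gen}(n,\lambda,\varepsilon)$. Chaining this into the previous inequality and multiplying through by $C$ yields the claimed surrogate bound, with $C$ being exactly the coefficient absorbed into the $\lesssim$ symbol.

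The proof is essentially a mechanical concatenation of three earlier results, so I would not expect any genuine obstacle; the one non-obvious move is the additive decomposition of $\mathfrak{a}(\widehat{f})$ in the second step, without which one would need $\delta>0$ to invert Theorem~\ref{thm:decomposition}. One should also verify carefully that the high-probability event from Theorem~\ref{lem:ulln} is the only source of randomness and therefore no union bound is required, so the final probability remains $1-2\varepsilon$.
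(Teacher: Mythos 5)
Your proof is correct and follows exactly the same route as the paper's: apply Theorem~\ref{thm:guarantee} to the meaningful minimizer $\widehat{f}$, dominate the optimal $L_{\textup{Err}}(\widehat{f},W^{*},\beta^{*};y)$ by $L_{\textup{Err}}(\widehat{f},W_{\mu},\beta_{\mu};y)$, rearrange Theorem~\ref{thm:decomposition} via the additive split $\mathfrak{a}(\widehat{f})=(1-\tfrac{\delta}{2})\mathfrak{a}(\widehat{f})+\tfrac{\delta}{2}\mathfrak{a}(\widehat{f})$ (the paper writes this as the single inequality $\mathfrak{a}(\widehat{f})\leq L_{\textup{KCL}}(\widehat{f};\lambda)+(1-\tfrac{\delta}{2})\mathfrak{a}(\widehat{f})-\lambda\mathfrak{c}(\widehat{f})+R(\lambda)$, which is the same algebra), and finally invoke Theorem~\ref{lem:ulln} to replace $L_{\textup{KCL}}(\widehat{f};\lambda)$ with $L_{\textup{KCL}}(f;\lambda)+2\textup{Gen}(n,\lambda,\varepsilon)$ on the high-probability event. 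Your closing observation that the $1-2\varepsilon$ comes solely from Theorem~\ref{lem:ulln} and that the remaining steps are deterministic is also correct.
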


The proof of Theorem~\ref{cor:surrogate bound} can be found in Appendix~\ref{appsec:surrogate bound}.
This theorem indicates that minimization of the kernel contrastive loss in $\mathcal{F}$ can reduce the infimum of the classification error with high probability.
Note that since larger $\lambda$ can make $\delta$ smaller due to the relation in condition \textbf{(B)} of Assumption~\ref{assumption:mixture of clusters}, larger $\lambda$ may result in enlarging $(1-\delta/2)\mathfrak{a}(\widehat{f})-\lambda\mathfrak{c}(\widehat{f})$ and loosening the upper bound if $\mathfrak{a}(\widehat{f})>0$ and $\mathfrak{c}(\widehat{f})<0$.
We empirically find that the KCL framework with larger $\lambda$ degrades its performance in the downstream classification task; see Appendix~\ref{appsubsec:larger lambda result in bad result}.

\subsubsection{Comparison to Related Work}
Several works also establish the surrogate bounds for some contrastive learning objectives~\citep{pmlr-v97-saunshi19a,nozawa2020pac,tosh2021contrastive,nozawa2021understanding,haochen2021provable,wang2022chaos,ash2022investigating,bao2022on,awasthi2022do,saunshi2022understanding,zou2023generalization,dufumier2022rethinking}.
The main differences between the previous works and Theorem~\ref{cor:surrogate bound} are summarized in three points:
1) Theorem~\ref{cor:surrogate bound} indicates that the kernel contrastive loss is a surrogate loss of the classification error, while the previous works deal with other contrastive learning objectives.
2) Recent works~\citep{wang2022chaos,bao2022on} prove that the InfoNCE loss is a surrogate loss of the cross-entropy loss.
However, since the theory of classification calibration losses (see e.g., \citet{zhang2004statistical}) indicates that the relation between the classification loss and the cross-entropy loss is complicated under the multi-class setting, the relation between the InfoNCE loss and the classification error is non-trivial from the previous results. 
On the other hand, combining Theorem~\ref{cor:surrogate bound} and \eqref{eq:lin vs nce}, we can show that the InfoNCE loss is also a surrogate loss of the classification error.
Note that Theorem~\ref{cor:surrogate bound} can apply to other contrastive learning objectives.
3) The bound in Theorem~\ref{cor:surrogate bound} is established by introducing the formulation presented in Section~\ref{subsec:assumptions and definitions}.
Especially our bound includes the geometric quantity $\delta$ and hyperparameter $\lambda$.

\section{Conclusion and Discussion}
\label{sec:discussions}

In this paper, we studied the characterization of the structure of the representations learned by contrastive learning.
By employing Kernel Contrastive Learning (KCL) as a unified framework, we showed that the formulation based on statistical similarity characterizes the clusters of learned representations and guarantees that the kernel contrastive loss minimization can yield good performance in the downstream classification task.
As a limitation of this paper, 
we point out that in practice, it is challenging to compute the true $\textup{sim}(x,x';\lambda)$ and $\delta$ for datasets.
However, we believe that our theory promotes future theoretical and empirical research to investigate the practical success of contrastive learning via the sets of augmented data defined by $\textup{sim}(x,x';\lambda)$ and $\delta$.
Note that as recent works~\citep{TsaiNeural2020,tsai2021rpc} tackle the estimation of the point-wise dependency by using neural networks, the estimation problem is an important future work.
As a future work, it is worth studying how the selection of kernels affects the quality of representations via our theory.
The investigation of transfer learning perspectives of KCL is also an interesting future work, as recent works~\citep{shen2022connect,haochen2022beyond,zhao2023arcl} also address the problem for some contrastive learning frameworks.

\subsubsection*{Ethical Statement}
Since this paper mainly studies theoretical analysis of contrastive learning, it will not be thought that there is a direct negative social impact.
However, revealing detailed properties of contrastive learning could promote an opportunity to misuse the knowledge.
We point out that such wrong usage is not straightforward with the proposed method, as the application is not discussed much in the paper.

\bibliographystyle{apalike}
% \bibliography{main}

\clearpage

\appendix

\section{Proof in Section~\ref{subsec:reproducing kernel}}
\label{appsubsec:proof of proposition kernel mean embedding}

First we prove Proposition~\ref{prop:kernel mean embedding}.

\begin{proof}[Proof of Proposition~\ref{prop:kernel mean embedding}]
The proof of the claim closely follows the proof of Lemma 3.1 in~\citet{muandet2017kernel} (see also~\citet{smola2007hilbert}), which shows that if $\mathbb{E}_{P}[\sqrt{k(x,x)}]<+\infty$ where $x\sim P$, then $\mathbb{E}_{P}[k(\cdot,x)]\in\mathcal{H}_{k}$.
For the sake of completeness, we provide the proof of Proposition~\ref{prop:kernel mean embedding} by modifying the proof of~\citet{muandet2017kernel} slightly.

Let $\mathbb{M}$ be a measurable set in $\mathbb{X}$.
Define $\mu_{\mathbb{M}}(f):=\mathbb{E}[h(f(x))|\mathbb{M}]$.
Our goal is to show that $\mu_{\mathbb{M}}(f)\in\mathcal{H}_{k}$ holds.
To this end, for $\phi\in\mathcal{H}_{k}$, we compute
\begin{align}
    \left|\mathbb{E}\left[\phi(f(x))|\mathbb{M}\right]\right|
    &=\left|\mathbb{E}\left[\langle \phi,k(\cdot,f(x))\rangle_{\mathcal{H}_{k}} |\mathbb{M}\right]\right|\nonumber\\
    &\leq \mathbb{E}\left[\left|\langle \phi,k(\cdot,f(x))\rangle_{\mathcal{H}_{k}}\right| |\mathbb{M}\right]\nonumber\\
    &\leq \mathbb{E}\left[\|\phi\|_{\mathcal{H}_{k}}\|k(\cdot,f(x))\|_{\mathcal{H}_{k}}|\mathbb{M}\right]\tag{Cauchy-Schwarz ineq.}\\
    &=\|\phi\|_{\mathcal{H}_{k}}\mathbb{E}\left[\sqrt{k(f(x),f(x))}|\mathbb{M}\right].\nonumber
\end{align}
Since $\sup_{z,z'\in\mathbb{S}^{d-1}}k(z,z')<\infty$ holds, we have $\mathbb{E}[\sqrt{k(f(x),f(x))}|\mathbb{M}]<+\infty$.
Hence, the map $\phi\mapsto \mathbb{E}[\phi(f(x))|\mathbb{M}]$ is a bounded linear functional on $\mathcal{H}_{k}$, and thus from Riesz's representation theorem, there exists some $\xi\in\mathcal{H}_{k}$ such that $\mathbb{E}[\phi(f(x))|\mathbb{M}]=\langle \xi,\phi\rangle_{\mathcal{H}_{k}}$.
However, let $\phi=k(\cdot,z)$, then $\xi(z)=\langle \xi,k(\cdot,z)\rangle_{\mathcal{H}_{k}}=\mathbb{E}[k(f(x),z)|\mathbb{M}]$.
This implies $\xi=\mathbb{E}[k(f(x),\cdot)|\mathbb{M}]\in\mathcal{H}_{k}$.
Since $k$ is symmetric, we have $\mu_{\mathbb{M}}(f)=\mathbb{E}[k(\cdot,f(x))|\mathbb{M}]\in\mathcal{H}_{k}$.
\end{proof}

\section{Proofs in Section~\ref{subsec:explaining what's going on contrastive learning}}
\label{appsec:proofs in section 4.1}

\subsection{Useful Lemmas for the Proof of Theorem~\ref{thm:decomposition}}
\label{appsubsec:useful lemmas for theorem 4.1}

Before showing Theorem~\ref{thm:decomposition}, we give several basic and useful lemmas that are used in the proof of the theorem. 
Since the definition of $\mu_{\mathbb{M}}(f)$, where $\mathbb{M}$ is a measurable subset of $\mathbb{X}$ and $f\in\mathcal{F}$, is slightly different from the kernel mean embedding of the usual form~\citep{berlinet2004reproducing,muandet2017kernel} due to the existence of the encoder function $f$, we provide the proof for each lemma for the sake of completeness.

\begin{lemma}
\label{lemma:reproducing property of kme}
Let $\{e_{j}\}$ be an orthonormal basis of $\mathcal{H}_{k}$, and let $\mathbb{M}$ be a measurable set.
Let $f\in\mathcal{F}$.
Then, the following identity holds for each $j$:
\begin{align*}
    \int_{\mathbb{M}}\langle h(f(x)),e_{j}\rangle_{\mathcal{H}_{k}}P_{\mathbb{X}}(dx|\mathbb{M})=\langle \mu_{\mathbb{M}}(f),e_{j}\rangle_{\mathcal{H}_{k}}.
\end{align*}
\end{lemma}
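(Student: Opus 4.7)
The plan is to observe that this lemma is essentially a corollary of the weak-integral / Pettis-type characterization of $\mu_{\mathbb{M}}(f)$ that was implicitly established in the proof of Proposition~\ref{prop:kernel mean embedding}. Specifically, in that proof it was shown that for every $\phi\in\mathcal{H}_{k}$ the map $\phi\mapsto \mathbb{E}[\phi(f(x))\,|\,\mathbb{M}]$ is a bounded linear functional, and the Riesz representer $\xi\in\mathcal{H}_{k}$ was identified as $\mu_{\mathbb{M}}(f)$. Thus $\mu_{\mathbb{M}}(f)$ already satisfies, by construction, the duality identity
\begin{align*}
\langle \mu_{\mathbb{M}}(f),\phi\rangle_{\mathcal{H}_{k}}=\mathbb{E}[\phi(f(x))\,|\,\mathbb{M}],\qquad \forall\,\phi\in\mathcal{H}_{k}.
\end{align*}
The lemma is the instance of this identity with $\phi=e_{j}$, combined with the reproducing property.

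The concrete steps I would carry out are: First, invoke the reproducing property $\phi(z)=\langle \phi,k(\cdot,z)\rangle_{\mathcal{H}_{k}}=\langle \phi,h(z)\rangle_{\mathcal{H}_{k}}$ at the point $z=f(x)\in\mathbb{S}^{d-1}$ and with $\phi=e_{j}$, to rewrite the integrand as $\langle h(f(x)),e_{j}\rangle_{\mathcal{H}_{k}}=e_{j}(f(x))$. Second, observe that the scalar-valued function $x\mapsto e_{j}(f(x))$ is measurable (composition of the measurable $f$ with the continuous $e_{j}$ on $\mathbb{S}^{d-1}$) and integrable on $\mathbb{M}$ with respect to $P_{\mathbb{X}}(\cdot\,|\,\mathbb{M})$, since $|e_{j}(f(x))|\leq \|e_{j}\|_{\mathcal{H}_{k}}\sqrt{k(f(x),f(x))}$ is uniformly bounded by $\|e_{j}\|_{\mathcal{H}_{k}}\sup_{z\in\mathbb{S}^{d-1}}\sqrt{\psi(1)}<\infty$. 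Third, apply the identity recalled above with $\phi=e_{j}$ and use symmetry of the inner product to conclude.

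The proof is essentially bookkeeping: the Riesz-based construction of $\mu_{\mathbb{M}}(f)$ does all the heavy lifting, and boundedness of the kernel (inherited from continuity of $\psi$ on $[-1,1]$ and compactness of $\mathbb{S}^{d-1}$) ensures every quantity is well-defined. I do not expect any real obstacle; the only point that requires a moment of care is making sure the scalar integral $\int_{\mathbb{M}}e_{j}(f(x))P_{\mathbb{X}}(dx|\mathbb{M})$ is a legitimate Lebesgue integral before identifying it with the inner product, which is handled by the uniform bound on $\sqrt{k(\cdot,\cdot)}$ already used in Proposition~\ref{prop:kernel mean embedding}.
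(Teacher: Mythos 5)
Your proof is correct, but it takes a genuinely different route from the paper's. The paper proves the lemma directly from the Bochner-integral definition of $\mu_{\mathbb{M}}(f)$: it expands $h(f(x))=\sum_{j'}\langle h(f(x)),e_{j'}\rangle_{\mathcal{H}_k}e_{j'}$ inside the integral, interchanges the Bochner integral with the infinite sum by invoking the Dominated Convergence Theorem for Bochner integrals, and then uses orthonormality to isolate the $j$-th coordinate. You instead avoid touching the vector-valued integral directly and leverage the weak-integral (Riesz-representer) characterization of $\mu_{\mathbb{M}}(f)$ that is established inside the proof of Proposition~\ref{prop:kernel mean embedding} --- namely that $\langle \mu_{\mathbb{M}}(f),\phi\rangle_{\mathcal{H}_k}=\mathbb{E}[\phi(f(x))\mid\mathbb{M}]$ for every $\phi\in\mathcal{H}_k$ --- and then specialize to $\phi=e_j$, using the reproducing property to identify the integrand $\langle h(f(x)),e_j\rangle_{\mathcal{H}_k}=e_j(f(x))$. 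Your route is arguably cleaner in that it trades a Bochner-space DCT argument for ordinary scalar integration plus a Riesz duality already paid for; the trade-off is that it depends on an intermediate fact from the body of Proposition~\ref{prop:kernel mean embedding}'s proof rather than only on that proposition's statement, so one must be careful to note explicitly that $\mu_{\mathbb{M}}(f)$ coincides with the Riesz representer $\xi$ (which the paper does establish there by evaluating both at arbitrary $z$). Both arguments are valid and of comparable length.
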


\begin{proof}
We calculate,
\begin{align*}
    \langle \mu_{\mathbb{M}}(f),e_{j}\rangle_{\mathcal{H}_{k}}
    &=\left\langle \int_{\mathbb{M}}h(f(x))P_{\mathbb{X}}(dx|\mathbb{M}),e_{j}\right\rangle_{\mathcal{H}_{k}}\\
    &=\left\langle \int_{\mathbb{M}}\sum_{j'}\langle h(f(x)),e_{j'}\rangle_{\mathcal{H}_{k}}e_{j'} P_{\mathbb{X}}(dx|\mathbb{M}),e_{j}\right\rangle_{\mathcal{H}_{k}}\\
    &=\left\langle \sum_{j'}e_{j'}\int_{\mathbb{M}}\langle h(f(x)),e_{j'}\rangle_{\mathcal{H}_{k}}P_{\mathbb{X}}(dx|\mathbb{M}),e_{j}\right\rangle_{\mathcal{H}_{k}}\\
    &=\int_{\mathbb{M}}\langle h(f(x)),e_{j}\rangle_{\mathcal{H}_{k}}P_{\mathbb{X}}(dx|\mathbb{M}),
\end{align*}
where in the third line, we use the Dominated Convergence Theorem for the Bochner integral (e.g., see Theorem~1.1.8 in~\citet{arendt2011vector}).
Hence, we obtain the claim.
\end{proof}

\begin{lemma}
\label{lem:inner product}
Let $\mathbb{M},\mathbb{M}'$ be measurable subsets of $\mathbb{X}$. 
Let $f\in\mathcal{F}$.
Then, we have
\begin{align*}
    \int_{\mathbb{M}}\int_{\mathbb{M}'}\langle h(f(x)),h(f(x'))\rangle_{\mathcal{H}_{k}}P_{\mathbb{X}}(dx|\mathbb{M})P_{\mathbb{X}}(dx'|\mathbb{M}')=\langle \mu_{\mathbb{M}}(f),\mu_{\mathbb{M}'}(f)\rangle_{\mathcal{H}_{k}}.
\end{align*}
\end{lemma}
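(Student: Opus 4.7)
The plan is to prove the identity by pulling the inner product through the two integrals, using the fact that $\mu_{\mathbb{M}}(f)$ and $\mu_{\mathbb{M}'}(f)$ are Bochner integrals in $\mathcal{H}_{k}$ (by Proposition~\ref{prop:kernel mean embedding}) and that inner products commute with Bochner integration. Concretely, I would start from the right-hand side and unfold the definitions:
\begin{align*}
\langle \mu_{\mathbb{M}}(f),\mu_{\mathbb{M}'}(f)\rangle_{\mathcal{H}_{k}}
=\left\langle \int_{\mathbb{M}}h(f(x))\,P_{\mathbb{X}}(dx|\mathbb{M}),\,\mu_{\mathbb{M}'}(f)\right\rangle_{\mathcal{H}_{k}}.
\end{align*}

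Next I would apply the standard property of Bochner integrals that $\langle \int v\,d\mu, w\rangle = \int \langle v,w\rangle\,d\mu$ for any fixed $w\in\mathcal{H}_{k}$, yielding $\int_{\mathbb{M}}\langle h(f(x)), \mu_{\mathbb{M}'}(f)\rangle_{\mathcal{H}_{k}}\,P_{\mathbb{X}}(dx|\mathbb{M})$. Then I would repeat the same step for the inner integrand, substituting $\mu_{\mathbb{M}'}(f)=\int_{\mathbb{M}'}h(f(x'))\,P_{\mathbb{X}}(dx'|\mathbb{M}')$ and pulling the inner product inside, which delivers the left-hand side. If one prefers to avoid directly citing this Bochner-integral property, an alternative is to expand $h(f(x)) = \sum_{j}\langle h(f(x)),e_{j}\rangle_{\mathcal{H}_{k}} e_{j}$ in an orthonormal basis, use Parseval to write $\langle h(f(x)),h(f(x'))\rangle_{\mathcal{H}_{k}}=\sum_{j}\langle h(f(x)),e_{j}\rangle_{\mathcal{H}_{k}}\langle h(f(x')),e_{j}\rangle_{\mathcal{H}_{k}}$, swap the double integral with the summation via Fubini--Tonelli, and then apply Lemma~\ref{lemma:reproducing property of kme} to each factor before re-assembling via Parseval.

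The main technical step is the justification of the swap between the integral and either the inner product or the series. For the Bochner-integral route, this is essentially a consequence of the continuity of the linear functional $\phi\mapsto\langle \phi,w\rangle_{\mathcal{H}_{k}}$; for the orthonormal-basis route, it is a Fubini--Tonelli step, which is justified because $k$ is continuous on the compact set $\mathbb{S}^{d-1}\times\mathbb{S}^{d-1}$ and hence bounded, so $|\langle h(f(x)),h(f(x'))\rangle_{\mathcal{H}_{k}}|=|k(f(x),f(x'))|\leq\sup_{z,z'\in\mathbb{S}^{d-1}}|k(z,z')|<\infty$, making the integrand bounded on the probability space $(\mathbb{M}\times\mathbb{M}',P_{\mathbb{X}}(\cdot|\mathbb{M})\otimes P_{\mathbb{X}}(\cdot|\mathbb{M}'))$. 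Once this interchange is justified, the rest of the argument is a routine computation and the identity follows immediately.
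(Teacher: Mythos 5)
Your proposal is correct, and your secondary route (orthonormal-basis expansion, Parseval, interchange of sum and integral, then Lemma~\ref{lemma:reproducing property of kme}) is essentially the paper's own proof, except that the paper justifies the interchange by the Dominated Convergence Theorem rather than Fubini--Tonelli; both are valid here because $k$ is bounded on the compact $\mathbb{S}^{d-1}\times\mathbb{S}^{d-1}$. Your primary route --- passing the inner product through the Bochner integrals using the continuity of $\phi\mapsto\langle\phi,w\rangle_{\mathcal{H}_k}$ --- is a genuinely cleaner argument: it avoids the basis expansion entirely, applies the defining property of the Bochner integral twice, and never needs Lemma~\ref{lemma:reproducing property of kme} as a separate step. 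The paper's basis approach is more hands-on and self-contained (it only invokes a scalar DCT), which is presumably why the authors chose it after having already set up Lemma~\ref{lemma:reproducing property of kme}; your Bochner approach is shorter and would be the standard textbook treatment of kernel mean embeddings. Either proof is rigorous, and you correctly identify that the only delicate point in both is justifying the interchange of limiting operations, which you handle by the boundedness of $k$.
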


\begin{proof}
Let $\{e_{j}\}$ be an orthonormal basis of $\mathcal{H}_{k}$.
Then we have,
\begin{align}
    &\;\;\;\;\int_{\mathbb{M}}\int_{\mathbb{M}'}\langle h(f(x)),h(f(x'))\rangle_{\mathcal{H}_{k}}P_{\mathbb{X}}(dx|\mathbb{M})P_{\mathbb{X}}(dx'|\mathbb{M}')\nonumber\\
    &=\int_{\mathbb{M}}\int_{\mathbb{M}'}\left\langle \sum_{j}\langle h(f(x)),e_{j}\rangle_{\mathcal{H}_{k}}e_{j},\sum_{j}\langle h(f(x')),e_{j}\rangle_{\mathcal{H}_{k}}e_{j}\right\rangle_{\mathcal{H}_{k}}P_{\mathbb{X}}(dx|\mathbb{M})P_{\mathbb{X}}(dx'|\mathbb{M}')\nonumber\\
    &=\int_{\mathbb{M}}\int_{\mathbb{M}'} \sum_{j}\langle h(f(x)),e_{j}\rangle_{\mathcal{H}_{k}}\langle h(f(x')),e_{j}\rangle_{\mathcal{H}_{k}}P_{\mathbb{X}}(dx|\mathbb{M})P_{\mathbb{X}}(dx'|\mathbb{M}')\nonumber\\
    \label{lem inner eq1}
    &=\sum_{j}\int_{\mathbb{M}}\int_{\mathbb{M}'} \langle h(f(x)),e_{j}\rangle_{\mathcal{H}_{k}}\langle h(f(x')),e_{j}\rangle_{\mathcal{H}_{k}}P_{\mathbb{X}}(dx|\mathbb{M})P_{\mathbb{X}}(dx'|\mathbb{M}')\\
    &=\sum_{j}\left(\int_{\mathbb{M}} \langle h(f(x)),e_{j}\rangle_{\mathcal{H}_{k}}P_{\mathbb{X}}(dx|\mathbb{M})\right)\left(\int_{\mathbb{M}'}\langle h(f(x')),e_{j}\rangle_{\mathcal{H}_{k}}P_{\mathbb{X}}(dx'|\mathbb{M}')\right)\nonumber\\
    &=\sum_{j}\langle\mu_{\mathbb{M}}(f),e_{j}\rangle_{\mathcal{H}_{k}}\langle \mu_{\mathbb{M}'}(f),e_{j}\rangle_{\mathcal{H}_{k}}\tag{Lemma~\ref{lemma:reproducing property of kme}}\\
    &=\left\langle\sum_{j}\langle \mu_{\mathbb{M}}(f),e_{j}\rangle_{\mathcal{H}_{k}}e_{j},\sum_{j}\langle \mu_{\mathbb{M}'}(f),e_{j}\rangle_{\mathcal{H}_{k}}e_{j}\right\rangle_{\mathcal{H}_{k}}\nonumber\\
    &=\langle \mu_{\mathbb{M}}(f),\mu_{\mathbb{M}'}(f)\rangle_{\mathcal{H}_{k}},\nonumber
\end{align}
where in \eqref{lem inner eq1} we use the Dominated Convergence Theorem.
Hence we obtain the claim.
\end{proof}

\subsection{Proof of Theorem~\ref{thm:decomposition}}
\label{appsubsec:proof of theorem 4.1}

The following notation is used in the proof of Theorem~\ref{thm:decomposition}.

\begin{definition}
\label{def:R(lambda)}
Denote $M_{k}=\sup_{z,z'\in\mathbb{S}^{d-1}}\|k(\cdot,z)-k(\cdot,z')\|_{\mathcal{H}_{k}}^{2}$.
We define,
\begin{align*}
    R(\lambda):=\frac{M_{k}}{2}\sum_{i\neq j}P_{+}((\mathbb{M}_{i}\cap\mathbb{M}_{j})\times(\mathbb{M}_{i}\cap\mathbb{M}_{j}))+\lambda\psi(1)\sum_{i=1}^{K}P_{\mathbb{X}}(\mathbb{M}_{i})\left(1-P_{\mathbb{X}}(\mathbb{M}_{i})\right)+(1-\lambda)\psi(1),
\end{align*}
where $dP_{+}(x,x')=w(x,x')d\nu_{\mathbb{X}}^{\otimes 2}(x,x')$.
\end{definition}

Note that under Assumption~\ref{assumption:kernel}, $k(z):=k(z,z)=\psi(z^{\top}z)=\psi(1)$ is a constant function on $\mathbb{S}^{d-1}$.
We are now ready to present the proof of Theorem~\ref{thm:decomposition}.

\begin{proof}[Proof of Theorem~\ref{thm:decomposition}]
It is convenient to analyze the following form instead of the kernel contrastive loss:
\begin{align}
\label{eq:thm 2 eq1}
    \widetilde{L}_{\textup{KCL}}(f;\lambda):=\underbrace{\mathbb{E}_{x,x^{+}}\left[\|h(f(x))-h(f(x^{+}))\|_{\mathcal{H}_{k}}^{2}\right]}_{\textup{the positive term}}-\lambda\underbrace{\mathbb{E}_{x,x^{-}}\left[\|h(f(x))-h(f(x^{-}))\|_{\mathcal{H}_{k}}^{2}\right]}_{\textup{the negative term}}.
\end{align}
Note that, $\widetilde{L}_{\textup{KCL}}(f;\lambda)=2(1-\lambda)\psi(1)+2L_{\textup{KCL}}(f;\lambda)$ holds since $f(x)\in\mathbb{S}^{d-1}$ for all $x\in\mathbb{X}$.
For the positive term of $\widetilde{L}_{\textup{KCL}}(f)$, we can evaluate that,

\begin{align}
&\;\;\;\;\mathbb{E}_{x,x^{+}}\left[\|h(f(x))-h(f(x^{+}))\|_{\mathcal{H}_{k}}^{2}\right]\nonumber\\
\label{eq:theorem 5.1 exeq1}
&\geq \int_{\bigcup_{i=1}^{K}\mathbb{M}_{i}\times\mathbb{M}_{i}}\|h(f(x))-h(f(x'))\|_{\mathcal{H}_{k}}^{2}w(x,x')d\nu_{\mathbb{X}}(x)d\nu_{\mathbb{X}}(x')\\
&\geq \sum_{i=1}^{K}\int_{\mathbb{M}_{i}\times\mathbb{M}_{i}}\|h(f(x))-h(f(x'))\|_{\mathcal{H}_{k}}^{2}w(x,x')d\nu_{\mathbb{X}}(x)d\nu_{\mathbb{X}}(x')\nonumber\\
\label{eq:theorem 5.1 exeq2}
&\;\;\;\;\;\;\;\;
-\sum_{j\neq i}\int_{(\mathbb{M}_{i}\cap\mathbb{M}_{j})\times(\mathbb{M}_{i}\cap\mathbb{M}_{j})}\|h(f(x))-h(f(x'))\|_{\mathcal{H}_{k}}^{2}w(x,x')d\nu_{\mathbb{X}}(x)d\nu_{\mathbb{X}}(x')\\
\label{eq:thm 2 eq2}
&\geq \sum_{i=1}^{K}\left(\int_{\mathbb{M}_{i}\times\mathbb{M}_{i}}\|h(f(x))-h(f(x'))\|_{\mathcal{H}_{k}}^{2}w(x,x')d\nu_{\mathbb{X}}(x)d\nu_{\mathbb{X}}(x')
-M_{k}\sum_{j\neq i}P_{+}((\mathbb{M}_{i}\cap\mathbb{M}_{j})\times(\mathbb{M}_{i}\cap\mathbb{M}_{j}))\right)
\end{align}
where in the second inequality we use the fact that 
\begin{align*}
Q(\bigcup_{i=1}^{K}\mathbb{M}_{i}\times\mathbb{M}_{i})\geq \sum_{i=1}^{K}Q(\mathbb{M}_{i}\times\mathbb{M}_{i})-\sum_{i\neq j}Q((\mathbb{M}_{i}\times\mathbb{M}_{i})\cap(\mathbb{M}_{j}\times\mathbb{M}_{j})),
\end{align*}
for any probability measure $Q$ in $\mathbb{X}\times\mathbb{X}$, and in the last inequality we use the definition $M_{k}=\sup_{z,z'\in\mathbb{S}^{d-1}}\|k(\cdot,z)-k(\cdot,z')\|_{\mathcal{H}_{k}}^{2}$.
The first term of the above lower bound can be bounded as
\begin{align}
    &\;\;\;\;\sum_{i=1}^{K}\int_{\mathbb{M}_{i}\times\mathbb{M}_{i}}\|h(f(x))-h(f(x'))\|_{\mathcal{H}_{k}}^{2}w(x,x')d\nu_{\mathbb{X}}(x)d\nu_{\mathbb{X}}(x')\nonumber\\
    \label{eq:thm 2 eq3}
    &\geq \sum_{i=1}^{K}\int_{\mathbb{M}_{i}\times\mathbb{M}_{i}}\|h(f(x))-h(f(x'))\|_{\mathcal{H}_{k}}^{2}\cdot(\lambda+\delta)w(x)w(x')d\nu_{\mathbb{X}}(x)d\nu_{\mathbb{X}}(x'),
\end{align}
where we utilize the definition of $\mathbb{M}_{i}$ for each $i\in\{1,\cdots,K\}$;
recall that due to the condition \textbf{(B)} in Assumption~\ref{assumption:mixture of clusters}, for every $x,x'\in\mathbb{M}_{i}$ we have $\textup{sim}(x,x';\lambda)\geq \delta$.

On the other hand, for the negative term we can compute as follows:
\begin{align}
    &\;\;\;\;-\mathbb{E}_{x,x^{-}}\left[\|h(f(x))-h(f(x^{-}))\|_{\mathcal{H}_{k}}^{2}\right]\nonumber\\
    &=-\int_{\bigcup_{i=1}^{K}\mathbb{M}_{i}\times\mathbb{M}_{i}}\|h(f(x))-h(f(x'))\|_{\mathcal{H}_{k}}^{2}w(x)w(x')d\nu_{\mathbb{X}}(x)d\nu_{\mathbb{X}}(x')\nonumber\\
    &\;\;\;\;\;\;\;\;-\int_{\mathbb{X}\times\mathbb{X}\setminus\left(\bigcup_{i=1}^{K}\mathbb{M}_{i}\times\mathbb{M}_{i}\right)}\|h(f(x))-h(f(x'))\|_{\mathcal{H}_{k}}^{2}w(x)w(x')d\nu_{\mathbb{X}}(x)d\nu_{\mathbb{X}}(x')\nonumber \\
    &\geq -\sum_{i=1}^{K}\int_{\mathbb{M}_{i}\times\mathbb{M}_{i}}\|h(f(x))-h(f(x'))\|_{\mathcal{H}_{k}}^{2}w(x)w(x')d\nu_{\mathbb{X}}(x)d\nu_{\mathbb{X}}(x')\nonumber\\
    \label{eq:theorem 5.1 exeq3}
    &\;\;\;\;\;\;\;\;-\int_{\mathbb{X}\times\mathbb{X}\setminus\left(\bigcup_{i=1}^{K}\mathbb{M}_{i}\times\mathbb{M}_{i}\right)}\|h(f(x))-h(f(x'))\|_{\mathcal{H}_{k}}^{2}w(x)w(x')d\nu_{\mathbb{X}}(x)d\nu_{\mathbb{X}}(x'),
\end{align}
where the last inequality is due to the union bound.
For the second term in the right hand side of the inequality above, we have
\begin{align}
    &\;\;\;\;-\int_{\mathbb{X}\times\mathbb{X}\setminus\left(\bigcup_{i=1}^{K}\mathbb{M}_{i}\times\mathbb{M}_{i}\right)}\|h(f(x))-h(f(x'))\|_{\mathcal{H}_{k}}^{2}w(x)w(x')d\nu_{\mathbb{X}}(x)d\nu_{\mathbb{X}}(x')\nonumber\\
    &\geq -\sum_{i\neq j}\int_{\mathbb{M}_{i}\times\mathbb{M}_{j}}\|h(f(x))-h(f(x'))\|_{\mathcal{H}_{k}}^{2}w(x)w(x')d\nu_{\mathbb{X}}(x)d\nu_{\mathbb{X}}(x')\nonumber\\
    \label{eq:theorem 5.1 exeq4}
    &\;\;\;\;\;\;\;\;-\int_{\mathbb{X}\times\mathbb{X}\setminus\left(\bigcup_{i,j=1}^{K}\mathbb{M}_{i}\times\mathbb{M}_{j}\right)}\|h(f(x))-h(f(x'))\|_{\mathcal{H}_{k}}^{2}w(x)w(x')d\nu_{\mathbb{X}}(x)d\nu_{\mathbb{X}}(x') \\
    \label{eq:negative term 1}
    &\geq -\sum_{i\neq j}\int_{\mathbb{M}_{i}\times\mathbb{M}_{j}}\|h(f(x))-h(f(x'))\|_{\mathcal{H}_{k}}^{2}w(x)w(x')d\nu_{\mathbb{X}}(x)d\nu_{\mathbb{X}}(x')-M_{k}P_{\mathbb{X}}^{\otimes 2}\left(\mathbb{X}\times\mathbb{X}\setminus\left(\bigcup_{i,j=1}^{K}\mathbb{M}_{i}\times\mathbb{M}_{j}\right)\right).
\end{align}
Here, the second term of \eqref{eq:negative term 1} vanishes since Assumption~\ref{assumption:mixture of clusters} implies $\mathbb{X}\times\mathbb{X}=\bigcup_{i,j=1}^{K}\mathbb{M}_{i}\times\mathbb{M}_{j}$.
The first term of \eqref{eq:negative term 1} is further lower bounded as,
\begin{align}
    &\;\;\;\;-\sum_{i\neq j}\int_{\mathbb{M}_{i}\times\mathbb{M}_{j}}\|h(f(x))-h(f(x'))\|_{\mathcal{H}_{k}}^{2}w(x)w(x')d\nu_{\mathbb{X}}(x)d\nu_{\mathbb{X}}(x')\nonumber\\
    &\geq 2\sum_{i\neq j}\left\{-\sup_{z\in\mathbb{S}^{d-1}}k(z,z)P_{\mathbb{X}}(\mathbb{M}_{i})P_{\mathbb{X}}(\mathbb{M}_{j})\right.\nonumber\\
    \label{eq:theorem 5.1 exeq5}
    &\;\;\;\;\;\;\;\;\;\;\;\;\;\;\;\;\left.+\int_{\mathbb{M}_{i}\times\mathbb{M}_{j}}\langle h(f(x)),h(f(x'))\rangle_{\mathcal{H}_{k}}w(x)w(x')d\nu_{\mathbb{X}}(x)d\nu_{\mathbb{X}}(x')\right\}\\
    &=2\sum_{i\neq j}\left\{-\sup_{z\in\mathbb{S}^{d-1}}k(z,z)P_{\mathbb{X}}(\mathbb{M}_{i})P_{\mathbb{X}}(\mathbb{M}_{j})+P_{\mathbb{X}}(\mathbb{M}_{i})P_{\mathbb{X}}(\mathbb{M}_{j})\langle \mu_{i}(f),\mu_{j}(f)\rangle_{\mathcal{H}_{k}}\right\}\tag{Lemma~\ref{lem:inner product}}\\
    &=-2\sup_{z\in\mathbb{S}^{d-1}}k(z,z)\sum_{i=1}^{K}P_{\mathbb{X}}(\mathbb{M}_{i})\left(1-P_{\mathbb{X}}(\mathbb{M}_{i})\right)+2\mathfrak{c}(f).\nonumber
\end{align}
Thus for the negative term we obtain the inequality,
\begin{align}
\label{eq:thm 2 eq4}
    &\;\;\;\;-\mathbb{E}_{x,x^{-}}\left[\|h(f(x))-h(f(x^{-}))\|_{\mathcal{H}_{k}}^{2}\right]\nonumber\\
    &\geq -\sum_{i=1}^{K}\int_{\mathbb{M}_{i}\times\mathbb{M}_{i}}\|h(f(x))-h(f(x'))\|_{\mathcal{H}_{k}}^{2}w(x)w(x')d\nu_{\mathbb{X}}(x)d\nu_{\mathbb{X}}(x')\nonumber\\
    &\;\;\;\;\;\;+2\mathfrak{c}(f)-2\sup_{z\in\mathbb{S}^{d-1}}k(z,z)\sum_{i=1}^{K}P_{\mathbb{X}}(\mathbb{M}_{i})\left(1-P_{\mathbb{X}}(\mathbb{M}_{i})\right).
\end{align}

Combining \eqref{eq:thm 2 eq1},\eqref{eq:thm 2 eq2},\eqref{eq:thm 2 eq3}, and \eqref{eq:thm 2 eq4}, we have
\begin{align}
\label{eq:thm 2 eq5}
    &\;\;\;\; L_{\textup{KCL}}(f;\lambda)+(1-\lambda)\psi(1)\nonumber\\
    &\geq \frac{\delta}{2}\sum_{i=1}^{K}\mathbb{E}_{x,x^{-}}\left[\|h(f(x))-h(f(x^{-}))\|_{\mathcal{H}_{k}}^{2};\mathbb{M}_{i}\times\mathbb{M}_{i}\right]+\lambda\mathfrak{c}(f)\nonumber\\
    &\;\;\;\;\;\;-\frac{M_{k}}{2}\sum_{i\neq j}P_{+}((\mathbb{M}_{i}\cap\mathbb{M}_{j})\times(\mathbb{M}_{i}\cap\mathbb{M}_{j}))-\lambda\psi(1)\sum_{i=1}^{K}P_{\mathbb{X}}(\mathbb{M}_{i})\left(1-P_{\mathbb{X}}(\mathbb{M}_{i})\right).
\end{align}
Therefore, we complete the proof.
\end{proof}

\subsection{Proof of Corollary~\ref{cor:equality holds}}
\label{appsubsec:proof of corollary equality holds}

\begin{proof}[Proof of Corollary~\ref{cor:equality holds}]
The proof of Corollary~\ref{cor:equality holds} is completed by checking whether the equality holds in each inequality that appears in the proof of Theorem~\ref{thm:decomposition}.
We list the detail of the checks below:
\begin{itemize}
    \item[\eqref{eq:theorem 5.1 exeq1}:] Since $w(x,x')=0$ for any $(x,x')\in\mathbb{M}_{i}\times\mathbb{M}_{j}$ ($i\neq j$), we have $\int_{\mathbb{M}_{i}\times\mathbb{M}_{j}}\|h(f(x))-h(f(x'))\|_{\mathcal{H}_{k}}^{2}w(x,x')d\nu_{\mathbb{X}}(x)d\nu_{\mathbb{X}}(x')=0$. Here, we have the decomposition $\mathbb{X}\times\mathbb{X}=(\bigcup_{i=1}^{K}\mathbb{M}_{i})\times(\bigcup_{j=1}^{K}\mathbb{M}_{j})=\bigcup_{i,j=1}^{K}\mathbb{M}_{i}\times\mathbb{M}_{j}$, where $(\mathbb{M}_{i}\times\mathbb{M}_{j})\cap(\mathbb{M}_{i'}\times\mathbb{M}_{j'})=\emptyset$ for any $(i,j,i',j')$ such that $i\neq i'$ or $j\neq j'$, from the assumption that $\mathbb{M}_{1},\cdots,\mathbb{M}_{K}$ are disjoint. Hence, using the additivity of a probability measure yields the equality.
    \item[\eqref{eq:theorem 5.1 exeq2}:] Since $\mathbb{M}_{i}\cap\mathbb{M}_{j}=\emptyset$ for $i,j\in[K]$ such that $i\neq j$, the first term in the right-hand-side of \eqref{eq:theorem 5.1 exeq2} is equal to the first term in the left-hand-side of \eqref{eq:theorem 5.1 exeq2}. On the other hand, the second term in the right-hand-side of \eqref{eq:theorem 5.1 exeq2} is equal to 0. Hence, the equality holds.
    \item[\eqref{eq:thm 2 eq2}:] Since the second term of the right-hand-side of \eqref{eq:thm 2 eq2} is 0 under the assumption that $\mathbb{M}_{i}\cap\mathbb{M}_{j}=\emptyset$ for $i,j$ $(i\neq j)$, the equality holds.
    \item[\eqref{eq:thm 2 eq3}:] The equality holds from the assumption that for any $x,x'\in\mathbb{M}_{i}$ ($i\in[K]$), $\textup{sim}(x,x';\lambda)=\delta$ holds. Indeed, this assumption implies that $w(x,x')=(\lambda+\delta)w(x)w(x')$ for any $x,x'\in\mathbb{M}_{i}$.
    \item[\eqref{eq:theorem 5.1 exeq3}:] Since $\mathbb{M}_{1}\times\mathbb{M}_{1},\cdots,\mathbb{M}_{K}\times\mathbb{M}_{K}$ are disjoint, the equality holds.
    \item[\eqref{eq:theorem 5.1 exeq4}:] Since $\bigcup_{i,j=1}^{K}\mathbb{M}_{i}\times\mathbb{M}_{j}=\mathbb{X}\times\mathbb{X}$, the second term of the right-hand-side of \eqref{eq:theorem 5.1 exeq4} is equal to 0. Thus, the equality holds.
    \item[\eqref{eq:negative term 1}:] The equality holds due to the same reason as \eqref{eq:theorem 5.1 exeq4} above.
    \item[\eqref{eq:theorem 5.1 exeq5}:] Since $\|h(f(x))\|_{\mathcal{H}_{k}}^{2}=k(f(x),f(x))=\psi(f(x)^{\top}f(x))=\psi(1)$ for any $x\in\mathbb{X}$ and $f\in\mathcal{F}$, the equality holds.
    \item[\eqref{eq:thm 2 eq4}:] Since \eqref{eq:thm 2 eq4} is the combination of \eqref{eq:theorem 5.1 exeq3}, \eqref{eq:theorem 5.1 exeq4}, \eqref{eq:negative term 1}, and \eqref{eq:theorem 5.1 exeq5}, the equality in \eqref{eq:thm 2 eq4} holds in this case.
    \item[\eqref{eq:thm 2 eq5}:] Since \eqref{eq:thm 2 eq5} is obtained by combining \eqref{eq:thm 2 eq2}, \eqref{eq:thm 2 eq3}, and \eqref{eq:thm 2 eq4}, the equality holds. 
\end{itemize}
Therefore, we obtain the result.
\end{proof}

\section{Proof in Section~\ref{subsec:a surrogate upper bound of NN classification error}}
\label{appsubsec:proof of theorem nn classification error}

We present the proof of a generalized version of Theorem~\ref{thm:guarantee}.
The generalized theorem is presented below.

\begin{theorem}[The generalization of Theorem~\ref{thm:guarantee}]
\label{thm:nn classification error}
Suppose that Assumption~\ref{assumption:kernel} and \ref{assumption:mixture of clusters} hold.
Take $K\in\mathbb{N}$ and $\mathbb{M}_{1},\cdots,\mathbb{M}_{K}$ such that the condition \textbf{\textup{(A)}} in Assumption~\ref{assumption:mixture of clusters} is satisfied.
Let $\widetilde{\mathbb{M}}_{1},\cdots\widetilde{\mathbb{M}}_{K}$ be a disjoint partition of $\mathbb{X}$ satisfying $\widetilde{\mathbb{M}}_{i}\subset\mathbb{M}_{i}$ for each $i\in[K]$.
Define $\widetilde{y}:\mathbb{X}\to[K]$ as $\widetilde{y}(x)=i$ for every $x\in\widetilde{\mathbb{M}}_{i}$.
Then, for each meaningful encoder $f\in\mathcal{F}$, we have
\begin{align*}
    L_{\textup{Err}}(f, W_{\mu},\beta_{\mu};\widetilde{y})\leq 
    \frac{8(K-1)}{\Delta_{\textup{min}}(f)\cdot\min_{i\in[K]}P_{\mathbb{X}}(\mathbb{M}_{i})}\mathfrak{a}(f)
\end{align*}
where $\Delta_{\textup{min}}(f)=\min_{i\neq j}\|\mu_{i}(f)-\mu_{j}(f)\|_{\mathcal{H}_{k}}^{2}$.
\end{theorem}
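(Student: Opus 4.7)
The plan is to reduce the misclassification event to a quantitative RKHS deviation via the triangle inequality, and then convert the resulting pointwise deviation into the pairwise quantity $\mathfrak{a}(f)$ using Jensen's inequality and Markov's inequality. First I would use the identification $g_{f,W_{\mu},\beta_{\mu}}(x)=\textup{arg~min}_{i\in[K]}\|h(f(x))-\mu_{i}(f)\|_{\mathcal{H}_{k}}$ already derived after Definition~\ref{def:nearest neighbor classifier} (which relies on $\|h(z)\|_{\mathcal{H}_{k}}^{2}=\psi(1)$ under Assumption~\ref{assumption:kernel}). For $x\in\widetilde{\mathbb{M}}_{i}$, the event $\{g_{f,W_{\mu},\beta_{\mu}}(x)\neq\widetilde{y}(x)\}$ then forces the existence of some $j\neq i$ satisfying $\|h(f(x))-\mu_{j}(f)\|_{\mathcal{H}_{k}}\leq\|h(f(x))-\mu_{i}(f)\|_{\mathcal{H}_{k}}$, and the triangle inequality yields
\[
\sqrt{\Delta_{\min}(f)}\leq\|\mu_{i}(f)-\mu_{j}(f)\|_{\mathcal{H}_{k}}\leq 2\|h(f(x))-\mu_{i}(f)\|_{\mathcal{H}_{k}},
\]
so $\|h(f(x))-\mu_{i}(f)\|_{\mathcal{H}_{k}}^{2}\geq\Delta_{\min}(f)/4$ on the misclassification event. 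Meaningfulness of $f$ is exactly what guarantees $\Delta_{\min}(f)>0$, making this lower bound non-trivial.

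Next I would take a union bound over the $K-1$ alternatives $j\neq i$ and apply Markov's inequality to obtain
\[
P_{\mathbb{X}}\!\left(\{g_{f,W_{\mu},\beta_{\mu}}\neq\widetilde{y}\}\cap\widetilde{\mathbb{M}}_{i}\right)\leq\frac{4(K-1)}{\Delta_{\min}(f)}\,\mathbb{E}\!\left[\|h(f(x))-\mu_{i}(f)\|_{\mathcal{H}_{k}}^{2};\widetilde{\mathbb{M}}_{i}\right],
\]
and then enlarge the domain of integration to $\mathbb{M}_{i}$, using $\widetilde{\mathbb{M}}_{i}\subset\mathbb{M}_{i}$. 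The crucial conversion step invokes Jensen's inequality for the Bochner integral defining $\mu_{i}(f)=\mathbb{E}[h(f(x'))\mid\mathbb{M}_{i}]$:
\[
\|h(f(x))-\mu_{i}(f)\|_{\mathcal{H}_{k}}^{2}=\left\|\mathbb{E}_{x'|\mathbb{M}_{i}}\!\left[h(f(x))-h(f(x'))\right]\right\|_{\mathcal{H}_{k}}^{2}\leq\mathbb{E}_{x'|\mathbb{M}_{i}}\|h(f(x))-h(f(x'))\|_{\mathcal{H}_{k}}^{2}.
\]
Integrating against $w(x)\mathbbm{1}_{\mathbb{M}_{i}}(x)d\nu_{\mathbb{X}}(x)$ and using $P_{\mathbb{X}}(dx'|\mathbb{M}_{i})=w(x')d\nu_{\mathbb{X}}(x')/P_{\mathbb{X}}(\mathbb{M}_{i})$ then gives
\[
\mathbb{E}\!\left[\|h(f(x))-\mu_{i}(f)\|_{\mathcal{H}_{k}}^{2};\mathbb{M}_{i}\right]\leq\frac{1}{P_{\mathbb{X}}(\mathbb{M}_{i})}\,\mathbb{E}_{x,x^{-}}\!\left[\|h(f(x))-h(f(x^{-}))\|_{\mathcal{H}_{k}}^{2};\mathbb{M}_{i}\times\mathbb{M}_{i}\right].
\]

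Summing over $i\in[K]$, bounding each $P_{\mathbb{X}}(\mathbb{M}_{i})^{-1}$ by $(\min_{i}P_{\mathbb{X}}(\mathbb{M}_{i}))^{-1}$, and invoking the definition of $\mathfrak{a}(f)$ will yield the stated upper bound with a leading constant of the announced order (the precise value $8(K-1)$ arises from a slightly looser routing of the Jensen/Markov constants than the tightest version sketched above). Theorem~\ref{thm:guarantee} then follows as a special case by choosing $\widetilde{\mathbb{M}}_{i}=\{x\in\mathbb{X}:y(x)=i\}$, which is a measurable disjoint partition of $\mathbb{X}$ contained in $\mathbb{M}_{i}$ thanks to condition \textbf{(C)}.

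The hardest part is the geometric step converting the 1-NN misclassification event to the quantitative deviation $\Delta_{\min}(f)/4$, together with a careful accounting of $P_{\mathbb{X}}(\cdot\mid\mathbb{M}_{i})$ versus $P_{\mathbb{X}}(\cdot;\mathbb{M}_{i})$ when passing from the pointwise deviation to the pairwise expression defining $\mathfrak{a}(f)$, since $\mu_{i}(f)$ is built from the conditional distribution on $\mathbb{M}_{i}$ rather than on $\widetilde{\mathbb{M}}_{i}$. The Bochner-valued Jensen step itself is routine, resting on the integrability of $h\circ f$ guaranteed by the boundedness of $k$ under Assumption~\ref{assumption:kernel}, exactly as used in Proposition~\ref{prop:kernel mean embedding}.
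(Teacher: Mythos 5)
Your argument is correct and takes a genuinely simpler route than the paper's, so let me compare. The paper's proof projects $h(f(x))$ onto the line through $\mu_i(f)$ and $\mu_j(f)$, splits the misclassification event $\mathbb{D}$ into two sub-events $\mathbb{D}_1,\mathbb{D}_2$ according to whether $\|\pi_{ij}(h(f(x)))-\mu_j(f)\|_{\mathcal{H}_k}\le\tfrac12\Delta_{ij}^{1/2}$, handles each case by a separate triangle/Markov argument, and adds the two bounds — hence the factor $8$. You instead observe directly that $\|h(f(x))-\mu_j(f)\|\le\|h(f(x))-\mu_i(f)\|$ already forces $\|h(f(x))-\mu_i(f)\|\ge\tfrac12\|\mu_i(f)-\mu_j(f)\|$ by the triangle inequality, so no projection or case split is needed; a single Markov application then gives the factor $4$. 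If you further note that each event $\{\|h(f(x))-\mu_j(f)\|\le\|h(f(x))-\mu_i(f)\|\}$ for $j\ne i$ is contained in the single event $\{\|h(f(x))-\mu_i(f)\|^2\ge\Delta_{\min}(f)/4\}$, you can also drop the union bound over $j$ and get the constant $4$ rather than $4(K-1)$, which strictly improves the $8(K-1)$ in the theorem. The containment $\mathbb{A}\subset\mathbb{D}$ via tie-breaking, the domain enlargement $\widetilde{\mathbb{M}}_i\subset\mathbb{M}_i$, and the Bochner--Jensen step converting $\mathbb{E}[\|h(f(x))-\mu_i(f)\|^2;\mathbb{M}_i]$ into $P_{\mathbb{X}}(\mathbb{M}_i)^{-1}\mathbb{E}_{x,x^-}[\|h(f(x))-h(f(x^-))\|^2;\mathbb{M}_i\times\mathbb{M}_i]$ coincide with the paper's; the specialization to Theorem~\ref{thm:guarantee} via $\widetilde{\mathbb{M}}_i=\{x:y(x)=i\}$ under condition~\textbf{(C)} is also the same. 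In short: same skeleton (event containment, triangle inequality, Markov, Jensen), but a cleaner geometric step that removes the projection machinery and tightens the constant.
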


\begin{proof}[Proof of Theorem~\ref{thm:nn classification error}]
From the definition, we have $\widetilde{\mathbb{M}}_{i}\subset\mathbb{M}_{i}$ and $\widetilde{\mathbb{M}}_{i}\cap\widetilde{\mathbb{M}}_{j}=\emptyset$ for all the pairs of distinct indices $i,j\in[K]$.
Let us recall the definition of $L_{\textup{Err}}(f,W_{\mu},\beta_{\mu};\widetilde{y})$:
\begin{align*}
L_{\textup{err}}(f,W_{\mu},\beta_{\mu};\widetilde{y})=P_{\mathbb{X}}\left(g_{f,W_{\mu},\beta_{\mu}}(x)\neq \widetilde{y}(x)\right).    
\end{align*}
Here recall that we let $\textup{arg~max}, \textup{arg~min}$ also breaks tie arbitrary.
For instance, if there are distinct integers $i_{1},\cdots,i_{j}\in[K]$ such that $g_{f,W_{\mu},\beta_{\mu}}(x)=\{i_{1},\cdots,i_{j}\}$, then we define $g_{f,W_{\mu},\beta_{\mu}}(x)=\widetilde{y}(x)$ if $\widetilde{y}(x)\in\{i_{1},\cdots,i_{j}\}$, and $g_{f,W_{\mu},\beta_{\mu}}(x)=i_{1}$ if $\widetilde{y}(x)\notin\{i_{1},\cdots,i_{j}\}$.
The event $\mathbb{A}:=\{x\;|\;g_{f,W_{\mu},\beta_{\mu}}(x)\neq \widetilde{y}(x)\}=\{x\;|\;g_{f,W_{\mu},\beta_{\mu}}(x)\neq \widetilde{y}(x)\}\cap\bigcup_{i=1}^{K}\widetilde{\mathbb{M}}_{i}\subset\mathbb{X}$ is a subset of the event $\mathbb{D}:=\bigcup_{i=1}^{K}\bigcup_{j\neq i}\{x\;|\;\|h(f(x))-\mu_{i}(f)\|_{\mathcal{H}_{k}}\geq \|h(f(x))-\mu_{j}(f)\|_{\mathcal{H}_{k}}\}\cap\widetilde{\mathbb{M}}_{i}$, since
\begin{align}
    &\qquad\quad x\in\mathbb{A}\nonumber\\
    &\iff \textup{arg~min}_{i\in[K]}\|h(f(x))-\mu_{i}(f)\|_{\mathcal{H}_{k}}\neq \widetilde{y}(x)\quad\textup{and}\quad x\in\bigcup_{i=1}^{K}\widetilde{\mathbb{M}}_{i}\tag{def. of $g_{f,W_{\mu},\beta_{\mu}}$ and $g_{\textup{1-NN}}$}\\
    &\iff x\in \bigcup_{j\neq \widetilde{y}(x)} \{x'\;|\;\|h(f(x'))-\mu_{\widetilde{y}(x)}(h_{f})\|_{\mathcal{H}_{k}}\geq\|h(f(x'))-\mu_{j}(f)\|_{\mathcal{H}_{k}}\}\nonumber\\
    &\qquad\qquad\qquad\qquad\qquad\qquad\qquad\qquad\qquad\qquad\qquad\qquad\qquad\quad\textup{and}\quad x\in\bigcup_{i=1}^{K}\widetilde{\mathbb{M}}_{i}\nonumber\\
    &\iff x\in\bigcup_{j\neq \widetilde{y}(x)}\{x'\;|\;\|h(f(x'))-\mu_{\widetilde{y}(x)}(h_{f})\|_{\mathcal{H}_{k}}\geq \|h(f(x'))-\mu_{j}(f)\|_{\mathcal{H}_{k}}\}\cap\widetilde{\mathbb{M}}_{\widetilde{y}(x)}\nonumber\\
    &\;\;\Longrightarrow x\in\bigcup_{i=1}^{K}\bigcup_{j\neq i}\{x'\;|\;\|h(f(x'))-\mu_{i}(f)\|_{\mathcal{H}_{k}}\geq \|h(f(x'))-\mu_{j}(f)\|_{\mathcal{H}_{k}}\}\cap\widetilde{\mathbb{M}}_{i}=\mathbb{D}.\nonumber
\end{align}

Define $\mathbb{L}_{ij}:=\{c(\mu_{j}(f)-\mu_{i}(f))\;|\;c\in\mathbb{R}\}\subset\mathcal{H}_{k}$ for every $i,j\in[K], i\neq j$. 
Since each $\mathbb{L}_{ij}$ is a closed subspace of $\mathcal{H}_{k}$, for every $z\in\mathcal{H}_{k}$ there exists some $z_{1}\in\mathbb{L}_{ij}$ and $z_{2}\in\mathbb{L}_{ij}^{\perp}$ (where $\mathbb{L}_{ij}^{\perp}$ is the orthogonal complement space of $\mathbb{L}_{ij}$) such that $z$ admits the unique decomposition $z=z_{1}+z_{2}$.
Here define the projection $\widetilde{\pi}_{ij}:\mathcal{H}_{k}\to\mathbb{L}_{ij}$ as $\widetilde{\pi}_{ij}(z)=z_{1}$, and define the shifted projection $\pi_{ij}$ as
$
    \pi_{ij}:\mathcal{H}_{k}\to\mathcal{H}_{k}, \pi_{ij}(z):=\widetilde{\pi}_{ij}(z-\mu_{i}(f))+\mu_{i}(f).
$
From the definition, we have that $\|\pi_{ij}(z)-\mu_{i}(f)\|_{\mathcal{H}_{k}}\leq\|z-\mu_{i}(f)\|_{\mathcal{H}_{k}}$ and $\|\pi_{ij}(z)-\mu_{j}(f)\|_{\mathcal{H}_{k}}\leq\|z-\mu_{j}(f)\|_{\mathcal{H}_{k}}$.

Hereafter, we use the abbreviation $\Delta_{ij}:=\Delta_{ij}(f)=\|\mu_{i}(f)-\mu_{j}(f)\|_{\mathcal{H}_{k}}^{2}$ for the sake of convenience.
Using $\pi_{ij}$, $i,j\in[K], i\neq j$, the event $\mathbb{D}$ can be decomposed into,
\begin{align*}
    \mathbb{D}&=\underbrace{\left(\mathbb{D}\cap\left(\bigcup_{i=1}^{K}\bigcup_{j\neq i}\left\{x\;|\;\|\pi_{ij}(h(f(x)))-\mu_{j}(f)\|_{\mathcal{H}_{k}}\leq \frac{1}{2}\Delta_{ij}^{\frac{1}{2}}\right\}\cap\widetilde{\mathbb{M}}_{i}\right)\right)}_{=\;\mathbb{D}_{1}}\\
    &\;\;\;\;\;\;\cup
    \underbrace{\left(
    \mathbb{D}\cap\left(\bigcup_{i=1}^{K}\bigcup_{j\neq i}\left\{x\;|\;\|\pi_{ij}(h(f(x)))-\mu_{j}(f)\|_{\mathcal{H}_{k}}\leq \frac{1}{2}\Delta_{ij}^{\frac{1}{2}}\right\}\cap\widetilde{\mathbb{M}}_{i}\right)^{c}
    \right)}_{=\;\mathbb{D}_{2}}.
\end{align*}
For $\mathbb{D}_{1}$, we have
\begin{align}
    &\;\;\;\;P_{\mathbb{X}}(\mathbb{D}_{1})\nonumber\\
    &\leq
    P_{\mathbb{X}}\left(\bigcup_{i=1}^{K}\bigcup_{j\neq i}\left\{x\;|\;\|\pi_{ij}(h(f(x)))-\mu_{j}(f)\|_{\mathcal{H}_{k}}\leq \frac{1}{2}\Delta_{ij}^{\frac{1}{2}}\right\}\cap\widetilde{\mathbb{M}}_{i}\right)\nonumber\\
    &\leq \sum_{i=1}^{K}\sum_{j\neq i}P_{\mathbb{X}}\left(\left\{x\;|\;\|\pi_{ij}(h(f(x)))-\mu_{j}(f)\|_{\mathcal{H}_{k}}\leq \frac{1}{2}\Delta_{ij}^{\frac{1}{2}}\right\}\cap\widetilde{\mathbb{M}}_{i}\right)\tag{the union bound}\\
    &\leq \sum_{i=1}^{K}\sum_{j\neq i}P_{\mathbb{X}}\left(\left\{x\;|\;-\|\pi_{ij}(h(f(x)))-\mu_{i}(f)\|_{\mathcal{H}_{k}}+\|\mu_{i}(f)-\mu_{j}(f)\|_{\mathcal{H}_{k}}\leq \frac{1}{2}\Delta_{ij}^{\frac{1}{2}}\right\}\cap\widetilde{\mathbb{M}}_{i}\right)\tag{triangle ineq.}\\
    &=\sum_{i=1}^{K}\sum_{j\neq i}P_{\mathbb{X}}\left(\left\{x\;|\;\|\pi_{ij}(h(f(x)))-\mu_{i}(f)\|_{\mathcal{H}_{k}}\geq \frac{1}{2}\Delta_{ij}^{\frac{1}{2}}\right\}\cap\widetilde{\mathbb{M}}_{i}\right)\nonumber\\
    &\leq \sum_{i=1}^{K}\sum_{j\neq i}\frac{4}{\Delta_{ij}}\mathbb{E}\left[\|\pi_{ij}(h(f(x)))-\mu_{i}(f)\|_{\mathcal{H}_{k}}^{2};\widetilde{\mathbb{M}}_{i}\right]\tag{Markov's ineq.}\\
    &\leq \sum_{i=1}^{K}\sum_{j\neq i}\frac{4}{\Delta_{ij}}\mathbb{E}\left[\|h(f(x))-\mu_{i}(f)\|_{\mathcal{H}_{k}}^{2};\widetilde{\mathbb{M}}_{i}\right]\tag{def. of $\pi_{ij}$}\\
    &\leq \sum_{i=1}^{K}\sum_{j\neq i}\frac{4}{\Delta_{ij}}\mathbb{E}\left[\|h(f(x))-\mu_{i}(f)\|_{\mathcal{H}_{k}}^{2};\mathbb{M}_{i}\right].\tag{def. of $\widetilde{\mathbb{M}_{i}}$}
\end{align}

For $\mathbb{D}_{2}$, we note that we can rewrite as,
\begin{align}
    &\;\;\;\;P_{\mathbb{X}}(\mathbb{D}_{2})\nonumber\\
    &= P_{\mathbb{X}}\left(\mathbb{D}\cap\left(\bigcup_{i=1}^{K}\bigcup_{j\neq i}\left\{x\;|\;\|\pi_{ij}(h(f(x)))-\mu_{j}(f)\|_{\mathcal{H}_{k}}\leq \frac{1}{2}\Delta_{ij}^{\frac{1}{2}}\right\}\cap\widetilde{\mathbb{M}}_{i}\right)^{c}\right)\nonumber\\
    &= P_{\mathbb{X}}\left(\left(\bigcup_{i=1}^{K}\bigcup_{j\neq i}\{x\;|\;\|h(f(x))-\mu_{i}(f)\|_{2}\geq \|h(f(x))-\mu_{j}(f)\|_{2}\}\cap\widetilde{\mathbb{M}}_{i}\right)\right.
    \cap\nonumber\\
    &\;\;\;\;\;\;\;\;\;\;\;\;\;\;\;\;\;\;\;\;\;\;\;\;\;\;\;\;\;\;
    \left.\left(\bigcap_{i=1}^{K}\bigcap_{j\neq i}\left\{x\;|\;\|\pi_{ij}(h(f(x)))-\mu_{j}(f)\|_{\mathcal{H}_{k}}> \frac{1}{2}\Delta_{ij}^{\frac{1}{2}}\right\}\cup\widetilde{\mathbb{M}}_{i}^{c}
    \right)\right)\nonumber\\
    &= P_{\mathbb{X}}\left(\bigcup_{i=1}^{K}\bigcup_{j\neq i}\bigcap_{i'=1}^{K}\bigcap_{j'\neq i'}\left(
    \{x\;|\;\|h(f(x))-\mu_{i}(f)\|_{2}\geq \|h(f(x))-\mu_{j}(f)\|_{2}\}\cap\widetilde{\mathbb{M}}_{i}\cap\right.\right.\nonumber\\
    &\;\;\;\;\;\;\;\;\;\;\;\;\;\;\;\;\;\;\;\;\;\;\;\;\;\;\;\;\;\;\;
    \left.
    \left(
    \left\{x\;|\;\|\pi_{i'j'}(h(f(x)))-\mu_{j'}(h_{f})\|_{\mathcal{H}_{k}}> \frac{1}{2}\Delta_{i'j'}^{\frac{1}{2}}\right\}\cup\widetilde{\mathbb{M}}_{i'}^{c}
    \right)\right)\Bigg)\nonumber
\end{align}
By using above, we have
\begin{align}
    &\;\;\;\;P_{\mathbb{X}}(\mathbb{D}_{2})\nonumber\\
    \label{eq:theorem 2 eq1}
    &\leq P_{\mathbb{X}}\left(\bigcup_{i=1}^{K}\bigcup_{j\neq i}
    \left(
    \{x\;|\;\|h(f(x))-\mu_{i}(f)\|_{\mathcal{H}_{k}}\geq \|h(f(x))-\mu_{j}(f)\|_{\mathcal{H}_{k}}\right.\right.\nonumber\\
    &\;\;\;\;\;\;\;\;\;\;\;\;\;\;\;\;\;\;\;\;\;\;\;\;\;\;\;\;\textup{ and }\|\pi_{ij}(h(f(x)))-\mu_{j}(f)\|_{\mathcal{H}_{k}}>\frac{1}{2}\Delta_{ij}^{\frac{1}{2}}\}\cap\widetilde{\mathbb{M}}_{i}
    )
    \Bigg)\\
    &\leq P_{\mathbb{X}}\left(\bigcup_{i=1}^{K}\bigcup_{j\neq i}
    \left(
    \{x\;|\;\|h(f(x))-\mu_{i}(f)\|_{\mathcal{H}_{k}}\geq \frac{1}{2}\Delta_{ij}^{\frac{1}{2}}\}\cap\widetilde{\mathbb{M}}_{i}
    \right)
    \right)\tag{def. of $\pi_{ij}$}\\
    &\leq \sum_{i=1}^{K}\sum_{j\neq i}P_{\mathbb{X}}\left(
    \{x\;|\;\|h(f(x))-\mu_{i}(f)\|_{\mathcal{H}_{k}}\geq \frac{1}{2}\Delta_{ij}^{\frac{1}{2}}\}\cap\widetilde{\mathbb{M}}_{i}
    \right)\tag{the union bound}\\
    &\leq \sum_{i=1}^{K}\sum_{j\neq i}\frac{4}{\Delta_{ij}}\mathbb{E}\left[\|\pi_{ij}(h(f(x)))-\mu_{i}(f)\|_{\mathcal{H}_{k}}^{2};\widetilde{\mathbb{M}}_{i}\right]\tag{Markov's ineq.}\\
    &\leq \sum_{i=1}^{K}\sum_{j\neq i}\frac{4}{\Delta_{ij}}\mathbb{E}\left[\|h(f(x))-\mu_{i}(f)\|_{\mathcal{H}_{k}}^{2};\widetilde{\mathbb{M}}_{i}\right]\tag{def. of $\pi_{ij}$}\\
    &\leq \sum_{i=1}^{K}\sum_{j\neq i}\frac{4}{\Delta_{ij}}\mathbb{E}\left[\|h(f(x))-\mu_{i}(f)\|_{\mathcal{H}_{k}}^{2};\mathbb{M}_{i}\right]. \tag{def. of $\widetilde{\mathbb{M}_{i}}$}
\end{align}
Here, let us show \eqref{eq:theorem 2 eq1}.
First let us fix $i,j\in[K]$, where $i\neq j$.
For $i',j'\in[K]$ satisfying $i'\neq j'$, we consider the following two cases.
\begin{itemize}
    \item If $i'=i$ and $j'=j$, then $\widetilde{\mathbb{M}}_{i}\cap \widetilde{\mathbb{M}}_{i}^{c}=\emptyset$, which implies
    \begin{align*}
    &\hskip -1em
    \left\{x\;|\;\|h(f(x))-\mu_{i}(f)\|_{2}\geq \|h(f(x))-\mu_{j}(f)\|_{2}\right\}\cap\widetilde{\mathbb{M}}_{i}\cap
    \left(
    \left\{x\;|\;\|\pi_{i'j'}(h(f(x)))-\mu_{j'}(h_{f})\|_{\mathcal{H}_{k}}> \frac{1}{2}\Delta_{i'j'}^{\frac{1}{2}}\right\}\cup\widetilde{\mathbb{M}}_{i'}^{c}
    \right)\\
    &=\left\{x\;|\;\|h(f(x))-\mu_{i}(f)\|_{\mathcal{H}_{k}}\geq \|h(f(x))-\mu_{j}(f)\|_{\mathcal{H}_{k}}\textup{ and }\|\pi_{ij}(h(f(x)))-\mu_{j}(f)\|_{\mathcal{H}_{k}}>\frac{1}{2}\Delta_{ij}^{\frac{1}{2}}\right\}\cap\widetilde{\mathbb{M}}_{i}.
    \end{align*}
    \item if $i'\neq i$ or $j'\neq j$, then
    \begin{align*}
        &\hskip -1em
        \{x\;|\;\|h(f(x))-\mu_{i}(f)\|_{2}\geq \|h(f(x))-\mu_{j}(f)\|_{2}\}\cap\widetilde{\mathbb{M}}_{i}\cap
    \left(
    \left\{x\;|\;\|\pi_{i'j'}(h(f(x)))-\mu_{j'}(h_{f})\|_{\mathcal{H}_{k}}> \frac{1}{2}\Delta_{i'j'}^{\frac{1}{2}}\right\}\cup\widetilde{\mathbb{M}}_{i'}^{c}
    \right)\\
    &\subset\widetilde{\mathbb{M}}_{i}.
    \end{align*}
\end{itemize}
Thus,
\begin{align*}
    &\;\;\;\;\bigcap_{i'=1}^{K}\bigcap_{j'\neq i'}\bigg(
    \{x\;|\;\|h(f(x))-\mu_{i}(f)\|_{2}\geq \|h(f(x))-\mu_{j}(f)\|_{2}\}\\
    &\;\;\;\;\;\;\;\;\;\;\;\;\;\;\;\;\;\;\;\;\;\;\left.\cap\widetilde{\mathbb{M}}_{i}\cap
    \left(
    \left\{x\;|\;\|\pi_{i'j'}(h(f(x)))-\mu_{j'}(h_{f})\|_{\mathcal{H}_{k}}> \frac{1}{2}\Delta_{i'j'}^{\frac{1}{2}}\right\}\cup\widetilde{\mathbb{M}}_{i'}^{c}
    \right)\right)\\
    &\subset \{x\;|\;\|h(f(x))-\mu_{i}(f)\|_{\mathcal{H}_{k}}\geq \|h(f(x))-\mu_{j}(f)\|_{\mathcal{H}_{k}}\textup{ and }\|\pi_{ij}(h(f(x)))-\mu_{j}(f)\|_{\mathcal{H}_{k}}>\frac{1}{2}\Delta_{ij}^{\frac{1}{2}}\}\cap\widetilde{\mathbb{M}}_{i}.
\end{align*}

By combining all the results, we obtain
\begin{align}
    P_{\mathbb{X}}(\mathbb{A})&\leq P_{\mathbb{X}}(\mathbb{D})\nonumber\\
    &\leq P_{\mathbb{X}}(\mathbb{D}_{1})+P_{\mathbb{X}}(\mathbb{D}_{2})\nonumber\\
    &\leq \sum_{i=1}^{K}\sum_{j\neq i}\frac{8}{\Delta_{ij}}\mathbb{E}\left[\|h(f(x))-\mu_{i}(f)\|_{\mathcal{H}_{k}}^{2};\mathbb{M}_{i}\right]\nonumber\\
    &\leq \frac{8(K-1)}{\Delta_{\textup{min}}(f)}\sum_{i=1}^{K}\mathbb{E}[\|h(f(x))-\mu_{i}(f)\|_{\mathcal{H}_{k}}^{2};\mathbb{M}_{i}]\nonumber\\
    &\leq \frac{8(K-1)}{\Delta_{\textup{min}}(f)}\sum_{i=1}^{K}\frac{1}{P_{\mathbb{X}}(\mathbb{M}_{i})}\mathbb{E}_{x,x^{-}}[\|h(f(x))-h(f(x^{-}))\|_{\mathcal{H}_{k}}^{2};\mathbb{M}_{i}\times\mathbb{M}_{i}]\tag{Jensen's inequality}\\
    &\leq \frac{8(K-1)}{\Delta_{\textup{min}}(f)\cdot\min_{i\in[K]}P_{\mathbb{X}}(\mathbb{M}_{i})}\mathfrak{a}(f),\nonumber
\end{align}
and we complete the proof.
\end{proof}

\begin{proof}[Proof of Theorem~\ref{thm:guarantee}]
From the definition of $y$, it is guaranteed that the sets $\{x\in\mathbb{X}\;|\;y(x)=i\}$ for $i=1,\cdots,K$ are disjoint and satisfy the relation $\{x\in\mathbb{X}\;|\;y(x)=i\}\subseteq \mathbb{M}_{i}$ for every $i\in[K]$.
Thus, Theorem~\ref{thm:nn classification error} can apply to this case, and we obtain the result.
\end{proof}

\section{Proofs in Section~\ref{subsec:rethinking generalization of contrastive learning}}
\label{appsec:proofs in section 4.3}

\subsection{Proof of Theorem~\ref{lem:ulln}}
\label{appsubsec:proofs of lemma and theorem in section 4.3}

First, we prove Theorem~\ref{lem:ulln}.
Before that, we present the following theorem, which is a part of the proof of Theorem~\ref{lem:ulln}.

\begin{theorem}
\label{thm:dependent ulln}
Let $(X_{1},X_{1}'),\cdots,(X_{n},X_{n}')$ be random variables introduced in Section~\ref{subsec:rethinking generalization of contrastive learning}.
Suppose that Assumption~\ref{assumption:kernel} holds, and suppose that $n$ is even.
Then, with probability at least $1-\varepsilon$, the following inequality holds:
  \begin{align*}
      \sup_{f\in\mathcal{F}}\left(-\frac{1}{n(n-1)}\sum_{i\neq j}k(f(X_{i}),f(X_{j}'))+\mathbb{E}_{X,X^{-}}\left[k(f(X),f(X^{-}))\right]\right)\leq 2\rho\mathfrak{R}_{n/2}^{-}(\mathcal{Q};s^{*})+\sqrt{\frac{10b^{2}\log\left(1/\varepsilon\right)}{n}},
  \end{align*}
where we define $\mathfrak{R}_{n/2}^{-}(\mathcal{Q};s^{*})$ with the symmetric group $S_{n}$ of degree $n$: 
\begin{align*}
    \mathfrak{R}_{n/2}^{-}(\mathcal{Q};s^{*}):=\max_{s\in S_{n}}\mathbb{E}_{\substack{X,X'\\\sigma_{1:(n/2)}}}\left[\sup_{f\in\mathcal{F}}\frac{2}{n}\sum_{i=1}^{n/2}\sigma_{i}f(X_{s(2i-1)})^{\top}f(X_{s(2i)}')\right].
\end{align*}
\end{theorem}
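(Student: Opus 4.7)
The plan is to reduce the ``dependent U-statistic'' $U_n(f) := \frac{1}{n(n-1)}\sum_{i\neq j} k(f(X_i), f(X_j'))$ to an average of sums of independent blocks via the Hoeffding--Cl\'emen\c{c}on representation, and then apply the standard symmetrization + contraction + McDiarmid pipeline to a single block. For each permutation $s \in S_n$, recall $T_s(f) := \frac{2}{n}\sum_{i=1}^{n/2} k(f(X_{s(2i-1)}), f(X_{s(2i)}'))$. A direct counting argument, noting that each ordered pair $(a,b)$ with $a\neq b$ occupies a consecutive slot $(s(2i-1), s(2i))$ in exactly $(n/2)(n-2)!$ permutations, yields the identity $U_n(f) = (n!)^{-1}\sum_{s \in S_n} T_s(f)$. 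Within any single block, the two indices $s(2i-1) \neq s(2i)$ come from different input pairs, so $X_{s(2i-1)}$ and $X_{s(2i)}'$ are independent with common marginal $w$; across blocks the pair-indices are disjoint, so the $n/2$ summands of $T_s$ are mutually independent with common expectation $\mathbb{E}_{X,X^{-}}[k(f(X), f(X^{-}))]$.

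To lift concentration from $T_s$ to $U_n$, I would note that since the supremum is convex, $\sup_f (\mathbb{E}[k] - U_n(f)) \leq (n!)^{-1}\sum_s \sup_f (\mathbb{E}[k] - T_s(f))$; composing with $e^{\lambda \cdot}$ and applying Jensen once more, together with the exchangeability of $(X_i, X_i')_{i=1}^n$ under the permutation action, gives $\mathbb{E}[e^{\lambda \sup_f (\mathbb{E}[k] - U_n(f))}] \leq \mathbb{E}[e^{\lambda \sup_f (\mathbb{E}[k] - T_{s_0}(f))}]$ for any fixed $s_0$. Hence any Markov-type upper-tail bound for $\sup_f (\mathbb{E}[k] - T_{s_0}(f))$ transfers verbatim. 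Since $T_{s_0}$ is an average of $n/2$ independent bounded block-variables, McDiarmid (or the graph-dependent variant of \citet{zhang2019mcdiarmid} applied to the underlying pair-variables directly) yields, with probability at least $1-\varepsilon$, $\sup_f (\mathbb{E}[k] - T_{s_0}(f)) \leq \mathbb{E}[\sup_f (\mathbb{E}[k] - T_{s_0}(f))] + O(b\sqrt{\log(1/\varepsilon)/n})$ where $b := \sup_{z,z'} |k(z,z')|$. Standard symmetrization on the $n/2$ independent blocks then dominates the expectation by $2\,\mathbb{E}_{\sigma}[\sup_f \frac{2}{n}\sum_{i=1}^{n/2} \sigma_i k(f(X_{s_0(2i-1)}), f(X_{s_0(2i)}'))]$; Assumption~\ref{assumption:kernel} writes $k(z,z') = \psi(z^\top z')$ with $\rho$-Lipschitz $\psi$, so Talagrand's contraction lemma strips $\psi$ at the cost of a factor $\rho$, and the surviving Rademacher term is dominated by $\mathfrak{R}_{n/2}^{-}(\mathcal{Q}; s^{*})$ (which already takes the max over $s \in S_n$). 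Tracking constants through the McDiarmid and symmetrization steps produces the claimed $2\rho \mathfrak{R}_{n/2}^{-}(\mathcal{Q}; s^{*}) + \sqrt{10 b^2 \log(1/\varepsilon)/n}$ bound.

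The principal obstacle is the within-pair dependence of $X_i$ and $X_i'$, which forbids a naive application of classical McDiarmid and symmetrization to the $2n$ individual samples. The Cl\'emen\c{c}on permutation decomposition circumvents this by ensuring every block of $T_s$ couples samples drawn from \emph{distinct} input pairs, restoring the independence structure required for symmetrization and the contraction lemma; the apparent price of averaging over $n!$ permutations is paid for free by the MGF-Jensen transfer, so no loss is incurred in the final rate.
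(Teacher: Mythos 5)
Your proposal is correct and reaches the stated bound, but it uses a genuinely different route in the concentration step, and the reversal of steps is interesting enough to note. The paper first applies the graph-dependent McDiarmid inequality of Zhang et al.~(2019) directly to $\sup_{f}F(f)$ as a function of the full $2n$-vector $(X_{1},X_{1}',\dots,X_{n},X_{n}')$; this requires constructing the dependency graph $G_{n}$, establishing the forest-complexity bound $\Lambda(G_{n})\le 5n$, and verifying the per-coordinate bounded difference $2b/n$, which together yield the deviation term $\sqrt{10b^{2}\log(1/\varepsilon)/n}$. Only \emph{after} this concentration step does the paper invoke the Cl\'emen\c{c}on-style permutation decomposition to control $\mathbb{E}[\sup_{f}F(f)]$. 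You instead perform the permutation decomposition at the very beginning, pass the $\sup$ and $\exp$ through the average over $S_{n}$ by convexity and Jensen, and use the fact that $T_{s}$ has the same joint-in-$f$ distribution for every $s$ (since each block couples $X$- and $X'$-components from distinct original pairs, all blocks are i.i.d.\ with distribution $P_{\mathbb{X}}\otimes P_{\mathbb{X}}$) to transfer the MGF bound to a single fixed block $T_{s_{0}}$. This reduces the concentration step to classical McDiarmid on $n/2$ genuinely independent block variables, each with bounded difference $4b/n$, which gives $\sum c_{i}^{2}=8b^{2}/n$ and hence the deviation term $2b\sqrt{\log(1/\varepsilon)/n}=\sqrt{4b^{2}\log(1/\varepsilon)/n}$ --- strictly tighter than the paper's $\sqrt{10b^{2}\log(1/\varepsilon)/n}$, so the stated inequality follows a fortiori. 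Both proofs use the same permutation/block identity and the same symmetrization-plus-Talagrand argument for the expectation. In short, the paper pays for the within-pair dependence with the Zhang et al.\ machinery, whereas you dissolve the dependence before the concentration step by going to a single independent block; your route is more elementary (avoiding the dependency-graph formalism) and sharpens the constant, at the cost of the slightly delicate MGF/Jensen transfer and the observation that the law of $T_{s}$ is invariant in $s$.
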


We remark that in Theorem~\ref{thm:dependent ulln}, we need to deal with more delicate technical matters compared to the typical generalization error bounds (e.g., Theorem~3.3 of \citet{mohri2018foundations}), since in our setup $X_{1},X_{1}',\cdots,X_{n},X_{n}'$ are not necessarily independent to each other.
We give the proof of Theorem~\ref{thm:dependent ulln} in Appendix~\ref{appsubsec:proof of theorem dependent ulln}.

Now, we can show Theorem~\ref{lem:ulln}.

\begin{proof}[Proof of Theorem~\ref{lem:ulln}]
First observe that,
\begin{align*}
    &\;\;\;\;\sup_{f\in\mathcal{F}}\left(-\widehat{L}_{\textup{KCL}}(f;\lambda)+L_{\textup{KCL}}(f;\lambda)\right)\\
    &=
    \sup_{f\in\mathcal{F}}\left(\frac{1}{n}\sum_{i=1}^{n}k(f(X_{i}),f(X_{i}^{'}))-\frac{\lambda}{n(n-1)}\sum_{i\neq j}k(f(X_{i}),f(X_{j}'))\right.\\
    &\qquad\qquad\qquad\qquad -\mathbb{E}_{X,X^{+}}\left[k(f(X),f(X^{+}))\right]+\lambda\mathbb{E}_{X,X^{-}}\left[k(f(X),f(X^{-}))\right]\Bigg)\\
    &\leq \underbrace{\sup_{f\in\mathcal{F}}\left(\frac{1}{n}\sum_{i=1}^{n}k(f(X_{i}),f(X_{i}^{'}))-\mathbb{E}_{X,X^{+}}\left[k(f(X),f(X^{+}))\right]\right)}_{\textup{(i)}}\\
    &\qquad\quad +\lambda\underbrace{ \sup_{f\in\mathcal{F}}\left(-\frac{1}{n(n-1)}\sum_{i\neq j}k(f(X_{i}),f(X_{j}'))+\mathbb{E}_{X,X^{-}}\left[k(f(X),f(X^{-}))\right]\right)}_{\textup{(ii)}}.\\
\end{align*}

Let us define the function space $\mathcal{K}:=\{k(f(\cdot),f(\cdot)):\mathbb{X}\times\mathbb{X}\to\mathbb{R}\;|\;f\in\mathcal{F}\}$.
Then $\mathcal{K}$ is uniformly bounded with constant $b=\sup_{z,z\in\mathbb{S}^{d-1}}|k(z,z')|$.
Here we note that $b<+\infty$ holds since $k$ is continuous and $\mathbb{S}^{d-1}$ is compact; see Section~\ref{subsec:problem setup}.
From the ULLNs~(Theorem~3.3 in~\citet{mohri2018foundations}), with probability at least $1-\varepsilon/2$, we have
\begin{align*}
    \textup{(i)}\leq 2\mathfrak{R}_{n}^{+}(\mathcal{K})+\sqrt{\frac{2b^{2}\log\left(2/\varepsilon\right)}{n}}.
\end{align*}
Since $k$ is represented by $k(x,x')=\psi(x^{\top}x')$ for some $\rho$-Lipshitz function $\psi$ from Assumption~\ref{assumption:kernel}, by applying Talagrand's lemma (Lemma~26.9 in~\citet{shalev2014understanding}) we have $\mathfrak{R}_{n}^{+}(\mathcal{K})\leq \rho\mathfrak{R}_{n}^{+}(\mathcal{Q})$. Hence, with probability at least $1-\varepsilon/2$, we have
\begin{align*}
    \textup{(i)}\leq 2\rho\mathfrak{R}_{n}^{+}(\mathcal{Q})+\sqrt{\frac{2b^{2}\log\left(2/\varepsilon\right)}{n}}.
\end{align*}

For (ii), from Theorem~\ref{thm:dependent ulln}, with probability at least $1-\varepsilon/2$ we have
\begin{align*}
    \textup{(ii)}\leq 2\rho\mathfrak{R}_{n/2}^{-}(\mathcal{Q};s^{*})+\sqrt{\frac{10b^{2}\log\left(2/\varepsilon\right)}{n}}.
\end{align*}
Therefore, with probability at least $1-\varepsilon$ we have,
\begin{align}
\label{eq:gen err bound 1}
    \sup_{f\in\mathcal{F}}\left(-\widehat{L}_{\textup{KCL}}(f;\lambda)+L_{\textup{KCL}}(f;\lambda)\right)\leq 2\rho\mathfrak{R}_{n}(\mathcal{Q})+\sqrt{\frac{2b^{2}\log\left(2/\varepsilon\right)}{n}}+\lambda\sqrt{\frac{10b^{2}\log\left(2/\varepsilon\right)}{n}},
\end{align}
where $\mathfrak{R}_{n}(\mathcal{Q}):=\mathfrak{R}_{n}^{+}(\mathcal{Q})+\lambda\mathfrak{R}_{n/2}^{-}(\mathcal{Q};s^{*})$.

Note that in the same way as the proof of the above probability bound, we have the following inequality: with probability at least $1-\varepsilon$,
\begin{align}
\label{eq:gen err bound 2}
    \sup_{f\in\mathcal{F}}\left(\widehat{L}_{\textup{KCL}}(f;\lambda)-L_{\textup{KCL}}(f;\lambda)\right)\leq 2\rho\mathfrak{R}_{n}(\mathcal{Q})+\sqrt{\frac{2b^{2}\log\left(2/\varepsilon\right)}{n}}+\lambda\sqrt{\frac{10b^{2}\log\left(2/\varepsilon\right)}{n}}.
\end{align}
Hence, let $\widehat{f}$ be the minimizer of $\widehat{L}_{\textup{KCL}}(f;\lambda)$, then from \eqref{eq:gen err bound 1} and \eqref{eq:gen err bound 2}, with probability at least $1-2\varepsilon$ we have
\begin{align*}
    L_{\textup{KCL}}(\widehat{f};\lambda)\leq L_{\textup{KCL}}(f;\lambda)+ 4\rho\mathfrak{R}_{n}(\mathcal{Q})+2\sqrt{\frac{2b^{2}\log\left(2/\varepsilon\right)}{n}}+2\lambda\sqrt{\frac{10b^{2}\log\left(2/\varepsilon\right)}{n}},
\end{align*}
where we note that $\widehat{L}_{\textup{KCL}}(\widehat{f};\lambda)\leq \widehat{L}_{\textup{KCL}}(f;\lambda)$ from the definition of $\widehat{f}$.
Therefore, we complete the proof.
\end{proof}

\subsection{An Upper Bound of the Rademacher Complexity}
\label{appsubsec:proof of lemma chaining}

In this section for the sake of simplicity, we consider the case in which for every $f\in\mathcal{F}$, there exists the unique function $f_{0}\in\mathcal{F}_{0}$ such that $f(x)=f_{0}(x)/\|f_{0}(x)\|_{2}$ for every $x\in\mathbb{X}$.
First let us recall the definition of a sub-Gaussian process:

\begin{definition}[Quoted from Definition 5.16 in~\citet{wainwright2019high}]
A collection of zero-mean random variables $\{X_{\theta},\theta\in\mathbb{T}\}$ is a sub-Gaussian process with respect to a metric $\rho_{X}$ on $\mathbb{T}$ if
\begin{align*}
    \mathbb{E}\left[e^{\lambda(X_{\theta}-X_{\widetilde{\theta}})}\right]\leq e^{\frac{\lambda^{2}\rho_{X}^{2}(\theta,\widetilde{\theta})}{2}}\quad\textup{for all}\;\theta,\widetilde{\theta}\in\mathbb{T}\textup{, and }\lambda\in\mathbb{R}.
\end{align*}
\end{definition}

We next upper bound the Rademacher complexity via the chaining technique~(Theorem 5.22 in~\citet{wainwright2019high}).

\begin{proposition}
\label{lem:chaining}
Suppose $n$ is even.
For $\mathfrak{R}_{n}(\mathcal{Q})$, we have the upper bound,
\begin{align*}
    \mathfrak{R}_{n}(\mathcal{Q})\leq \frac{64(1+\sqrt{2}\lambda)}{\mathfrak{m}(\mathcal{F}_{0})\sqrt{n}}\int_{0}^{Cd}\sqrt{\log \mathfrak{C}(u;\mathcal{F}_{0},\|\cdot\|_{\infty})}du,
\end{align*}
where $\|f_{0}\|_{\infty}:=\sup_{x\in\mathbb{X}}\|f_{0}(x)\|_{2}$ for $f_{0}\in\mathcal{F}_{0}$, $\mathfrak{m}(\mathcal{F}_{0})$ is defined in Section~\ref{subsec:problem setup}, $C$ is a constant independent of $d,n,\lambda$, and $\mathfrak{C}(u;\mathcal{F}_{0},\|\cdot\|_{\infty})$ is the $u$-covering number of $(\mathcal{F}_{0},\|\cdot\|_{\infty})$ (for the definition of covering number, see e.g., Definition 5.1 in~\citet{wainwright2019high}).
\end{proposition}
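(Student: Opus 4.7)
The plan is to bound both $\mathfrak{R}_n^+(\mathcal{Q})$ and $\mathfrak{R}_{n/2}^-(\mathcal{Q};s^*)$ by Dudley's chaining inequality and then transfer the covering-number estimate from $\mathcal{Q}$ to $\mathcal{F}_0$. Conditionally on the data, the Rademacher sum $Z_q^+ = n^{-1}\sum_{i=1}^n \sigma_i q(X_i, X_i')$ is a zero-mean sub-Gaussian process indexed by $q\in\mathcal{Q}$ with respect to the empirical $L^2$ pseudo-metric $\rho_n^+(q,q')^2 = n^{-2}\sum_{i=1}^n (q(X_i,X_i')-q'(X_i,X_i'))^2$. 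By Theorem~5.22 of~\citet{wainwright2019high}, $\mathbb{E}_\sigma[\sup_q Z_q^+]$ is controlled by the Dudley integral $\int_0^{D^+}\sqrt{\log N(\epsilon;\mathcal{Q},\rho_n^+)}\,d\epsilon$ up to an absolute constant. Since $\rho_n^+(q,q') \le \|q-q'\|_\infty/\sqrt{n}$, any $u/\sqrt{n}$-cover of $(\mathcal{Q},\|\cdot\|_\infty)$ yields an $\epsilon$-cover of $(\mathcal{Q},\rho_n^+)$ with $\epsilon=u/\sqrt{n}$, so after the substitution $u=\sqrt{n}\epsilon$ the integral reduces to a constant multiple of $n^{-1/2}\int_0^{D_\infty}\sqrt{\log N(u;\mathcal{Q},\|\cdot\|_\infty)}\,du$. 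For the minus-component, the analogous argument applies with metric $\rho_{n/2}^-(q,q')^2=(2/n)^2\sum_{i=1}^{n/2}(q-q')^2(X_{s(2i-1)},X_{s(2i)}')$ and bound $\rho_{n/2}^-(q,q')\le \sqrt{2/n}\|q-q'\|_\infty$, picking up the extra factor $\sqrt{2}$; because the resulting bound depends on $s$ only through the data distribution and is uniform in $s$, it controls the maximum defining $\mathfrak{R}_{n/2}^-(\mathcal{Q};s^*)$.

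Next I would convert covering numbers on $\mathcal{Q}$ to covering numbers on $\mathcal{F}_0$. For $f,g\in\mathcal{F}$ with the associated $f_0,g_0\in\mathcal{F}_0$, the bilinear identity
\begin{equation*}
f(x)^\top f(x')-g(x)^\top g(x') = f(x)^\top(f(x')-g(x'))+(f(x)-g(x))^\top g(x')
\end{equation*}
together with $\|f(x)\|_2=\|g(x')\|_2=1$ gives $|f(x)^\top f(x')-g(x)^\top g(x')|\le \|f(x')-g(x')\|_2+\|f(x)-g(x)\|_2\le 2\|f-g\|_\infty$. Combining the elementary Lipschitz estimate
\begin{equation*}
\Bigl\|\tfrac{v}{\|v\|_2}-\tfrac{u}{\|u\|_2}\Bigr\|_2 \le \frac{2\|v-u\|_2}{\max(\|u\|_2,\|v\|_2)}
\end{equation*}
with the lower bound $\|f_0(x)\|_2\ge \mathfrak{m}(\mathcal{F}_0)$ yields $\|f-g\|_\infty\le 2\|f_0-g_0\|_\infty/\mathfrak{m}(\mathcal{F}_0)$, hence $\|q-q'\|_\infty\le 4\|f_0-g_0\|_\infty/\mathfrak{m}(\mathcal{F}_0)$. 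Consequently $N(u;\mathcal{Q},\|\cdot\|_\infty)\le \mathfrak{C}(u\,\mathfrak{m}(\mathcal{F}_0)/4;\mathcal{F}_0,\|\cdot\|_\infty)$, and a final change of variables $v=u\,\mathfrak{m}(\mathcal{F}_0)/4$ in the Dudley integral produces the factor $\mathfrak{m}(\mathcal{F}_0)^{-1}\int_0^{Cd}\sqrt{\log \mathfrak{C}(v;\mathcal{F}_0,\|\cdot\|_\infty)}\,dv$. The upper limit $Cd$ can be taken to be any upper bound on the $\|\cdot\|_\infty$-diameter of $\mathcal{F}_0$ (which is at most $2c$ by the uniform boundedness assumption), since the integrand vanishes past the diameter.

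Combining the two pieces with weights $1$ and $\lambda$ and tracking constants gives the claimed prefactor $64(1+\sqrt{2}\lambda)/(\mathfrak{m}(\mathcal{F}_0)\sqrt{n})$, where the $\sqrt{2}\lambda$ arises from the $\sqrt{2/n}$ scaling of the minus-component metric. The main technical points are (i) verifying that the conditional chaining bound for the minus-component is uniform in the permutation $s$ so that taking the maximum over $s$ preserves the bound, and (ii) establishing the Lipschitz estimate for $v\mapsto v/\|v\|_2$ with a constant controlled by $\mathfrak{m}(\mathcal{F}_0)^{-1}$; the hypothesis $\mathfrak{m}(\mathcal{F}_0)>0$ is essential here, as without it the normalization step would inflate the entropy of $\mathcal{Q}$ uncontrollably. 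The remainder is bookkeeping of absolute constants in Dudley's bound.
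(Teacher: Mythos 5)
Your overall strategy is essentially the same as the paper's: bound each piece of $\mathfrak{R}_{n}(\mathcal{Q})$ by a Dudley entropy integral, transfer the entropy from $\mathcal{Q}$ to $\mathcal{F}_{0}$ via a Lipschitz bound for the normalization $v\mapsto v/\|v\|_{2}$ composed with the inner product, and pick up the $\sqrt{2}$ from the half-sized minus block. The paper streamlines the presentation by rescaling the Rademacher sum to define a process $Z_{f_{0}}$ indexed directly by $\mathcal{F}_{0}$ and showing it is sub-Gaussian with respect to $\|\cdot\|_{\infty}$ unconditionally, thereby skipping your intermediate step through the empirical $L^{2}$ pseudometric and the covering number of $\mathcal{Q}$; both routes land on the same Dudley integral and the $1/\sqrt{n}$ scaling, and your remarks on uniformity in the permutation $s$ and on the finite $\|\cdot\|_{\infty}$-diameter are both correct.

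The one place where your write-up is looser than it needs to be is the Lipschitz estimate for normalization. You invoke
$\left\|\frac{v}{\|v\|_{2}}-\frac{u}{\|u\|_{2}}\right\|_{2}\le \frac{2\|v-u\|_{2}}{\max(\|u\|_{2},\|v\|_{2})}$,
which gives $\|q-q'\|_{\infty}\le 4\|f_{0}-g_{0}\|_{\infty}/\mathfrak{m}(\mathcal{F}_{0})$ and would produce a final prefactor of roughly $128(1+\sqrt{2}\lambda)$ rather than the stated $64(1+\sqrt{2}\lambda)$, so the sentence "tracking constants gives the claimed prefactor" does not actually hold as written. The paper's increment bound uses the sharper Lipschitz constant $1/\mathfrak{m}(\mathcal{F}_{0})$: writing $\hat{u}=u/\|u\|_{2}$, $\hat{v}=v/\|v\|_{2}$, one has $\|\hat{u}-\hat{v}\|_{2}^{2}=2(1-\hat{u}^{\top}\hat{v})$ and
\begin{align*}
\|u-v\|_{2}^{2}=\|u\|_{2}^{2}+\|v\|_{2}^{2}-2\|u\|_{2}\|v\|_{2}\hat{u}^{\top}\hat{v}\ge 2\|u\|_{2}\|v\|_{2}(1-\hat{u}^{\top}\hat{v})=\|u\|_{2}\|v\|_{2}\|\hat{u}-\hat{v}\|_{2}^{2},
\end{align*}
so $\|\hat{u}-\hat{v}\|_{2}\le \|u-v\|_{2}/\sqrt{\|u\|_{2}\|v\|_{2}}\le \|u-v\|_{2}/\mathfrak{m}(\mathcal{F}_{0})$. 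Substituting this into your bilinear step gives $\|q-q'\|_{\infty}\le 2\|f_{0}-g_{0}\|_{\infty}/\mathfrak{m}(\mathcal{F}_{0})$, which combined with the chaining constant recovers $64(1+\sqrt{2}\lambda)$. This is a constant-tracking refinement, not a structural flaw; with this replacement your argument matches the paper's bound.
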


\begin{proof}[Proof of Proposition~\ref{lem:chaining}]
In this proof, we follow the proof idea of~\citet{Tu2019theoretical} (see Lemma~5 in~\citet{Tu2019theoretical}).
Since our setup is different from~\citet{Tu2019theoretical}, we need to modify the proof and add several new techniques.
Define
\begin{align*}
    Z_{f_{0}}:=\frac{\mathfrak{m}(\mathcal{F}_{0})}{2\sqrt{n}}\sum_{i=1}^{n}\sigma_{i}q(f_{0}(X_{i}),f_{0}(X_{i}')),
\end{align*}
where $\sigma_{1},\cdots,\sigma_{n}$ are Rademacher random variables that are independent to each other and to each $(X_{i},X_{i}')$, $i\in[n]$, $f_{0}\in\mathcal{F}_{0}$, $(X_{1},X_{1}),\cdots,(X_{n},X_{n}')$ are the random vectors defined in Section~\ref{subsec:rethinking generalization of contrastive learning}, and
\begin{align*}
    q(z,z')=\frac{z^{\top}z'}{\|z\|_{2}\cdot\|z'\|_{2}}\;\;\;\;z,z'\in\mathbb{S}^{d-1}.
\end{align*}
Also, let us recall the assumption for $\mathcal{F}_{0}$ introduced in Section~\ref{subsec:problem setup}: for every $f\in\mathcal{F}$, there exists the unique function $f_{0}\in\mathcal{F}_{0}$ such that $f(x)=f_{0}(x)/\|f_{0}(x)\|_{2}$ for every $x\in\mathbb{X}$.
We show that $\{Z_{f_{0}}\}_{f_{0}\in\mathcal{F}_{0}}$ is a sub-Gaussian process as follows: note that, for every $f_{1,0},f_{2,0}\in\mathcal{F}_{0}$,
\begin{align}
    &\;\;\;\;\frac{\mathfrak{m}(\mathcal{F}_{0})}{2\sqrt{n}}\left|\sigma_{i}(q(f_{1,0}(X_{i}),f_{1,0}(X_{i}'))-q(f_{2,0}(X_{i}),f_{2,0}(X_{i}')))\right|\nonumber\\
    &\leq \frac{\mathfrak{m}(\mathcal{F}_{0})}{2\sqrt{n}}\left|f_{1}(X_{i})^{\top}f_{1}(X_{i}')-f_{2}(X_{i})^{\top}f_{2}(X_{i}')\right|\tag{def. of $\sigma_{i}$}\\
    &\leq \frac{\mathfrak{m}(\mathcal{F}_{0})}{2\sqrt{n}}\left(\left|f_{1}(X_{i})^{\top}f_{1}(X_{i}')-f_{1}(X_{i})^{\top}f_{2}(X_{i}')\right|+\left|f_{1}(X_{i})^{\top}f_{2}(X_{i}')-f_{2}(X_{i})^{\top}f_{2}(X_{i}')\right|\right)\tag{triangle ineq.}\\
    &\leq \frac{\mathfrak{m}(\mathcal{F}_{0})}{2\sqrt{n}}\left(\|f_{1}(X_{i})\|_{2}\|f_{1}(X_{i}')-f_{2}(X_{i}')\|_{2}+\|f_{1}(X_{i})-f_{2}(X_{i})\|_{2}\|f_{2}(X_{i}')\|_{2}\right)\tag{Cauchy-Schwarz ineq.}\\
    &\leq \frac{\mathfrak{m}(\mathcal{F}_{0})}{2\sqrt{n}}\left(\|f_{1}(X_{i}')-f_{2}(X_{i}')\|_{2}+\|f_{1}(X_{i})-f_{2}(X_{i})\|_{2}\right)\tag{def. of $f_{1},f_{2}$}\\
    &\leq \frac{\mathfrak{m}(\mathcal{F}_{0})}{\sqrt{n}}\sup_{x\in\mathbb{X}}\|f_{1}(x)-f_{2}(x)\|_{2}\nonumber\\
    &= \frac{\mathfrak{m}(\mathcal{F}_{0})}{\sqrt{n}}\sup_{x\in\mathbb{X}}\left\|\frac{f_{1,0}(x)}{\|f_{1,0}(x)\|_{2}}-\frac{f_{2,0}(x)}{\|f_{2,0}(x)\|_{2}}\right\|_{2}\tag{the uniqueness of $f_{1,0},f_{2,0}$}\nonumber\\
    &\leq \frac{1}{\sqrt{n}}\|f_{1,0}-f_{2,0}\|_{\infty}.\tag{def. of $\mathfrak{m}(\mathcal{F}_{0})$}
\end{align}
Hence, we have
\begin{align*}
    \mathbb{E}_{X_{1:n},X_{1:n}',\sigma_{1:n}}\left[\exp\left(t(Z_{f_{1,0}}-Z_{f_{2,0}})\right)\right]
    \leq \exp\left(\frac{t^{2}}{2n}\|f_{1,0}-f_{2,0}\|_{\infty}^{2}\right)^{n}=\exp\left(\frac{t^{2}}{2}\|f_{1,0}-f_{2,0}\|_{\infty}^{2}\right).
\end{align*}
This indicates that $\{Z_{f_{0}}\}_{f\in\mathcal{F}_{0}}$ is a sub-Gaussian process with the norm $\|\cdot\|_{\infty}$.
Here note that $\sup_{f_{1,0},f_{2,0}\in\mathcal{F}_{0}}\|f_{1,0}-f_{2,0}\|_{\infty}\leq C\sqrt{d}$ for some constant $C\in\mathbb{R}$ that is independent of $d$, since $\mathcal{F}_{0}$ is uniformly bounded.
By using the chaining theorem~(Theorem~5.22 in~\citet{wainwright2019high}), we have
\begin{align*}
    \mathfrak{R}_{n}^{+}(\mathcal{Q})\leq \frac{64}{\mathfrak{m}(\mathcal{F}_{0})\sqrt{n}}\int_{0}^{C\sqrt{d}}\sqrt{\log\mathfrak{C}(u;\mathcal{F}_{0},\|\cdot\|_{\infty})}du.
\end{align*}
For $\mathfrak{R}_{n/2}^{-}(\mathcal{Q})$, in a similar way we obtain,
\begin{align*}
    \mathfrak{R}_{n/2}^{-}(\mathcal{Q})\leq \frac{64\sqrt{2}}{\mathfrak{m}(\mathcal{F}_{0})\sqrt{n}}\int_{0}^{C\sqrt{d}}\sqrt{\log\mathfrak{C}(u;\mathcal{F}_{0},\|\cdot\|_{\infty})}du.
\end{align*}
Thus, we have
\begin{align*}
    \mathfrak{R}_{n}(\mathcal{Q})\leq  \frac{64(1+\sqrt{2}\lambda)}{\mathfrak{m}(\mathcal{F}_{0})\sqrt{n}}\int_{0}^{C\sqrt{d}}\sqrt{\log\mathfrak{C}(u;\mathcal{F}_{0},\|\cdot\|_{\infty})}du,
\end{align*}
and complete the proof.
\end{proof}

The integral in the above upper bound is often called Dudley entropy integral~\citep{wainwright2019high}.
Proposition~\ref{lem:chaining} makes it easier to derive a generalization bound via chaining, since it is enough to evaluate the Dudley entropy integral for the function space $\mathcal{F}_{0}$ instead of the space of critic functions $\mathcal{Q}$.

Here, denote by $\mathfrak{D}(\mathcal{F}_{0},\|\cdot\|_{\infty})$, the Dudley entropy integral w.r.t. $(\mathcal{F}_{0},\|\cdot\|_{\infty})$, i.e., 
\begin{align*}
    \mathfrak{D}(\mathcal{F}_{0},\|\cdot\|_{\infty})=\int_{0}^{C\sqrt{d}}\sqrt{\log\mathfrak{C}(u;\mathcal{F}_{0},\|\cdot\|_{\infty})}du.
\end{align*}
It is shown by~\citet{Tu2019theoretical} that if $\mathcal{F}_{0}$ is a function space of feedforward (deep) neural networks, where each neural networks have weight matrices whose norms are bounded by some universal constant, and Lipschitz activation functions that vanish at the origin, then $\mathfrak{D}(\mathcal{F}_{0},\|\cdot\|_{\infty})<+\infty$ holds.
Based on this fact, we introduce:

\begin{assumption}
\label{assumption:function space}
The Dudley entropy integral $\mathfrak{D}(\mathcal{F}_{0},\|\cdot\|_{\infty})$ is finite, and $\mathfrak{R}_{n}(\mathcal{Q})\leq O((1+\lambda)/\sqrt{n})$ holds.
\end{assumption}

Consequently, we obtain the generalization error bound. 

\begin{corollary}
\label{thm:generalization bound}
Suppose that Assumption~\ref{assumption:kernel}, \ref{assumption:function space} hold, and $n$ is even.
Then, with probability at least $1-\varepsilon$ where $\varepsilon>0$, we have
\begin{align*}
    L_{\textup{KCL}}(f;\lambda)\leq \widehat{L}_{\textup{KCL}}(f;\lambda)+O\left(\frac{(1+\lambda)\left(1+\sqrt{\log\left(2/\varepsilon\right)}\right)}{\sqrt{n}}\right).
\end{align*}
\end{corollary}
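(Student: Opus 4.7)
The plan is to observe that Corollary~\ref{thm:generalization bound} is essentially a quantitative repackaging of the one-sided half of Theorem~\ref{lem:ulln} under the additional structural hypothesis on the function class encoded in Assumption~\ref{assumption:function space}. I would not re-derive the concentration step from scratch; instead, I would invoke the intermediate inequality \eqref{eq:gen err bound 1} obtained inside the proof of Theorem~\ref{lem:ulln}, which states that with probability at least $1-\varepsilon$,
\begin{align*}
\sup_{f\in\mathcal{F}}\bigl(L_{\textup{KCL}}(f;\lambda)-\widehat{L}_{\textup{KCL}}(f;\lambda)\bigr)\leq 2\rho\,\mathfrak{R}_{n}(\mathcal{Q})+\sqrt{\tfrac{2b^{2}\log(2/\varepsilon)}{n}}+\lambda\sqrt{\tfrac{10b^{2}\log(2/\varepsilon)}{n}}.
\end{align*}
Rearranging the supremum immediately yields the pointwise inequality $L_{\textup{KCL}}(f;\lambda)\leq \widehat{L}_{\textup{KCL}}(f;\lambda)+\text{(remainder)}$ for every $f\in\mathcal{F}$ with the same probability guarantee.

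Next, I would estimate the remainder. The two square-root terms combine, up to a universal constant, into $(1+\lambda)\sqrt{\log(2/\varepsilon)/n}$. For the Rademacher term, Assumption~\ref{assumption:function space} provides $\mathfrak{R}_{n}(\mathcal{Q})\leq O((1+\lambda)/\sqrt{n})$; this is precisely what Proposition~\ref{lem:chaining} delivers once the Dudley entropy integral $\mathfrak{D}(\mathcal{F}_{0},\|\cdot\|_{\infty})$ is assumed finite, since that proposition's bound is $(64(1+\sqrt{2}\lambda)/(\mathfrak{m}(\mathcal{F}_{0})\sqrt{n}))\,\mathfrak{D}(\mathcal{F}_{0},\|\cdot\|_{\infty})$, whose $\lambda$-dependence is linear. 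The Lipschitz constant $\rho$ of $\psi$ from Assumption~\ref{assumption:kernel} and the boundedness constant $b=\sup_{z,z'\in\mathbb{S}^{d-1}}|k(z,z')|$ are absorbed into the $O(\cdot)$.

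Combining the two contributions gives a remainder of order $(1+\lambda)/\sqrt{n}+(1+\lambda)\sqrt{\log(2/\varepsilon)/n}$, which factors as $O((1+\lambda)(1+\sqrt{\log(2/\varepsilon)})/\sqrt{n})$, matching the stated bound. I expect no genuine obstacle here: all the real work has already been done in Theorem~\ref{lem:ulln} (the McDiarmid-type inequality for the dependent empirical $U$-statistic, handled via Theorem~\ref{thm:dependent ulln}) and in Proposition~\ref{lem:chaining} (the chaining argument converting the Dudley integral into a Rademacher bound). The only subtle point worth emphasizing in writing is that Theorem~\ref{lem:ulln} as stated uses a two-sided event with probability $1-2\varepsilon$ to compare $\widehat{f}$ and an arbitrary $f$, whereas for Corollary~\ref{thm:generalization bound} we only need the one-sided event, so the probability improves from $1-2\varepsilon$ to $1-\varepsilon$; this is exactly why the corollary states $1-\varepsilon$ rather than $1-2\varepsilon$.
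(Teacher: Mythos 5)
Your proposal is correct and follows essentially the same route as the paper, whose proof of this corollary is the single line ``Due to Theorem~\ref{lem:ulln} and Assumption~\ref{assumption:function space}.'' You have correctly unpacked that citation: the one-sided uniform deviation inequality \eqref{eq:gen err bound 1} inside the proof of Theorem~\ref{lem:ulln} holds with probability $1-\varepsilon$, and combining the $(1+\lambda)\sqrt{\log(2/\varepsilon)/n}$ concentration terms with the $O((1+\lambda)/\sqrt{n})$ Rademacher bound supplied by Assumption~\ref{assumption:function space} yields exactly the stated $O\bigl((1+\lambda)(1+\sqrt{\log(2/\varepsilon)})/\sqrt{n}\bigr)$ remainder. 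Your remark about why the probability is $1-\varepsilon$ rather than the $1-2\varepsilon$ appearing in the statement of Theorem~\ref{lem:ulln} (the corollary only needs one side of the deviation, so only one of \eqref{eq:gen err bound 1} and \eqref{eq:gen err bound 2} is invoked) is a genuine clarification that the paper's terse proof leaves implicit.
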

\begin{proof}
Due to Theorem~\ref{lem:ulln} and Assumption~\ref{assumption:function space}.
\end{proof}

\subsection{Useful Results on McDiarmid's Inequality for Dependent Random Variables}
\label{subsubsec:useful results of mcdiarmid's inequality for dependent random variabels}

Before showing Theorem~\ref{thm:dependent ulln}, we need to prepare several definitions and an existing result.
The following three definitions are quoted from~\citet{zhang2019mcdiarmid}.

\begin{definition}[Dependency Graph, quoted from Definition~3.1 in~\citet{zhang2019mcdiarmid}]
An undirected graph $G$ is called a dependency graph of a random vector ${\mathbf{X}}=(X_{1},\cdots,X_{n})$ if
\begin{enumerate}
    \item $V(G)=[n]$
    \item if $I,J\subset [n]$ are non-adjacent in $G$, then $\{X_{i}\}_{i\in I}$ and $\{X_{j}\}_{j\in J}$ are independent.
\end{enumerate}
\end{definition}

\begin{definition}[Forest Approximation, quoted from Definition~3.4 in~\citet{zhang2019mcdiarmid}]
Given a graph $G$, a forest $F$, and a mapping $\phi: V(G)\to V(F)$, if $\phi(u)=\phi(v)$ or $\langle \phi(u),\phi(v)\rangle \in E(F)$ for any $\langle u,v\rangle \in E(G)$, we say that $(\phi, F)$ is a forest approximation of $G$.
Let $\Phi(G)$ denote the set of forest approximations of $G$.
\end{definition}

\begin{definition}[Forest Complexity, quoted from Definition~3.5 in~\citet{zhang2019mcdiarmid}]
Given a graph $G$ and any forest approximation $(\phi, F)\in\Phi(G)$ with $F$ consisting of trees $\{T_{i}\}_{i\in[k]}$, let
\begin{align*}
    \lambda_{(\phi,F)}=\sum_{\langle u,v\rangle \in E(F)}\left(|\phi^{-1}(u)|+|\phi^{-1}(v)|\right)^{2}+\sum_{i=1}^{k}\min_{u\in V(T_{i})}|\phi^{-1}(u)|^{2}.
\end{align*}
We call
\begin{align*}
    \Lambda(G)=\min_{(\phi,F)\in\Phi(G)}\lambda_{(\phi, F)}
\end{align*}
the forest complexity of the graph $G$.
\end{definition}

\citet{zhang2019mcdiarmid} have shown the following result, which is an extension of McDiarmid's inequality~\citep{mcdiarmid1989method} for dependent random variables.

\begin{theorem}[Quoted from Theorem~3.6 in~\citet{zhang2019mcdiarmid}]
\label{thm:ulln for dependent rvs}
Suppose that $f:\mathbf{\Omega}\to\mathbb{R}$ is a $\mathbf{c}$-Lipschitz function and $G$ is a dependency graph of a random vector ${\mathbf{X}}$ that takes values in $\mathbf{\Omega}$.
For any $t>0$, the following inequality holds:
\begin{align*}
    \mathbf{\textup{Pr}}(f(\mathbf{X})-\mathbf{E}[f(\mathbf{X})]\geq t)\leq \exp\left(-\frac{2t^{2}}{\Lambda(G)\|\mathbf{c}\|_{\infty}^{2}}\right).
\end{align*}
\end{theorem}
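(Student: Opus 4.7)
The plan is to combine the classical martingale method (Azuma--Hoeffding) with a careful block-by-block reveal of the coordinates of $\mathbf{X}$ guided by a forest approximation of the dependency graph $G$. Since $\Lambda(G)$ is defined as an infimum over $(\phi, F) \in \Phi(G)$, it suffices to prove the stated tail bound with $\Lambda(G)$ replaced by $\lambda_{(\phi, F)}$ for an arbitrary forest approximation, and then optimize at the end.

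Fix $(\phi, F) \in \Phi(G)$ whose forest $F$ consists of trees $T_1, \ldots, T_k$. For each tree $T_i$, pick a root $r_i \in V(T_i)$ achieving $\min_{u \in V(T_i)} |\phi^{-1}(u)|^2$, and traverse $T_i$ in BFS order starting at $r_i$; concatenating these traversals yields an enumeration $v_1, v_2, \ldots, v_N$ of $V(F)$. Let $S_j := \phi^{-1}(v_j) \subseteq [n]$ (so the $S_j$ partition $[n]$) and define the filtration $\mathcal{F}_0 := \{\emptyset, \Omega\}$, $\mathcal{F}_j := \sigma(X_i : i \in S_1 \cup \cdots \cup S_j)$. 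Writing
\begin{align*}
f(\mathbf{X}) - \mathbb{E}[f(\mathbf{X})] = \sum_{j=1}^{N} D_j, \qquad D_j := \mathbb{E}[f(\mathbf{X}) \mid \mathcal{F}_j] - \mathbb{E}[f(\mathbf{X}) \mid \mathcal{F}_{j-1}],
\end{align*}
expresses the centered quantity as a telescoping martingale difference sum adapted to this filtration.

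The main technical step, and the expected bottleneck, is to produce almost-sure bounds on $|D_j|$. The decisive use of the forest-approximation axiom is this: if $a \in S_j$ and $\langle a, b \rangle \in E(G)$, then either $b \in S_j$ or $\phi(b)$ is adjacent to $v_j$ in $F$. Combined with the BFS order, this implies that among the previously revealed blocks $S_1, \ldots, S_{j-1}$, only $S_{p_j}$ (where $p_j$ is the parent of $v_j$ in its tree, if any) can share dependency-graph edges with $S_j$; equivalently, $S_j$ is conditionally independent of $(S_1 \cup \cdots \cup S_{j-1}) \setminus S_{p_j}$ given $S_{p_j}$. Using this conditional-independence structure together with a coupling-based bounded-difference argument and the $\mathbf{c}$-Lipschitz hypothesis, one establishes
\begin{align*}
|D_j| \le c_j \|\mathbf{c}\|_\infty, \quad\text{where}\quad c_j = \begin{cases} |S_j| + |S_{p_j}| & \text{if $v_j$ has a parent in $F$,} \\ |S_j| & \text{if $v_j$ is a root of its tree.} \end{cases}
\end{align*}
The root case is in fact easier: vertices in distinct trees of $F$ are non-adjacent, so the dependency-graph axiom makes $S_{r_i}$ fully independent of every block revealed earlier (the argument is vacuous for $i=1$), and the Lipschitz property alone yields $|D_{r_i}| \le |S_{r_i}| \|\mathbf{c}\|_\infty$.

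With these almost-sure bounds in hand, Azuma--Hoeffding gives
\begin{align*}
\Pr\bigl(f(\mathbf{X}) - \mathbb{E}[f(\mathbf{X})] \ge t\bigr) \le \exp\!\left(-\frac{2t^2}{\|\mathbf{c}\|_\infty^2 \sum_{j} c_j^2}\right).
\end{align*}
A direct count shows $\sum_j c_j^2 = \lambda_{(\phi, F)}$: the non-root vertices $v_j$ are in bijection with the edges $\langle p_j, v_j\rangle \in E(F)$, so their contributions $(|S_j| + |S_{p_j}|)^2$ reproduce the first sum in the definition of $\lambda_{(\phi, F)}$, while the roots contribute exactly $\sum_{i=1}^{k} \min_{u \in V(T_i)} |\phi^{-1}(u)|^2$. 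Taking the infimum over $(\phi, F) \in \Phi(G)$ replaces $\lambda_{(\phi, F)}$ by $\Lambda(G)$ and completes the proof.
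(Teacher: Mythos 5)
First, a point of reference: the paper does not prove this statement at all --- it is quoted verbatim from Theorem~3.6 of \citet{zhang2019mcdiarmid} and used as an external tool --- so there is no in-paper proof to compare against, and I am assessing your argument on its own terms. The skeleton you propose (fix a forest approximation, reveal the blocks $S_j=\phi^{-1}(v_j)$ in BFS order, form the Doob martingale, finish with Azuma--Hoeffding, and verify $\sum_j c_j^2=\lambda_{(\phi,F)}$, minimizing over $(\phi,F)$ at the end) has the right shape, and your observation that among previously revealed blocks only $S_{p_j}$ can be $G$-adjacent to $S_j$ is correct. The gap is exactly at the step you call the bottleneck. A dependency graph guarantees only \emph{marginal} independence of non-adjacent index sets; it does not give the conditional independence you invoke (``equivalently, $S_j$ is conditionally independent of $(S_1\cup\cdots\cup S_{j-1})\setminus S_{p_j}$ given $S_{p_j}$''). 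Concretely, let $A,C$ be independent fair bits and $B=A\oplus C$; the path $A$--$B$--$C$ is a legitimate dependency graph (the only non-adjacent pair $\{A\},\{C\}$ is independent), yet $C$ is not independent of $A$ given $B$. So the ``equivalently'' is unwarranted, and the coupling-based bounded-difference argument you appeal to has no conditional-independence structure to run on.

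As a consequence, the almost-sure bounds $|D_j|\le c_j\|\mathbf{c}\|_\infty$ are asserted rather than proved, and even the root case is not delivered by ``the Lipschitz property alone'': when a block is revealed, the martingale increment reflects not only the direct effect of those coordinates on $f$ but also the induced change in the conditional law of every yet-unrevealed block that depends on it (e.g., the subtrees hanging below a root), and coordinate-wise bounded differences of $f$ say nothing about that change of measure. Controlling precisely this effect --- and showing it can be charged to the edges of $F$ so that the per-edge budget $|\phi^{-1}(u)|+|\phi^{-1}(v)|$ and the per-tree minimum term emerge --- is the actual analytic content of the theorem, and it is where the proof in \citet{zhang2019mcdiarmid} does its real work for forest-structured dependency graphs; it cannot be replaced by the generic Doob/Azuma step as written. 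In short, your proposal reproduces the correct combinatorial bookkeeping ($\sum_j c_j^2=\lambda_{(\phi,F)}$, roots chosen to realize the minima) but leaves the heart of the theorem --- the increment bounds under mere graph-dependence --- unestablished, and the justification it offers for them is incorrect.
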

Note that, in the above theorem $f:\mathbf{\Omega}\to\mathbb{R}$ is said to be $\mathbf{c}$-Lipschitz if $|f(\mathbf{x})-f(\mathbf{x}')|\leq \sum_{i=1}^{p}\mathbf{c}_{i}\mathbbm{1}_{\{\mathbf{x}_{i}\neq \mathbf{x}_{i}'\}}$ for every $\mathbf{x},\mathbf{x}'\in\mathbf{\Omega}$, where $\mathbf{\Omega}\subset\mathbb{R}^{p}$ for some $p\in\mathbb{N}$.

\subsection{Proof of Theorem~\ref{thm:dependent ulln}}
\label{appsubsec:proof of theorem dependent ulln}

We show Theorem~\ref{thm:dependent ulln} by utilizing the contents in Appendix~\ref{subsubsec:useful results of mcdiarmid's inequality for dependent random variabels}.
Recall the definition of the random variables introduced in Section~\ref{subsec:rethinking generalization of contrastive learning}:
$(X_{1},X_{1}'),\cdots,(X_{n},X_{n}')$ are pairs of random variables sampled independently according to the joint probability distribution with density $w(x,x')$, where $X_{i}$ and $X_{j}'$ are independent for each pair of distinct indices $i,j\in[K]$.
From the definition, the following claim holds.

\begin{lemma}
\label{lem:lambda g n}
Let $G_{n}$ be a dependency graph that is defined with a random vector $(X_{1},X_{1}',\cdots,X_{n},X_{n}')$, where the edges in $G_{n}$ are defined as follows: for any $i,j\in[n]$, $X_{i}$ and $X_{j}$ are not connected, and $X_{i}$ and $X_{j}'$ are connected by an edge if and only if $i=j$.
Then, we have $\Lambda(G_{n})\leq 5n$.
\end{lemma}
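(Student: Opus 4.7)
The plan is to observe that $G_{n}$ is already a forest, so we may take the forest approximation to be essentially the identity. Concretely, note that the vertex set of $G_{n}$ has $2n$ elements (the $X_{i}$'s and the $X_{i}'$'s), and by hypothesis the edge set consists exactly of the $n$ edges $\langle X_{i}, X_{i}'\rangle$ for $i\in[n]$. Since this is a perfect matching, it contains no cycles and is therefore a forest $F$ with $n$ connected components, each of which is a tree $T_{i}$ consisting of a single edge on two vertices.

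Next, I would take $\phi:V(G_{n})\to V(F)$ to be the identity map, with $F=G_{n}$ itself. One immediately checks the forest-approximation condition: for every edge $\langle u,v\rangle\in E(G_{n})$, we have $\langle \phi(u),\phi(v)\rangle=\langle u,v\rangle\in E(F)$, so $(\phi,F)\in\Phi(G_{n})$.

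With this choice, $|\phi^{-1}(u)|=1$ for every vertex $u\in V(F)$. Plugging into the definition of $\lambda_{(\phi,F)}$, the edge sum contributes
\[
\sum_{\langle u,v\rangle\in E(F)}(|\phi^{-1}(u)|+|\phi^{-1}(v)|)^{2}=\sum_{\langle u,v\rangle\in E(F)}(1+1)^{2}=4n,
\]
while the tree-min sum contributes
\[
\sum_{i=1}^{n}\min_{u\in V(T_{i})}|\phi^{-1}(u)|^{2}=\sum_{i=1}^{n}1=n.
\]
Adding these yields $\lambda_{(\phi,F)}=5n$, hence $\Lambda(G_{n})=\min_{(\phi',F')\in\Phi(G_{n})}\lambda_{(\phi',F')}\leq 5n$, which is the claim.

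There is no real obstacle here: the only thing to verify carefully is that the hypothesis on the dependency structure of the $(X_{i},X_{i}')$ — that $X_{i}$ and $X_{j}'$ are independent whenever $i\neq j$, and that the pairs $(X_{i},X_{i}')$ are independent across $i$ — is consistent with the stated edge set of $G_{n}$, so that $G_{n}$ is indeed a valid dependency graph in the sense of the definition. Once that is checked, the bound $\Lambda(G_{n})\leq 5n$ follows from the explicit evaluation above.
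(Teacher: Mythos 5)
Your proof is correct and follows essentially the same argument as the paper: take $\phi$ to be the identity map, let $F=G_{n}$ be the forest of $n$ single-edge trees, and evaluate $\lambda_{(\phi,F)}=4n+n=5n$. The only difference is that you spell out the arithmetic explicitly, which the paper leaves implicit.
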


\begin{proof}
Let $\phi:G_{n}\to G_{n}$ be the identity map.
From the definition, $G_{n}$ can be decomposed into trees $\{T_{i}\}_{i\in[n]}$ where $V(T_{i})=\{X_{i},X_{i}'\}$ for each $i\in[n]$.
Let $F$ be the forest consisting of the trees $\{T_{i}\}_{i\in[n]}$.
Then, we have $\lambda_{(\phi,F)}= 5n$, which implies $\Lambda (G_{n})\leq \lambda_{(\phi,F)}\leq 5n$.
\end{proof}

\begin{proof}[Proof of Theorem~\ref{thm:dependent ulln}]
The goal of this proof is to upper bound the following quantity with high probability:
  \begin{align*}
    \sup_{f\in\mathcal{F}}\left(-\frac{1}{n(n-1)}\sum_{i\neq j}k(f(X_{i}),f(X_{j}'))+\mathbb{E}_{X,X^{-}}\left[k(f(X_{i}),f(X^{-}))\right]\right).
  \end{align*}
  However, as explained before, the standard argument (see e.g., Theorem~3.3 in~\citet{mohri2018foundations}) cannot apply to this case since $k(f(X_{i}),f(X_{j}'))$, $i,j\in[n], i\neq j$ are not necessarily independent to each other from our problem setup.
  We instead utilize the McDiarmid's inequality for dependent random variables, which is shown by~\citet{zhang2019mcdiarmid}, to avoid this problem.
  Our proof below is mainly based on Theorem~3.3 in~\cite{mohri2018foundations}, but it includes some modification due to the application of the results by~\citet{zhang2019mcdiarmid}.
  Let $\widetilde{X}_{1},\widetilde{X}_{1}',\cdots,\widetilde{X}_{n},\widetilde{X}_{n}'$ be i.i.d. random variables to the original random variables $X_{1},X_{1}'.\cdots,X_{n},X_{n}'$.
  Define the measurable function $F(f):=F(f)(x_{1},x_{1}',\cdots,x_{n},x_{n}')$ on $\mathbb{X}^{2n}$ as
  \begin{align*}
    F(f):=\frac{1}{n(n-1)}\sum_{i\neq j}k(f(x_{i}),f(x_{j}'))-\mathbb{E}_{X,X^{-}}\left[k(f(X),f(X^{-}))\right].
  \end{align*}
  For simplicity, denote
  \begin{align*}
      F(f)_{x_{\ell}}&:=\frac{1}{n(n-1)}\left(\sum_{j:j\neq \ell}k(f(\widetilde{x}_{\ell}),f(x_{j}'))+\sum_{\substack{i,j:i\neq j\\i\neq \ell}}k(f(x_{i}),f(x_{j}'))\right)-\mathbb{E}_{X,X^{-}}\left[k(f(X),f(X^{-}))\right].
  \end{align*}
  In a similar way, we also use the notation $F(f)_{x_{\ell}'}$.
  Let $j\in[n]$.
  Then, for every $f\in\mathcal{F}$, we have
  \begin{align*}
    F(f)-\sup_{f\in\mathcal{F}}F(f)_{X_{j}}
    \leq F(f)-F(f)_{X_{j}}
    &\leq \left|F(f)-F(f)_{X_{j}}\right|\\
    &\leq \left|\frac{1}{n(n-1)}\sum_{i\in[n],i\neq j}\left(k(f(X_{j}),f(X_{i}'))-k(f(\widetilde{X}_{j}),f(X_{i}'))\right)\right|\\
    &\leq \frac{1}{n(n-1)}\cdot 2(n-1)b= \frac{2b}{n},
  \end{align*}
  where $b:=\sup_{z,z'\in\mathbb{S}^{d-1}}|k(z,z')|$.
  Hence, $\sup_{f\in\mathcal{F}}F(f)-\sup_{f\in\mathcal{F}}F(f)_{X_{j}}\leq \frac{2b}{n}$.
  By applying the same argument several times, $\sup_{f\in\mathcal{F}}F(f)$ satisfies the assumption of Theorem~\ref{thm:ulln for dependent rvs}.
  Therefore, from Theorem~\ref{thm:ulln for dependent rvs} (i.e., Theorem 3.6 in~\citet{zhang2019mcdiarmid}) and Lemma~\ref{lem:lambda g n}, with probability at least $1-\varepsilon$ we have
  \begin{align}
    \label{lemma3ineq0.9}
    \sup_{f\in\mathcal{F}}F(f)\leq \mathbb{E}\left[\sup_{f\in\mathcal{F}}F(f)\right]+\sqrt{\frac{10b^{2}\log\left(1/\varepsilon\right)}{n}}.
  \end{align}
  Let $\sigma_{1:n/2}:=(\sigma_{1},\cdots,\sigma_{n/2})$ be a random vector that consists of a Rademacher random variable (i.e., a random variable taking $\pm$ 1 with probability $1/2$ each) for each entry, and let $\widetilde{X}_{1:n},\widetilde{X}_{1:n}'$ be i.i.d. copies of the random vectors $X_{1:n},X_{1:n}'$, respectively.
  Denote $m=n/2\in\mathbb{N}$.
  Then,
  \begin{align}
    \label{lemma3ineq0.901}
    &\;\;\;\;\mathbb{E}\left[\sup_{f\in\mathcal{F}}F(f)\right]\nonumber\\
    &=
    \mathbb{E}\left[\sup_{f\in\mathcal{F}}\left(\frac{1}{n(n-1)}\sum_{i\neq j}k(f(X_{i}),f(X_{j}'))-\mathbb{E}_{X,X^{-}}\left[k(f(X),f(X^{-}))\right]\right)\right]\nonumber\\
    &=
    \mathbb{E}\left[\sup_{f\in\mathcal{F}}\left(\frac{1}{n(n-1)}\sum_{i\neq j}k(f(X_{i}),f(X_{j}'))-\mathbb{E}_{\widetilde{X}_{1:n},\widetilde{X}_{1:n}'}\left[\frac{1}{n(n-1)}\sum_{i\neq j}k(f(\widetilde{X}_{i}),f(\widetilde{X}_{j}'))\right]\right)\right]\nonumber\\
    &=\mathbb{E}\left[\sup_{f\in\mathcal{F}}\left(\frac{1}{n!m}\sum_{s\in S_{n}}\left(\sum_{i=1}^{m}k(f(X_{s(2i-1)}),f(X_{s(2i)}'))-\mathbb{E}_{\widetilde{X}_{1:n}',\widetilde{X}_{1:n}'}\left[\sum_{i=1}^{m}k(f(\widetilde{X}_{s(2i-1)}),f(\widetilde{X}_{s(2i)}'))\right]\right)\right)\right]\\
    &\leq \frac{1}{n!}\sum_{s\in S_{n}}\mathbb{E}\left[\sup_{f\in\mathcal{F}}\left(\frac{1}{m}\sum_{i=1}^{m}k(f(X_{s(2i-1)}),f(X_{s(2i)}'))-\mathbb{E}_{\widetilde{X}_{1:n},\widetilde{X}_{1:n}'}\left[\frac{1}{m}\sum_{i=1}^{m}k(f(\widetilde{X}_{s(2i-1)}),f(\widetilde{X}_{s(2i)}'))\right]\right)\right]\nonumber\\
    &\leq \frac{1}{n!}\sum_{s\in S_{n}}\mathbb{E}_{\substack{X_{1:n},X_{1:n}'\\\widetilde{X}_{1:n},\widetilde{X}_{1:n}'}}\left[\sup_{f\in\mathcal{F}}\left(\frac{1}{m}\sum_{i=1}^{m}\left(k(f(X_{s(2i-1)}),f(X_{s(2i)}'))-k(f(\widetilde{X}_{s(2i-1)}),f(\widetilde{X}_{s(2i)}'))\right)\right)\right]\nonumber\\
    \label{lemma3ineq0.902}
    &= \frac{1}{n!}\sum_{s\in S_{n}}\mathbb{E}_{\substack{X_{1:n},X_{1:n}'\\\widetilde{X}_{1:n},\widetilde{X}_{1:n}'\\\sigma_{1:m}}}\left[\sup_{f\in\mathcal{F}}\left(\frac{1}{m}\sum_{i=1}^{m}\sigma_{i}\left(k(f(X_{s(2i-1)}),f(X_{s(2i)}'))-k(f(\widetilde{X}_{s(2i-1)}),f(\widetilde{X}_{s(2i)}'))\right)\right)\right]\\
    &\leq \frac{2}{n!}\sum_{s\in S_{n}}\mathbb{E}_{\substack{X_{1:n},X_{1:n}'\\\sigma_{1:m}}}\left[\sup_{f\in\mathcal{F}}\frac{1}{m}\sum_{i=1}^{m}\sigma_{i}k(f(X_{s(2i-1)}),f(X_{s(2i)}'))\right]\nonumber\\
    \label{lemma3ineq0.903}
    &\leq \frac{2\rho}{n!}\sum_{s\in S_{n}}\mathbb{E}_{\substack{X_{1:n},X_{1:n}'\\\sigma_{1:m}}}\left[\sup_{f\in\mathcal{F}}\frac{1}{m}\sum_{i=1}^{m}\sigma_{i}f(X_{s(2i-1)})^{\top}f(X_{s(2i)}')\right]\\
    &\leq 2\rho\max_{s\in S_{n}}\mathbb{E}_{\substack{X_{1:n},X_{1:n}'\\\sigma_{1:m}}}\left[\sup_{f\in\mathcal{F}}\frac{1}{m}\sum_{i=1}^{m}\sigma_{i}f(X_{s(2i-1)})^{\top}f(X_{s(2i)}')\right]\nonumber\\
    &=2\rho\mathfrak{R}_{m}^{-}(\mathcal{Q};s^{*})\nonumber,
  \end{align}
  where in \eqref{lemma3ineq0.901} we define $S_{n}$ as the symmetric group of degree $n$ (see Remark~\ref{remark:relation to clemencon} for the relation to the \textit{average of "sums-of-i.i.d." blocks} technique for $U$-statistics which is explained in~\citet{clemencon2008ranking}).
  Besides in \eqref{lemma3ineq0.902}, for every $s\in S_{n}$ the random vectors $(X_{s(2i-1)},X_{s(2i)})$, $(\widetilde{X}_{s(2i-1)},\widetilde{X}_{s(2i)})$ for $i=1,\cdots,m$ are independent and identically distributed, which implies that the standard symmetrization argument~(Theorem~4.10 of~\citet{wainwright2019high}) is applicable.
  Finally, in \eqref{lemma3ineq0.903}, under Assumption~\ref{assumption:kernel}, we apply Talagrand's lemma~(Lemma~26.9 in~\citet{shalev2014understanding}).
  Therefore, we obtain with probability at least $1-\varepsilon$,
  \begin{align*}
      \sup_{f\in\mathcal{F}}F(f)\leq 2\rho\mathfrak{R}_{n/2}^{-}(\mathcal{Q};s^{*})+\sqrt{\frac{10b^{2}\log\left(1/\varepsilon\right)}{n}}.
  \end{align*}
  Thus, we obtain the claim.
\end{proof}

\begin{remark}
\label{remark:relation to clemencon}
In \eqref{lemma3ineq0.901} of the proof of Theorem~\ref{thm:dependent ulln}, we use the identity,
\begin{align*}
    \frac{1}{n(n-1)}\sum_{i\neq j}k(f(X_{i}),f(X_{j}'))=\frac{1}{n!m}\sum_{s\in S_{n}}\sum_{i=1}^{m}k(f(X_{s(2i-1)},f(X_{s(2i)}')).
\end{align*}
We notice that the above identity is closely related to the \textit{average of "sum-of-i.i.d." blocks} technique explained in Appendix~A of~\citet{clemencon2008ranking}.
As well as the technique presented in~\citet{clemencon2008ranking}, in \eqref{lemma3ineq0.901} of our paper we also decompose the sum $\sum_{i\neq j}k(f(X_{i}),f(X_{j}'))$ into the sums of the i.i.d. random variables.
However, we remark that the definition of the sum $\sum_{i\neq j}k(f(X_{i}),f(X_{j}'))$ is different from that presented in~\citet{clemencon2008ranking}: indeed, in our case, the random variables $f(X_{1}),f(X_{1}'),\cdots,f(X_{n}),f(X_{n}')$ are not necessarily independent of each other.
To address this problem, we decompose our sum in \eqref{lemma3ineq0.901} as follows: for $2n$ random variables $X_{1},X_{1}',\cdots,X_{n},X_{n}'$, we create the tuples $(X_{s(1)},X_{s(2)}',\cdots,X_{s(n-1)},X_{s(n)}')$ where $s\in S_{n}$, then sum up all the components $\{\sum_{i=1}^{n}k(f(X_{s(2i-1)}),f(X_{s(2i)}'))\}_{s\in S_{n}}$.
\end{remark}

\section{Proof in Section~\ref{subsec:main result section 5.4}}
\label{appsec:surrogate bound}

\begin{proof}[Proof of Theorem~\ref{cor:surrogate bound}]
First applying Theorem~\ref{thm:guarantee} to the empirical loss minimizer $\widehat{f}$, we have
\begin{align}
\label{eq:cor eq1}
    L_{\textup{Err}}(\widehat{f},W_{\mu},\beta_{\mu};y)\leq\frac{8(K-1)}{\Delta_{\textup{min}}(\widehat{f})\cdot\min_{i\in[K]}P_{\mathbb{X}}(\mathbb{M}_{i})}\mathfrak{a}(\widehat{f}).
\end{align}
Using Theorem~\ref{thm:decomposition}, we have the inequality,
\begin{align}
\label{eq:cor eq2}
    \mathfrak{a}(\widehat{f})\leq L_{\textup{KCL}}(\widehat{f};\lambda)+(1-\frac{\delta}{2})\mathfrak{a}(\widehat{f})-\lambda\mathfrak{c}(\widehat{f})+R(\lambda).
\end{align}
Combining \eqref{eq:cor eq1} and \eqref{eq:cor eq2}, we obtain
\begin{align}
\label{eq:cor eq3}
    L_{\textup{Err}}(\widehat{f},W_{\mu},\beta_{\mu};y)\leq\frac{8(K-1)}{\Delta_{\textup{min}}(\widehat{f})\cdot\min_{i\in[K]}P_{\mathbb{X}}(\mathbb{M}_{i})}\left(L_{\textup{KCL}}(\widehat{f};\lambda)+(1-\frac{\delta}{2})\mathfrak{a}(\widehat{f})-\lambda\mathfrak{c}(\widehat{f})+R(\lambda)\right).
\end{align}
Here, using the standard technique for upper bounding the optimal classification loss or error~\citep{pmlr-v97-saunshi19a,ash2022investigating}, the classification error $L_{\textup{Err}}(\widehat{f},W_{\mu},\beta_{\mu};y)$ is lower bounded as
\begin{align}
\label{eq:cor eq4}
    L_{\textup{Err}}(\widehat{f},W^{*}\beta^{*};y)=\inf_{W,\beta}L_{\textup{Err}}(\widehat{f},W,\beta;y)\leq L_{\textup{Err}}(\widehat{f},W_{\mu},\beta_{\mu};y).
\end{align}
From \eqref{eq:cor eq3} and \eqref{eq:cor eq4}, 
\begin{align}
    \label{eq:cor eq5}
    L_{\textup{Err}}(\widehat{f},W^{*},\beta^{*};y)\leq\frac{8(K-1)}{\Delta_{\textup{min}}(\widehat{f})\cdot\min_{i\in[K]}P_{\mathbb{X}}(\mathbb{M}_{i})}\left(L_{\textup{KCL}}(\widehat{f};\lambda)+(1-\frac{\delta}{2})\mathfrak{a}(\widehat{f})-\lambda\mathfrak{c}(\widehat{f})+R(\lambda)\right).
\end{align}
Applying Theorem~\ref{lem:ulln} to \eqref{eq:cor eq5}, we obtain: with probability at least $1-2\varepsilon$,
\begin{align*}
    L_{\textup{Err}}(\widehat{f},W_{\mu},\beta_{\mu};y)\lesssim L_{\textup{KCL}}(f;\lambda)+(1-\frac{\delta}{2})\mathfrak{a}(\widehat{f})-\lambda\mathfrak{c}(\widehat{f})+R(\lambda)+2\textup{Gen}(n,\lambda,\varepsilon),
\end{align*}
where $\lesssim$ omits the coefficient $\frac{8(K-1)}{\Delta_{\textup{min}}(\widehat{f})\cdot\min_{i\in[K]}P_{\mathbb{X}}(\mathbb{M}_{i})}$.
Therefore, we obtain the result.
\end{proof}

\section{Additional Information, Results, and Discussion}
\label{appsec:additional results and disccusions}

\subsection{Examples Satisfying Assumption~\ref{assumption:mixture of clusters}}
\label{appsec:proofs in examples}

\subsubsection{Proofs in Example~\ref{example:when the space meets the assumptions}}
\label{appsubsec:proofs in example}

We show the several claims that appear in Example~\ref{example:when the space meets the assumptions} as a proposition.

\begin{proposition}
Let $r>0$, $K\in\mathbb{N}$, and $v_{1},\cdots,v_{K}\in\mathbb{R}^{p}$.
For each $i\in[K]$, let $\mathbb{B}_{i}\subset\mathbb{R}^{p}$ be the open ball of radius $r$ centered at a point $v_{i}$.
Suppose $\mathbb{B}_{1},\cdots,\mathbb{B}_{K}$ are disjoint to each other. Define $\overline{\mathbb{X}}=\bigcup_{i=1}^{K}\mathbb{B}_{i}$, $\mathbb{X}=\overline{\mathbb{X}}$, and the conditional probability $a(x|\overline{x})=\textup{vol}(\mathbb{B}_{1})^{-1}\sum_{i=1}^{K}\mathbbm{1}_{\mathbb{B}_{i}\times\mathbb{B}_{i}}(x,\overline{x})$, where $\textup{vol}(\mathbb{B}_{1})$ be the volume of $\mathbb{B}_{i}$ in $\mathbb{R}^{p}$.
Let $p_{\overline{\mathbb{X}}}(\overline{x}):=(K\textup{vol}(\mathbb{B}_{1}))^{-1}$ be a probability density function of $P_{\overline{\mathbb{X}}}$.
Define $y:\mathbb{X}\to [K]$ as $y(x)=i$ if $x\in\mathbb{B}_{i}$.
Then, we have the following properties:
\begin{enumerate}
    \item $w(x)>0$ for every $x\in\mathbb{X}$.
    \item $\textup{sim}(x,x';\lambda)=K\mathbbm{1}_{\bigcup_{i\in [K]}\mathbb{B}_{i}\times\mathbb{B}_{i}}(x,x')-\lambda$ for every $x,x'\in\mathbb{X}$.
    \item Let $\delta\in(-\lambda,K-\lambda]$. Then, $\delta$, $K$, $\mathbb{B}_{1},\cdots,\mathbb{B}_{K}$, and y satisfy Assumption~\ref{assumption:mixture of clusters}.
\end{enumerate}
\end{proposition}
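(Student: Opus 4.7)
The plan is to establish all three claims by direct evaluation of the integrals defining $w(x)$ and $w(x,x')$, exploiting the piecewise-constant product structure of $a(x|\overline{x})p_{\overline{\mathbb{X}}}(\overline{x})$ together with the disjointness of the balls $\mathbb{B}_{1},\ldots,\mathbb{B}_{K}$. Once the two density values are in hand, claim (2) follows by dividing, and claim (3) reduces to checking three trivial set-theoretic properties.

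For claim (1), I would first rewrite $w(x)$ using Fubini and the fact that $a(\cdot|\overline{x})$ is a probability density on $\mathbb{X}$:
\begin{align*}
w(x) \;=\; \int_{\mathbb{X}} w(x,x')\,d\nu_{\mathbb{X}}(x') \;=\; \mathbb{E}_{\overline{x}\sim P_{\overline{\mathbb{X}}}}\!\left[a(x|\overline{x})\int_{\mathbb{X}} a(x'|\overline{x})\,d\nu_{\mathbb{X}}(x')\right] \;=\; \mathbb{E}_{\overline{x}\sim P_{\overline{\mathbb{X}}}}\!\left[a(x|\overline{x})\right].
\end{align*}
Given $x\in\mathbb{B}_{i}$, the disjointness of the balls forces $\mathbbm{1}_{\mathbb{B}_{j}\times\mathbb{B}_{j}}(x,\overline{x})=0$ for $j\neq i$, and $\mathbbm{1}_{\mathbb{B}_{i}\times\mathbb{B}_{i}}(x,\overline{x})=\mathbbm{1}_{\mathbb{B}_{i}}(\overline{x})$. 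Integrating $\mathrm{vol}(\mathbb{B}_{1})^{-1}\mathbbm{1}_{\mathbb{B}_{i}}(\overline{x})$ against the density $p_{\overline{\mathbb{X}}}=(K\,\mathrm{vol}(\mathbb{B}_{1}))^{-1}$ produces the constant $w(x)=(K\,\mathrm{vol}(\mathbb{B}_{1}))^{-1}>0$.

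For claim (2), I would evaluate $w(x,x')$ in two cases according to whether $x$ and $x'$ share a ball. If $x,x'\in\mathbb{B}_{i}$, then $a(x|\overline{x})a(x'|\overline{x})=\mathrm{vol}(\mathbb{B}_{1})^{-2}\mathbbm{1}_{\mathbb{B}_{i}}(\overline{x})$ (only the $i$-th indicator can simultaneously fire) and integration yields $w(x,x')=(K\,\mathrm{vol}(\mathbb{B}_{1})^{2})^{-1}$; if $x\in\mathbb{B}_{i}$ and $x'\in\mathbb{B}_{j}$ with $i\neq j$, the disjointness of $\mathbb{B}_{i}$ and $\mathbb{B}_{j}$ makes the integrand identically zero, so $w(x,x')=0$. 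Combining with claim (1), the density ratio $w(x,x')/(w(x)w(x'))$ equals $K\cdot\mathbbm{1}_{\bigcup_{i}\mathbb{B}_{i}\times\mathbb{B}_{i}}(x,x')$, and subtracting $\lambda$ gives the claimed formula.

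For claim (3), I would verify the three conditions of Assumption~\ref{assumption:mixture of clusters} with $\mathbb{M}_{i}:=\mathbb{B}_{i}$. Condition \textbf{(A)} is the defining equality $\mathbb{X}=\bigcup_{i}\mathbb{B}_{i}$. Condition \textbf{(B)} follows from claim (2): for $x,x'\in\mathbb{B}_{i}$ one has $\mathrm{sim}(x,x';\lambda)=K-\lambda\geq\delta$, which is exactly the upper bound $\delta\leq K-\lambda$ (the lower bound $\delta>-\lambda$ is not needed for \textbf{(B)} but records that $\delta$ properly separates in-cluster from cross-cluster values of $\mathrm{sim}$). Condition \textbf{(C)} holds because the balls are pairwise disjoint, so $J_{x}=\{i\}$ whenever $x\in\mathbb{B}_{i}$ and hence $y(x)=i\in J_{x}$; the level sets $\{x:y(x)=i\}=\mathbb{B}_{i}$ are open, hence Borel measurable. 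No obstacle is anticipated; the only place that requires mild care is ensuring that the two-ball cross terms truly vanish, which is immediate from the disjointness hypothesis.
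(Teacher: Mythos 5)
Your proposal is correct and follows essentially the same route as the paper: direct evaluation of $w(x)$ and $w(x,x')$ using the piecewise-constant structure and disjointness of the balls, followed by division to get $\mathrm{sim}$, and a straightforward verification of conditions \textbf{(A)}, \textbf{(B)}, \textbf{(C)}. Your remark that the lower bound $\delta>-\lambda$ is not strictly needed for condition \textbf{(B)} but ensures the threshold $\delta$ cleanly separates in-cluster from cross-cluster similarity values is a small but accurate refinement of the paper's terser ``if and only if'' phrasing.
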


\begin{proof}
We first show the claim 1.
From the definition of $w(x)$, for every $x\in\mathbb{B}_{1}$ we have
\begin{align*}
    w(x)=\int_{\overline{\mathbb{X}}}a(x|\overline{x})p_{\overline{\mathbb{X}}}(\overline{x})d\overline{x}=\int_{\mathbb{B}_{1}} \frac{1}{K\left(\textup{vol}(\mathbb{B}_{1})\right)^{2}}d\overline{x}=\frac{1}{K\textup{vol}(\mathbb{B}_{1})}.
\end{align*}
Similarly, for each $i\in[K]$ we obtain $w(x)=(K\textup{vol}(\mathbb{B}_{1}))^{-1}$ for every $x\in\mathbb{B}_{i}$.
Since $\mathbb{X}=\overline{\mathbb{X}}=\bigcup_{i=1}^{K}\mathbb{B}_{i}$, we have that $w(x)>0$ for every $x\in\mathbb{X}$.

Next, let us show the claim 2.
From the claim 1, the function $\textup{sim}(x,x';\lambda)$ is well-defined.
To compute $\textup{sim}(x,x';\lambda)$, we need to know the function $w(x,x')$.
The computation of $w(x,x')$ is done as follows:
\begin{align*}
    w(x,x')&=\int_{\overline{\mathbb{X}}}a(x|\overline{x})a(x'|\overline{x})p_{\overline{\mathbb{X}}}(\overline{x})d\overline{x}\\
    &=\begin{cases}
      \int_{\mathbb{B}_{i}} \frac{1}{K\left(\textup{vol}(\mathbb{B}_{1})\right)^{3}}d\overline{x}\quad\textup{if }x,x'\in\mathbb{B}_{i}\textup{ for some }i\in[K]\\
      0 \quad\quad\quad\quad\quad\quad\quad\;\,\textup{if }x\in\mathbb{B}_{i}\textup{ and }x'\in\mathbb{B}_{j}\textup{ for some }i\neq j
    \end{cases}
    \\
    &=\begin{cases}
      \frac{1}{K\left(\textup{vol}(\mathbb{B}_{1})\right)^{2}}\quad\quad\;\;\quad\textup{if }x,x'\in\mathbb{B}_{i}\textup{ for some }i\in[K]\\
      0 \quad\quad\quad\quad\quad\quad\quad\;\,\textup{if }x\in\mathbb{B}_{i}\textup{ and }x'\in\mathbb{B}_{j}\textup{ for some }i\neq j.
    \end{cases}
\end{align*}
Hence, it is obvious that the claim 2 holds.

Finally, let us prove the claim 3.
However, from the claim 2 we see that $\textup{sim}(x,x';\lambda)\geq \delta$ if and only if $x,x'\in\mathbb{B}_{i}$ for some $i\in[K]$.
Furthermore, $y$ is well-defined and the set $\{x\in\mathbb{X}\;|\;y(x)=i\}=\mathbb{B}_{i}$ is measurable for every $i\in[K]$.
Thus, the claim 3 is also true, and we end the proof.
\end{proof}

\subsubsection{An Example When Clusters Overlap}
Here, we also deal with an example where the clusters in $\mathbb{X}$ have some overlap.
In the following proposition, for the sake of simplicity, we consider the case that there are two clusters in $\mathbb{X}$.

\begin{proposition}
Let $r>0$, and $v_{1},v_{2}\in\mathbb{R}^{p}$.
For each $i\in\{1,2\}$, let $\mathbb{B}(v_{i};r)\subset\mathbb{R}^{p}$ be the open ball of radius $r$ centered at point $v_{i}$.
Suppose that $\|v_{1}-v_{2}\|_{2}=3r$.
Define $\overline{\mathbb{X}}=\mathbb{B}(v_{1};r)\cup\mathbb{B}(v_{2};r)$, $\mathbb{X}=\mathbb{B}(v_{1};2r)\cup\mathbb{B}(v_{2};2r)$, and $a(x|\overline{x})=\textup{vol}(\mathbb{B}(v_{1};2r))^{-1}\sum_{i=1}^{2}\mathbbm{1}_{\mathbb{B}(v_{i};2r)\times\mathbb{B}(v_{i};r)}(x,\overline{x})$.
Let $p_{\overline{\mathbb{X}}}(\overline{x}):=(2\cdot\textup{vol}(\mathbb{B}(v_{1};r)))^{-1}$ be a probability density function of $P_{\overline{\mathbb{X}}}$.
Define $y:\mathbb{X}\to \{1,2\}$ as $y(x)=1$ if $x\in\mathbb{B}(v_{1};2r)$ and $y(x)=2$ if $x\in\mathbb{B}(v_{2};2r)\setminus\mathbb{B}(v_{1};2r)$.
Then, we have the following results:
\begin{enumerate}
    \item $w(x)>0$ for every $x\in\mathbb{X}$.
    \item $\textup{sim}(x,x';\lambda)=2-\lambda$ if $x,x'\in\mathbb{B}(v_{1};2r)\setminus\mathbb{B}(v_{2};2r)$ or $x,x'\in\mathbb{B}(v_{2};2r)\setminus\mathbb{B}(v_{1};2r)$, $\textup{sim}(x,x';\lambda)=-\lambda$ if $(x,x')\in(\mathbb{B}(v_{1};2r)\setminus\mathbb{B}(v_{2};2r))\times(\mathbb{B}(v_{2};2r)\setminus\mathbb{B}(v_{1};2r))$ or $(x,x')\in(\mathbb{B}(v_{2};2r)\setminus\mathbb{B}(v_{1};2r))\times(\mathbb{B}(v_{1};2r)\setminus\mathbb{B}(v_{2};2r))$, and $\textup{sim}(x,x';\lambda)=1-\lambda$ otherwise.
    \item Let $\delta\in(-\lambda,1-\lambda]$. Then, $\delta$, $K$, $\mathbb{B}_{1},\cdots,\mathbb{B}_{K}$, and y satisfy Assumption~\ref{assumption:mixture of clusters}.
\end{enumerate}
\end{proposition}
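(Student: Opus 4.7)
The plan is to mirror the structure used in the proof of the first example (Appendix G.1.1): compute $w(x)$ and $w(x,x')$ directly from their definitions, deduce $\text{sim}(x,x';\lambda)$ by case analysis, and then check the three conditions of Assumption~\ref{assumption:mixture of clusters}. The useful preliminary observations are that, since $\|v_{1}-v_{2}\|_{2}=3r>2r$, the small balls $\mathbb{B}(v_{1};r)$ and $\mathbb{B}(v_{2};r)$ are disjoint, while the large balls $\mathbb{B}(v_{1};2r)$ and $\mathbb{B}(v_{2};2r)$ do overlap (because $3r<4r$). Writing $V_{1}:=\textup{vol}(\mathbb{B}(v_{1};r))$ and $V_{2}:=\textup{vol}(\mathbb{B}(v_{1};2r))=2^{p}V_{1}$, disjointness of the small balls means the indicator sum defining $a(x|\overline{x})$ is effectively an indicator (never reaches value $2V_{2}^{-1}$), so $a(x|\overline{x})=V_{2}^{-1}$ exactly when some $i$ satisfies $x\in\mathbb{B}(v_{i};2r)$ and $\overline{x}\in\mathbb{B}(v_{i};r)$.

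For claim~1, I would split $\mathbb{X}$ into the three geometric regions $\mathbb{B}(v_{1};2r)\setminus\mathbb{B}(v_{2};2r)$, $\mathbb{B}(v_{2};2r)\setminus\mathbb{B}(v_{1};2r)$, and $\mathbb{B}(v_{1};2r)\cap\mathbb{B}(v_{2};2r)$. In either exclusive region only one indicator contributes, giving $w(x)=V_{2}^{-1}\cdot(2V_{1})^{-1}\cdot V_{1}=(2V_{2})^{-1}$; in the overlap both indicators contribute, giving $w(x)=V_{2}^{-1}$. In particular $w>0$ on $\mathbb{X}$.

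For claim~2, I pair up $x$ and $x'$ over those three regions, obtaining six subcases (three after using the symmetry $(x,x')\leftrightarrow(x',x)$). The joint density $w(x,x')=\int a(x|\overline{x})a(x'|\overline{x})p_{\overline{\mathbb{X}}}(\overline{x})d\overline{x}$ is computed region-by-region; the key observation is that $w(x,x')=0$ precisely when the supports in $\overline{x}$ of $a(\cdot|\overline{x})\cdot\mathbbm{1}_{x}$ and $a(\cdot|\overline{x})\cdot\mathbbm{1}_{x'}$ are disjoint, and by disjointness of $\mathbb{B}(v_{1};r),\mathbb{B}(v_{2};r)$ this occurs exactly in the opposite-exclusive-region case, yielding $\text{sim}=-\lambda$. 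Otherwise a direct integration produces $w(x,x')\in\{(2V_{2}^{2})^{-1},\,V_{2}^{-2}\}$; dividing by the already-computed $w(x)w(x')$ collapses everything to the two ratios $2$ and $1$, which match the stated $2-\lambda$ (both exclusive on the same side) and $1-\lambda$ (at least one endpoint in the overlap).

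For claim~3, the natural choice is $\mathbb{M}_{1}:=\mathbb{B}(v_{1};2r)$, $\mathbb{M}_{2}:=\mathbb{B}(v_{2};2r)$. Condition \textbf{(A)} is immediate from $\mathbb{X}=\mathbb{M}_{1}\cup\mathbb{M}_{2}$. For \textbf{(B)}, any pair $x,x'\in\mathbb{M}_{1}$ avoids the $(x,x')\in(\mathbb{M}_{1}\setminus\mathbb{M}_{2})\times(\mathbb{M}_{2}\setminus\mathbb{M}_{1})$ configuration, so claim~2 gives $\text{sim}(x,x';\lambda)\in\{1-\lambda,\,2-\lambda\}\geq\delta$ whenever $\delta\leq 1-\lambda$; the same holds for $\mathbb{M}_{2}$. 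For \textbf{(C)}, $y$ is well-defined on the disjoint decomposition $\mathbb{B}(v_{1};2r)\sqcup(\mathbb{B}(v_{2};2r)\setminus\mathbb{B}(v_{1};2r))$, its level sets are visibly measurable, and $y(x)\in J_{x}$ holds because $y(x)=i$ implies $x\in\mathbb{M}_{i}$. The only mildly subtle point is the vanishing-support argument used to pin down when $w(x,x')=0$; every other step is routine volume bookkeeping. No new ideas beyond those in Example~\ref{example:when the space meets the assumptions} are required.
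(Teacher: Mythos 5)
Your proposal is correct and follows essentially the same route as the paper's own proof: compute $w(x)$ by splitting $\mathbb{X}$ into the two exclusive regions and the intersection of the large balls, compute $w(x,x')$ case by case (using disjointness of the small balls $\mathbb{B}(v_1;r),\mathbb{B}(v_2;r)$ to detect when the joint density vanishes), form the ratio to get $\mathrm{sim}$, and then verify \textbf{(A)}, \textbf{(B)}, \textbf{(C)} with $\mathbb{M}_i=\mathbb{B}(v_i;2r)$. The minor differences — your explicit $V_1,V_2$ bookkeeping and your grouping of the joint-density cases by symmetry rather than the paper's four-case list A–D — are cosmetic; the computations and conclusions match.
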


\begin{proof}
Let $\overline{\mathbb{X}}_1$ (resp. $\overline{\mathbb{X}}_{2}$) denote $\mathbb{B}(v_{1};r)$, (resp. $\mathbb{B}(v_{2};r)$). Then, 
\begin{align*}
        w(x)&=\mathbb{E}\left[a(x|\overline{x})\right]\\
        &=\int_{\overline{\mathbb{X}}}a(x|\overline{x})p_{\overline{\mathbb{X}}} (\overline{x}) d\overline{x}\\
        &=\int_{\overline{\mathbb{X}}_{1}}a(x|\overline{x})p_{\overline{\mathbb{X}}} (\overline{x}) d\overline{x}+\int_{\overline{\mathbb{X}}_{2}}a(x|\overline{x})p_{\overline{\mathbb{X}}} (\overline{x}) d\overline{x}\\
        &=p_{\overline{\mathbb{X}}} (\overline{x})\left\{  \int_{\overline{\mathbb{X}}_{1}}a(x|\overline{x})d\overline{x}+\int_{\overline{\mathbb{X}}_{2}}a(x|\overline{x}) d\overline{x}   \right\}\tag{$p_{\overline{\mathbb{X}}}$ is a constant function}
\end{align*}
Here, we consider Case~1 and Case~2. Firstly, Case~1 is when either $x \in \mathbb{B}(v_1;2r) \setminus \mathbb{B}(v_2;2r)$ or $x \in \mathbb{B}(v_2;2r) \setminus \mathbb{B}(v_1;2r)$ holds. Since in this case, it is sufficient to prove for the case that $x \in \mathbb{B}(v_1;2r) \setminus \mathbb{B}(v_2;2r)$ holds, we may assume this condition. Then, $\int_{\overline{\mathbb{X}}_{1}}a(x|\overline{x})d\overline{x} = \text{vol}(\mathbb{B}(v_1;r))\text{vol}(\mathbb{B}(v_1;2r))^{-1}$  and $\int_{\overline{\mathbb{X}}_2}a(x|\overline{x})d\overline{x} = 0$. Thus, $w(x) = 1/(2\text{vol}(\mathbb{B}(v_1;2r)))$. Secondly, Case~2 is when $x \in \mathbb{B}(v_1;2r) \cap \mathbb{B}(v_2;2r)$. Then, $\int_{\overline{\mathbb{X}}_1}a(x|\overline{x})d\overline{x} = \int_{\overline{\mathbb{X}}_2}a(x|\overline{x})d\overline{x} = \text{vol}(\mathbb{B}(v_1;r))\text{vol}(\mathbb{B}(v_1;2r))^{-1}$. Thus, $w(x) = 1/\text{vol}(\mathbb{B}(v_1;2r))$. Since $r>0$, it implies $\text{vol}(\mathbb{B}(v_1;2r))>0$. Thus $w(x)>0$ for both cases.

Next, we compute
\begin{align*}
        w(x, x')&=\mathbb{E}_{\overline{x}}\left[a(x|\overline{x})a(x'|\overline{x})\right]\\
        &=\int_{\overline{\mathbb{X}}}a(x|\overline{x})a(x'|\overline{x})p_{\overline{\mathbb{X}}} (\overline{x}) d\overline{x}\\
        &=\int_{\overline{\mathbb{X}}_1}a(x|\overline{x})a(x'|\overline{x})p_{\overline{\mathbb{X}}} (\overline{x}) d\overline{x}\ + \int_{\overline{\mathbb{X}}_2}a(x|\overline{x})a(x'|\overline{x})p_{\overline{\mathbb{X}}} (\overline{x}) d\overline{x}\\
        &=p_{\overline{\mathbb{X}}} (\overline{x})\left\{  \int_{\overline{\mathbb{X}}_1}a(x|\overline{x}) a(x'|\overline{x})d\overline{x}+\int_{\overline{\mathbb{X}}_2}a(x|\overline{x})a(x'|\overline{x}) d\overline{x}   \right\}\tag{$p_{\overline{\mathbb{X}}}$ is a constant function}.
\end{align*}
Here, we consider Case~A, Case~B, Case~C, and Case~D. Firstly Case~A is that both $x$ and $x^\prime$ belong to $\mathbb{B}(v_1;2r)\setminus \mathbb{B}(v_2;2r)$ (note that the computation for the case that both $x$ and $x^\prime$ belong to $\mathbb{B}(v_2;2r)\setminus \mathbb{B}(v_1;2r)$ is the same). Then, $\int_{\overline{\mathbb{X}}_1}a(x|\overline{x}) a(x'|\overline{x})d\overline{x} = \text{vol}(\mathbb{B}(v_1;r))/\text{vol}(\mathbb{B}(v_1;2r))^2$ and $\int_{\overline{\mathbb{X}}_2}a(x|\overline{x}) a(x'|\overline{x})d\overline{x} = 0$. Hence, 
$w(x, x') = \{2(\text{vol}(\mathbb{B}(v_1;2r)))^2\}^{-1}$. Here recall that $w(x)=w(x')=1/(2\text{vol}(\mathbb{B}(v_1;2r)))$, then we have $\text{sim}(x, x';\lambda) = 2 - \lambda$. Secondly Case~B is that $x \in \mathbb{B}(v_1;2r)\setminus\mathbb{B}(v_{2};2r)$ and $x' \in \mathbb{B}(v_2;2r) \setminus \mathbb{B}(v_1;2r)$ (the calculation for the case that $x \in \mathbb{B}(v_2;2r)\setminus\mathbb{B}(v_{1};2r)$ and $x' \in \mathbb{B}(v_1;2r) \setminus \mathbb{B}(v_2;2r)$ is the same). Then, $\int_{\overline{\mathbb{X}}_1}a(x|\overline{x}) a(x'|\overline{x})d\overline{x}=\int_{\overline{\mathbb{X}}_2}a(x|\overline{x}) a(x'|\overline{x})d\overline{x}=0$. Therefore, $\text{sim}(x, x';\lambda) = 0 - \lambda = -\lambda$. Thirdly Case~C is that both $x$ and $x^\prime$ belong to $\mathbb{B}(v_1;2r) \cap \mathbb{B}(v_2;2r)$. Then, $\int_{\overline{\mathbb{X}}_1}a(x|\overline{x}) a(x'|\overline{x})d\overline{x} =\int_{\overline{\mathbb{X}}_2}a(x|\overline{x}) a(x'|\overline{x})d\overline{x} = \text{vol}(\mathbb{B}(v_1;r))/\text{vol}(\mathbb{B}(v_1;2r))^2$. Since $w(x) = w(x') = 1/\text{vol}(\mathbb{B}(v_1;2r))$, $\text{sim}(x, x';\lambda) = 1 - \lambda$.
Finally in Case~D, consider the complementary of the union of the other cases.
From the setting, we may assume that $x$ belongs to $\mathbb{B}(v_{1};2r)\cap\mathbb{B}(v_{2};2r)$ and $x'$ to $\mathbb{B}(v_{1};2r)\setminus\mathbb{B}(v_{2};2r)$.
Then, $\int_{\overline{\mathbb{X}}_{1}}a(x|\overline{x})a(x'|\overline{x})d\overline{x}=\textup{vol}(\mathbb{B}(v_{1};r))/\textup{vol}(\mathbb{B}(v_{1};2r))^{2}$ and $\int_{\overline{\mathbb{X}}_{2}}a(x|\overline{x})a(x'|\overline{x})d\overline{x}=0$.
Since $w(x)=1/\textup{vol}(\mathbb{B}(v_{1};2r))$ and $w(x')=1/(2\textup{vol}(\mathbb{B}(v_{1};2r)))$, we have $\textup{sim}(x,x';\lambda)=1-\lambda$.
As a result,
\begin{equation*}
    \text{sim}(x,x';\lambda)=\left\{ 
  \begin{array}{ c l }
    2 - \lambda,& \text{if Case~A holds}, \\
    -\lambda,   & \text{if Case~B holds}, \\
    1 - \lambda, & \text{if Case~C holds}, \\
    1 - \lambda, & \text{if Case~D holds}.
  \end{array}
  \right.
\end{equation*}

Finally, take $\delta\in(-\lambda,1-\lambda]$. Then, from the computation for $\textup{sim}(x,x';\lambda)$ above, the conditions in Assumption~\ref{assumption:mixture of clusters} are satisfied. 
\end{proof}

\subsection{SSL-HSIC Revisit}
\label{appsubsubsec:relations to ssl-hsic}
\citet{li2021selfsupervised} propose the framework termed SSL-HSIC, which is defined using the notion Hilbert-Schmidt Independence Criterion (HSIC, \citep{smola2007hilbert}).
They show that under some conditions, for a random variable $Z$ (resp. $Y$) that represents the feature vector (resp. the label), one obtains
\begin{align*}
    \textup{HSIC}(Z,Y)=c\left( \mathbb{E}_{x,x^{+}}\left[k(f(x),f(x^{+}))\right]-\mathbb{E}_{x,x^{-}}\left[k(f(x),f(x^{-}))\right]\right),
\end{align*}
where $c>0$.
\citet{li2021selfsupervised} define the loss of SSL-HSIC as,
\begin{align*}
    L_{\textup{SSL-HSIC}}(f;\kappa)=-\textup{HSIC}(Z,Y)+\kappa \sqrt{\textup{HSIC}(Z,Z)},
\end{align*}
where $\kappa\in\mathbb{R}$.

In the case that $\kappa>0$, we have
\begin{align*}
    L_{\textup{KCL}}(f;1)\lesssim L_{\textup{SSL-HSIC}}(f;\kappa).
\end{align*}

\subsection{Supplementary Information of Section~\ref{subsec:introduction to kernel contrastive loss}}
\label{appsubsec:supplementary information of section 3.2}

\subsubsection{Relations to Variants of InfoNCE}
\label{appsubsubsec:lin vs variant nce}

We first define variants of InfoNCE~\citep{oord2018representation,chen2020simple}:
\begin{itemize}
    \item Decoupled InfoNCE loss, which is a variant of the decoupled NT-Xent loss of~\citet{chen2021intriguing}:
    \begin{align*}
        \widetilde{L}_{\textup{NCE}}(f;\tau,\lambda)=
        -\mathbb{E}_{x,x^{+}}\left[\frac{f(x)^{\top}f(x^{+})}{\tau}\right]+\lambda
        \mathbb{E}_{\substack{x,x^{+}\\\{x_{i}^{-}\}}}\left[\log\left(e^{\frac{f(x)^{\top}f(x^{+})}{\tau}}+\sum_{i=1}^{M}e^{\frac{f(x)^{\top}f(x_{i}^{-})}{\tau}}\right)\right].
    \end{align*}
    \item Asymptotic of contrastive loss~\citep{wang2020understanding} decoupled by following the way of~\citet{chen2021intriguing}:
    \begin{align*}
        \widetilde{L}_{\infty\textup{-NCE}}(f;\tau,\lambda)=-\mathbb{E}_{x,x^{+}}\left[\frac{f(x)^{\top}f(x^{+})}{\tau}\right]+\lambda\mathbb{E}_{x}\left[\log\mathbb{E}_{x'}\left[e^{\frac{f(x)^{\top}f(x')}{\tau}}\right]\right],
    \end{align*}
    \item InfoNCE loss as a variant of decoupled contrastive learning loss~\citep{yeh2021decoupled}:
    \begin{align*}
        \widetilde{L}_{\textup{NCE}}(f;\tau,1)=
        -\mathbb{E}_{x,x^{+}}\left[\frac{f(x)^{\top}f(x^{+})}{\tau}\right]+
        \mathbb{E}_{x,\{x_{i}^{-}\}}\left[\log\left(\sum_{i=1}^{M}e^{\frac{f(x)^{\top}f(x_{i}^{-})}{\tau}}\right)\right].
    \end{align*}
    \item InfoNCE loss as a variant of decoupled contrastive learning loss with additional weight parameter, following~\citet{chen2021intriguing}:
    \begin{align*}
        \widetilde{L}_{\textup{NCE}}(f;\tau,\lambda)=
        -\mathbb{E}_{x,x^{+}}\left[\frac{f(x)^{\top}f(x^{+})}{\tau}\right]+\lambda
        \mathbb{E}_{x,\{x_{i}^{-}\}}\left[\log\left(\sum_{i=1}^{M}e^{\frac{f(x)^{\top}f(x_{i}^{-})}{\tau}}\right)\right].
    \end{align*}
\end{itemize}

Note that $L_{\textup{NCE}}(f;\tau)$ and $L_{\infty\textup{-NCE}}(f;\tau)$ in Section~\ref{sec:kcl} coincide with $\widetilde{L}_{\textup{NCE}}(f;\tau,1)$ and $\widetilde{L}_{\infty\textup{-NCE}}(f;\tau,1)$ in this subsection, respectively.
We show the following facts:

\begin{proposition}
\label{prop:relations to nce variants}
The following relations hold:
\begin{align}
\label{eq:lkcl vs dnce}
    \tau^{-1}L_{\textup{LinKCL}}(f;\lambda)&\leq \widetilde{L}_{\textup{NCE}}(f;\tau,\lambda)+\lambda\log M^{-1},\\
    \label{eq:lkcl vs dance}
    \tau^{-1}L_{\textup{LinKCL}}(f;\lambda)&\leq \widetilde{L}_{\infty\textup{-NCE}}(f;\tau,\lambda),\\
    \label{eq:lkcl vs ddcllnce}
    \tau^{-1}L_{\textup{LinKCL}}(f;\lambda)&\leq \widetilde{L}_{\textup{NCE}}(f;\tau,\lambda)+\lambda\log M^{-1}.
\end{align}
\end{proposition}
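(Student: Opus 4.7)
The plan is to derive all three inequalities via a single recurring device: Jensen's inequality applied to the log-sum-exp (or log-expectation) factor that appears as the ``negative'' part of each variant. The unifying observation is that the linear kernel contrastive loss is, up to the factor $\tau^{-1}$, the difference of the alignment term $\mathbb{E}_{x,x^{+}}[f(x)^{\top}f(x^{+})/\tau]$ and the raw negative term $\lambda\,\mathbb{E}_{x,x^{-}}[f(x)^{\top}f(x^{-})/\tau]$, whereas all the NCE variants replace the raw negative term by a log-mean-exp. Since $\log$ is concave, applying Jensen's inequality in the appropriate direction always recovers the raw negative term as a lower bound, with a $\log M^{-1}$ additive gap when a sum of $M$ exponentials is involved and no gap when an expectation is involved.

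First, for \eqref{eq:lkcl vs dance} I would operate directly on $\widetilde{L}_{\infty\textup{-NCE}}(f;\tau,\lambda)$. By concavity of $\log$ and Jensen's inequality, $\log\mathbb{E}_{x'}[e^{f(x)^{\top}f(x')/\tau}]\ge \mathbb{E}_{x'}[f(x)^{\top}f(x')/\tau]$ pointwise in $x$. Taking expectation over $x$ yields $\mathbb{E}_{x}[\log\mathbb{E}_{x'}[e^{f(x)^{\top}f(x')/\tau}]]\ge \mathbb{E}_{x,x^{-}}[f(x)^{\top}f(x^{-})/\tau]$, and subtracting the (common) alignment term from both sides produces the claim.

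Next, for \eqref{eq:lkcl vs ddcllnce} I would apply Jensen's inequality in the finite-sum form: $\log\bigl(\tfrac{1}{M}\sum_{i=1}^{M}e^{b_i}\bigr)\ge \tfrac{1}{M}\sum_{i=1}^{M}b_i$, hence $\log\bigl(\sum_{i=1}^{M}e^{b_i}\bigr)\ge \log M+\tfrac{1}{M}\sum_{i=1}^{M}b_i$. Setting $b_i=f(x)^{\top}f(x_i^{-})/\tau$ and taking expectation over $x$ and the i.i.d.\ negatives $\{x_i^{-}\}$ gives $\mathbb{E}[\log\sum_{i=1}^{M}e^{f(x)^{\top}f(x_i^{-})/\tau}]\ge \log M+\mathbb{E}_{x,x^{-}}[f(x)^{\top}f(x^{-})/\tau]$. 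Rearranging and using $-\log M=\log M^{-1}$ yields the bound. Finally, \eqref{eq:lkcl vs dnce} is handled by first using the trivial monotonicity estimate $\log(e^{a}+\sum_{i}e^{b_i})\ge \log(\sum_{i}e^{b_i})$ to drop the positive-pair term inside the log, and then invoking exactly the same Jensen argument as in \eqref{eq:lkcl vs ddcllnce}.

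There is no real obstacle here, only careful bookkeeping: one must ensure that the roles of $x^{+}$ versus $x^{-}$ are preserved when interchanging the log with the expectation, and that the $M$-sample negatives are handled as i.i.d.\ draws from $P_{\mathbb{X}}$ so that each term in the sum contributes the same expectation $\mathbb{E}_{x,x^{-}}[f(x)^{\top}f(x^{-})/\tau]$. Together, these three one-line Jensen estimates immediately yield \eqref{eq:lkcl vs dnce}, \eqref{eq:lkcl vs dance}, and \eqref{eq:lkcl vs ddcllnce}.
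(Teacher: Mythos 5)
Your proof is correct and uses essentially the same argument as the paper: for \eqref{eq:lkcl vs dance} and \eqref{eq:lkcl vs ddcllnce} you apply Jensen's inequality to $\log$ (after renormalizing the finite sum by $1/M$), and for \eqref{eq:lkcl vs dnce} you first drop the nonnegative positive-pair exponential inside the $\log$ and then reuse the same Jensen step. The only cosmetic difference is the order in which the three inequalities are derived; the key estimates and the bookkeeping of the $\log M^{-1}$ term coincide with the paper's proof.
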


\begin{proof}
From the definition of $L_{\textup{NCE}}(f;\tau,\lambda)$, we have
\begin{align*}
    &\;\;\;\;\widetilde{L}_{\textup{NCE}}(f;\tau,\lambda)+\lambda\log \frac{1}{M}\\
    &= -\tau^{-1}\mathbb{E}_{x,x^{+}}\left[f(x)^{\top}f(x^{+})\right]+\lambda\mathbb{E}_{x,x^{+},\{x_{i}^{-}\}}\left[\log \left(\frac{1}{M}e^{f(x)^{\top}f(x^{+})/\tau}+\frac{1}{M}\sum_{i=1}^{M}e^{f(x)^{\top}f(x_{i}^{-})/\tau}\right)\right]\\
    &\geq -\tau^{-1}\mathbb{E}_{x,x^{+}}\left[f(x)^{\top}f(x^{+})\right]+\lambda\mathbb{E}_{x,\{x_{i}^{-}\}}\left[\log \left(\frac{1}{M}\sum_{i=1}^{M}e^{f(x)^{\top}f(x_{i}^{-})/\tau}\right)\right]\\
    &\geq -\tau^{-1}\mathbb{E}_{x,x^{+}}\left[f(x)^{\top}f(x^{+})\right]+\tau^{-1}\lambda\mathbb{E}_{x,\{x_{i}^{-}\}}\left[\frac{1}{M}\sum_{i=1}^{M}f(x)^{\top}f(x_{i}^{-})\right]\\
    &=\tau^{-1}L_{\textup{LinKCL}}(f;\lambda),
\end{align*}
where in the first inequality we use the fact that $M^{-1}e^{f(x)^{\top}f(x^{+})/\tau}\geq 0$ for any $x,x^{+}\in\mathbb{X}$, and in the second inequality we use Jensen's inequality.
Note that when $\lambda=1$, we obtain \eqref{eq:lin vs nce}.

The proofs of \eqref{eq:lkcl vs ddcllnce} are almost the same as the proof of \eqref{eq:lkcl vs dnce}.
The equation \eqref{eq:lkcl vs dance} is obtained by applying Jensen's inequality.
\end{proof}

\subsubsection{Relations to SCL}
\label{appsubsubsec:relations to scl}
Let us define the quadratic kernel contrastive loss as:
\begin{align*}
    L_{\textup{QKCL}}(f;\lambda)=-\mathbb{E}_{x,x^{+}}\left[\left(f(x)^{\top}f(x^{+})\right)^{2}\right]+\lambda\mathbb{E}_{x,x^{-}}\left[\left(f(x)^{\top}f(x^{-})\right)^{2}\right].
\end{align*}
The spectral contrastive loss $L_{\textup{SCL}}(f)$~\citep{haochen2021provable} is defined as,
\begin{align}
\label{def:scl}
    L_{\textup{SCL}}(f)=-2\mathbb{E}_{x,x^{+}}[f(x)^{\top}f(x^{+})]+\mathbb{E}_{x,x^{-}}[(f(x)^{\top}f(x^{-}))^{2}].
\end{align}
The following proposition is an elementary result.

\begin{proposition}
\label{prop:pkcl and scl}
We have,
\begin{align*}
L_{\textup{QKCL}}(f;2^{-1})\leq \frac{1}{2}L_{\textup{SCL}}(f)+\frac{1}{4}.
\end{align*}
\end{proposition}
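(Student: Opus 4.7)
The plan is a short direct computation. First I would write out both sides explicitly: $\tfrac{1}{2}L_{\textup{SCL}}(f)+\tfrac{1}{4} = -\mathbb{E}_{x,x^{+}}[f(x)^{\top}f(x^{+})]+\tfrac{1}{2}\mathbb{E}_{x,x^{-}}[(f(x)^{\top}f(x^{-}))^{2}]+\tfrac{1}{4}$ and $L_{\textup{QKCL}}(f;1/2) = -\mathbb{E}_{x,x^{+}}[(f(x)^{\top}f(x^{+}))^{2}]+\tfrac{1}{2}\mathbb{E}_{x,x^{-}}[(f(x)^{\top}f(x^{-}))^{2}]$. The second ("negative") terms cancel, so the inequality reduces to showing
\begin{align*}
\mathbb{E}_{x,x^{+}}\bigl[f(x)^{\top}f(x^{+})-(f(x)^{\top}f(x^{+}))^{2}\bigr]\leq \tfrac{1}{4}.
\end{align*}

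Next I would invoke the elementary pointwise inequality $t-t^{2}\leq \tfrac{1}{4}$ for every $t\in\mathbb{R}$ (equivalently $(t-\tfrac{1}{2})^{2}\geq 0$), applied with $t=f(x)^{\top}f(x^{+})$. This is valid for all $(x,x^{+})\in\mathbb{X}\times\mathbb{X}$ regardless of the fact that $\|f(x)\|_{2}=1$; the normalization is not even needed here, although it does guarantee $t\in[-1,1]$. Taking expectation with respect to $P_{+}$ on both sides then yields the displayed bound, which is exactly what we need.

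There is essentially no obstacle — the inequality is a one-line pointwise bound followed by monotonicity of the expectation. The only thing worth noting is that equality in $t-t^{2}\leq 1/4$ is attained at $t=1/2$, so the constant $1/4$ in Proposition~\ref{prop:pkcl and scl} is tight at the pointwise level. I would simply present the computation in three displayed lines: (i) the expansion of $\tfrac{1}{2}L_{\textup{SCL}}(f)+\tfrac{1}{4}-L_{\textup{QKCL}}(f;1/2)$, (ii) the pointwise inequality $t-t^{2}\leq 1/4$, and (iii) the conclusion after taking $\mathbb{E}_{x,x^{+}}$.
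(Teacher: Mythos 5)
Your argument is correct and is essentially the same as the paper's: the paper's one-line proof invokes $t^{2}+\tfrac{1}{4}\geq t$ for all $t\in\mathbb{R}$, which is exactly your pointwise bound $t-t^{2}\leq\tfrac{1}{4}$, applied to $t=f(x)^{\top}f(x^{+})$ inside the positive-pair expectation while the $\lambda=\tfrac{1}{2}$ negative terms cancel. You have simply spelled out the intermediate cancellation and the role of monotonicity of expectation, which the paper leaves implicit.
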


\begin{proof}
Since $t^{2}+1/4\geq t$ for every $t\in\mathbb{R}$, we obtain the claim.
\end{proof}

\subsection{Comparison of Assumption~\ref{assumption:mixture of clusters} of our work to Assumption~3 in~\citet{haochen2022theoretical}}
\label{appsubsec:relation to haochen ma}

Let $\mathbb{M}$ be a measurable subset of $\mathbb{X}$, and let $g:\mathbb{X}\to\mathbb{R}$ be a function.
\citet{haochen2022theoretical} introduce the following notion that quantifies the inner-connectivity of clusters (see (4) in~\citet{haochen2022theoretical}):
\begin{align*}
    Q_{\mathbb{M}}(g):=\frac{\mathbb{E}_{x,x^{+}}[(g(x)-g(x^{+}))^{2}|\mathbb{M}\times\mathbb{M}]}{\mathbb{E}_{x,x^{-}}[(g(x)-g(x^{-}))^{2}|\mathbb{M}\times\mathbb{M}]}.
\end{align*}
Here, the expectations above are defined as
\begin{align*}
    \mathbb{E}_{x,x^{+}}[(g(x)-g(x^{+}))^{2}|\mathbb{M}\times\mathbb{M}]&=\int_{\mathbb{X}\times\mathbb{X}}(g(x)-g(x'))^{2}P_{+}(dx,dx'|\mathbb{M}\times\mathbb{M}),\\
    \mathbb{E}_{x,x^{-}}[(g(x)-g(x^{-}))^{2}|\mathbb{M}\times\mathbb{M}]&=\int_{\mathbb{X}}\int_{\mathbb{X}}(g(x)-g(x'))^{2}P_{\mathbb{X}}(dx|\mathbb{M})P_{\mathbb{X}}(dx'|\mathbb{M}),
\end{align*}
where we use the notation $dP_{+}=w(x,x')d\nu_{\mathbb{X}}^{\otimes 2}$.
We focus on the following notion, where~\citet{haochen2022theoretical} denote their subsets by $\{S_{1},\cdots,S_{m}\}$.
\begin{assumption}[$\mathcal{F}$-implementable inner-cluster connection larger than $\beta$, quoted from Assumption~3 in~\citet{haochen2022theoretical}]
For any function $f\in\mathcal{F}$ and any linear head $w\in\mathbb{R}^{k}$, let function $g(x)=w^{\top}f(x)$.
For any $i\in[m]$ we have that:
\begin{align*}
    Q_{S_{i}}(g)\geq \beta.
\end{align*}
\end{assumption}
In summary, the relation between Assumption~3 of~\citet{haochen2022theoretical} and Assumption~\ref{assumption:mixture of clusters} of our work is given below:
\begin{proposition}
\label{prop:implementable}
Suppose that Assumption~\ref{assumption:mixture of clusters} holds.
Take $\delta\in\mathbb{R}$, $K\in\mathbb{N}$, and $\mathbb{M}_{1},\cdots,\mathbb{M}_{K}$ such that the conditions \textup{\textbf{(A)}} and \textup{\textbf{(B)}} are satisfied.
Suppose also that: 1) there exists some $c>0$ such that for every $i\in[K]$, $c\cdot P_{+}(\mathbb{M}_{i}\times\mathbb{M}_{i})\leq P_{\mathbb{X}}(\mathbb{M}_{i})^{2}$ holds; 2) $\delta+\lambda\geq 0$ holds.
Then, the function class $\widetilde{\mathcal{F}}$ including all the maps from $\mathbb{X}$ to $\mathbb{S}^{d-1}$ satisfies Assumption~3 in~\citet{haochen2022theoretical}.
\end{proposition}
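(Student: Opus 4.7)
The plan is to produce an explicit constant $\beta>0$—namely $\beta=c(\lambda+\delta)$—such that $Q_{\mathbb{M}_{i}}(g)\geq\beta$ uniformly over $i\in[K]$ and over every function of the required form $g(x)=w^{\top}f(x)$ with $f\in\widetilde{\mathcal{F}}$ and $w\in\mathbb{R}^{k}$. Importantly, the bound will not depend on the particular choice of $f$ or $w$; the non-negativity of $(g(x)-g(x'))^{2}$ is the only property of $g$ the argument will use, and this takes care of the quantifier over $\widetilde{\mathcal{F}}$ for free. After choosing $\{S_{i}\}_{i=1}^{m}:=\{\mathbb{M}_{i}\}_{i=1}^{K}$, the proposition reduces to the single scalar inequality displayed below.

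The computation is a pointwise density comparison followed by integration. First, I would unfold the conditional expectations in the definition of $Q_{\mathbb{M}_{i}}$, writing the numerator as $P_{+}(\mathbb{M}_{i}\times\mathbb{M}_{i})^{-1}\int_{\mathbb{M}_{i}\times\mathbb{M}_{i}}(g(x)-g(x'))^{2}w(x,x')\,d\nu_{\mathbb{X}}^{\otimes 2}(x,x')$ and the denominator as $P_{\mathbb{X}}(\mathbb{M}_{i})^{-2}\int_{\mathbb{M}_{i}\times\mathbb{M}_{i}}(g(x)-g(x'))^{2}w(x)w(x')\,d\nu_{\mathbb{X}}^{\otimes 2}(x,x')$. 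Next, condition \textbf{(B)} of Assumption~\ref{assumption:mixture of clusters} rearranges to $w(x,x')\geq(\lambda+\delta)w(x)w(x')$ on $\mathbb{M}_{i}\times\mathbb{M}_{i}$; multiplying both sides by the non-negative $(g(x)-g(x'))^{2}$ (which preserves the inequality thanks to the hypothesis $\lambda+\delta\geq 0$) and integrating over $\mathbb{M}_{i}\times\mathbb{M}_{i}$ gives an $L^{1}$-bound between the two unnormalized integrals above. Dividing through by $P_{+}(\mathbb{M}_{i}\times\mathbb{M}_{i})$ and invoking hypothesis~1 to control $P_{\mathbb{X}}(\mathbb{M}_{i})^{2}/P_{+}(\mathbb{M}_{i}\times\mathbb{M}_{i})\geq c$ yields
\begin{align*}
Q_{\mathbb{M}_{i}}(g)\;\geq\;(\lambda+\delta)\cdot\frac{P_{\mathbb{X}}(\mathbb{M}_{i})^{2}}{P_{+}(\mathbb{M}_{i}\times\mathbb{M}_{i})}\;\geq\;c(\lambda+\delta),
\end{align*}
which is precisely the desired lower bound with $\beta:=c(\lambda+\delta)$.

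There is no deep obstacle here; the only subtlety is the degenerate case in which the denominator of $Q_{\mathbb{M}_{i}}(g)$ vanishes, i.e., $g$ is $P_{\mathbb{X}}$-almost-surely constant on $\mathbb{M}_{i}$. In that case the numerator also vanishes, and one should adopt the natural convention (tacit in~\citet{haochen2022theoretical}) $Q_{\mathbb{M}_{i}}(g):=+\infty$, under which the desired inequality holds trivially. Note also that hypothesis~1 implicitly forces $P_{\mathbb{X}}(\mathbb{M}_{i})>0$ for every $i$, so both conditional expectations are well-defined whenever the numerator is nonzero, and the argument goes through without additional nondegeneracy assumptions.
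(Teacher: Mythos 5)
Your proof is correct and follows essentially the same route as the paper's: rearrange condition \textbf{(B)} into the pointwise density bound $w(x,x')\geq(\lambda+\delta)w(x)w(x')$ on $\mathbb{M}_{i}\times\mathbb{M}_{i}$, multiply by $(g(x)-g(x'))^{2}$ and integrate, then use hypothesis 1 to pass from the unnormalized integrals to the conditional-expectation ratio, giving $Q_{\mathbb{M}_{i}}(g)\geq c(\lambda+\delta)$. One minor slip in your closing remark: hypothesis 1 does not in fact force $P_{\mathbb{X}}(\mathbb{M}_{i})>0$ (it is also satisfied when $P_{\mathbb{X}}(\mathbb{M}_{i})=P_{+}(\mathbb{M}_{i}\times\mathbb{M}_{i})=0$), so positivity of $P_{\mathbb{X}}(\mathbb{M}_{i})$ is a standing requirement of the conditional-expectation notation rather than a consequence of hypothesis 1.
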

\begin{proof}
Take an arbitrary $f\in\widetilde{\mathcal{F}}$ and $w\in\mathbb{R}^{d}$.
For each $i\in[K]$, for any $x,x'\in\mathbb{M}_{i}$ we have that $w(x,x')\geq (\delta+\lambda)w(x)w(x')$.
Since $\delta+\lambda\geq 0$,
\begin{align*}
    \int_{\mathbb{M}_{i}\times\mathbb{M}_{i}}(g(x)-g(x'))^{2}w(x,x')\nu_{\mathbb{X}}^{\otimes 2}(dx,dx')\geq (\delta+\lambda)\int_{\mathbb{M}_{i}\times\mathbb{M}_{i}}(g(x)-g(x'))^{2}w(x)w(x')\nu_{\mathbb{X}}^{\otimes 2}(dx,dx').
\end{align*}
Here, using $c\cdot P_{+}(\mathbb{M}_{i}\times\mathbb{M}_{i})\leq P_{\mathbb{X}}(\mathbb{M}_{i})^{2}$, we have
\begin{align*}
    &\;\;\;\;\frac{1}{P_{+}(\mathbb{M}_{i}\times\mathbb{M}_{i})}\int_{\mathbb{M}_{i}\times\mathbb{M}_{i}}(g(x)-g(x'))^{2}w(x,x')\nu_{\mathbb{X}}^{\otimes 2}(dx,dx')\\
    &\geq c(\delta+\lambda)\frac{1}{P_{\mathbb{X}}(\mathbb{M}_{i})^{2}}\int_{\mathbb{M}_{i}\times\mathbb{M}_{i}}(g(x)-g(x'))^{2}w(x)w(x')\nu_{\mathbb{X}}^{\otimes 2}(dx,dx').
\end{align*}
The above inequality means that,
\begin{align*}
    \int_{\mathbb{X}\times\mathbb{X}}(g(x)-g(x'))^{2}P_{+}(dx,dx'|\mathbb{M}_{i}\times\mathbb{M}_{i})\geq c(\delta+\lambda)\int_{\mathbb{X}}\int_{\mathbb{X}}(g(x)-g(x'))^{2}P_{\mathbb{X}}(dx|\mathbb{M}_{i})P_{\mathbb{X}}(dx'|\mathbb{M}_{i}).
\end{align*}
Thus, we obtain $Q_{\mathbb{M}_{i}}(g)\geq c(\delta+\lambda)$, where $g(x)=w^{\top}f(x)$.
\end{proof}
The above proposition indicates that the inner-connectivity $Q_{\mathbb{M}_{i}}(g)$ for any $i\in[K]$ and $g(x)=w^{\top}f(x)$ with $f\in\widetilde{\mathcal{F}}$ is lower bounded by $c(\delta+\lambda)$ under the assumptions.
Therefore, Assumption~\ref{assumption:mixture of clusters} of our work is a sufficient condition of Assumption~3 in~\citet{haochen2022theoretical} if $c\cdot P_{+}(\mathbb{M}_{i}\times\mathbb{M}_{i})\leq P_{\mathbb{X}}(\mathbb{M}_{i})^{2}$ (for every $i\in[K]$) and $\delta+\lambda\geq 0$ hold.
\begin{remark}
We can construct a positive value $c$ in the above statement explicitly.
In this remark, we show a simple way to do so.
Let $X,Y$ be random variables on a probability space $(\Omega,P_{\Omega})$ with the joint probability distribution $P_{+}$ and the marginal distribution $P_{\mathbb{X}}$.
Denote $p_{1}=P_{\Omega}(X\in\mathbb{M}_{i})$ and $p=P_{\Omega}(X\in\mathbb{M}_{i},Y\in\mathbb{M}_{i})$.
Here, let $V$ be the covariance matrix of the random variables $\mathbbm{1}_{\{X\in\mathbb{M}_{i}\}}$ and $\mathbbm{1}_{\{Y\in\mathbb{M}_{i}\}}$.
The positive semi-definiteness of $V$ implies the inequality $(p_{1}-p_{1}^{2})^{2}-(p-p_{1}^{2})^{2}\geq 0$.
This inequality is valid when $2p_{1}^{2}-p_{1}\leq p\leq p_{1}$.
Combining this fact with the property $p\geq 0$, we obtain
\begin{align*}
    \max\{2p_{1}^{2}-p_{1},0\}\leq p\leq p_{1},
\end{align*}
which implies,
\begin{align*}
    P_{\mathbb{X}}(\mathbb{M}_{i})\cdot\max\{2P_{\mathbb{X}}(\mathbb{M}_{i})-1,0\}\cdot P_{+}(\mathbb{M}_{i}\times\mathbb{M}_{i})\leq P_{\mathbb{X}}(\mathbb{M}_{i})^{2}.
\end{align*}
Thus, if $P_{\mathbb{X}}(\mathbb{M}_{i})>1/2$ holds for every $i\in [K]$, then we can take 
\begin{align*}
    c=\min_{i\in [K]}\{P_{\mathbb{X}}(\mathbb{M}_{i})(2P_{\mathbb{X}}(\mathbb{M}_{i})-1)\}.
\end{align*}
\end{remark}

\subsection{More Discussion about Generalization Bounds in Section~\ref{subsec:rethinking generalization of contrastive learning}}
\label{appsubsec:discussion of generalization bounds}

In this section, we discuss the differences between our generalization error bound and the results presented by other works on contrastive learning.
We summarize the differences below.

\begin{itemize}
    \item \citet{pmlr-v97-saunshi19a,ash2022investigating,lei2023generalization,zou2023generalization} consider the case that for a pair $(x,x^{+})$, $M$-tuple samples $(x_{1}^{-},\cdots,x_{M}^{-})$ independent from other random variables are available. Thus, our problem setup is different from them.
    Especially, in our analysis, it is also necessary to tackle the cases in which $X_{i},X_{i}'$, $i\in[n]$, are not necessarily independent and the standard techniques (e.g., see~\citet{mohri2018foundations}) cannot be applied.
    We instead utilized the results of McDiarmid's inequality for dependent random variables shown by~\citet{zhang2019mcdiarmid}.
    \item The empirical loss considered in~\citet{zhang2022fmutual} is defined in a different way from our empirical kernel contrastive loss. 
    Also, our proof technique is different from~\citet{zhang2022fmutual}.
    \item \citet{haochen2021provable,nozawa2020pac} consider the case in which the augmented samples are not necessarily independent. \citet{nozawa2020pac} utilize the theory on PAC-Bayes bounds~\citep{guedj2019primer}, and~\citet{haochen2021provable} provide the high probability bound. Our analysis is different from~\citet{nozawa2020pac} since our analysis is based on several concentration inequalities.
    \citet{haochen2021provable} consider the empirical spectral contrastive loss that is defined by raw samples and expressed in the expectation w.r.t. the augmented samples that are drawn according to the conditional distribution given the raw samples (see Section~4.1 in~\citet{haochen2021provable}).
    On the other hand, we derive a generalization error bound for the empirical kernel contrastive loss defined by only augmented samples.
    \item \citet{wang2022spectral} establish the generalization error bound for the spectral contrastive loss~\citep{haochen2021provable}, where their analysis improves the convergence rate of~\citet{haochen2021provable}. In their analysis, they decompose the second term of the spectral contrastive loss in a different way from us (for the detail of their decomposition, see the proof of Proposition~D.1 of~\citet{wang2022spectral}).
    Also, they utilize the concentration inequality shown by~\citet{clemencon2008ranking} (see equation (51) in~\citet{wang2022spectral}), while we use the results proved by~\citet{zhang2019mcdiarmid}. Thus, the techniques we use in the proof of Theorem~\ref{thm:dependent ulln} are different from those of~\citet{wang2022spectral}.
\end{itemize}

\subsection{Detailed Comparison to~\citet{robinson2020contrastive}}
\label{appsubsec:comparison to robinson}

\citet{robinson2020contrastive} tackle the hard negative sampling problem in contrastive learning from both the theoretical and empirical perspectives.
They also establish generalization bounds for their hard negative objectives by introducing the 1-NN classifier (for their definition of the 1-NN classifier, see the statement of Theorem~5 in~\citet{robinson2020contrastive}).
We give a detailed comparison between our results and the theoretical analysis by~\citet{robinson2020contrastive}.
The main differences are listed below:
\begin{itemize}
    \item The problem setup of the theoretical results by~\citet{robinson2020contrastive} is based on that of~\citet{pmlr-v97-saunshi19a}, i.e., they rely on the conditional independence assumption. On the other hand, we does not rely on it, where we utilize the similarity function $\textup{sim}(\cdot,\cdot;\lambda)$ instead.
    \item In the proof of Theorem~5 of~\citet{robinson2020contrastive} (see also Theorem~8 and the proof of their work), the supervised loss is upper-bounded by the term $\mathbb{E}_{c}\mathbb{E}_{x,x^{+}\sim_{iid} p(\cdot|c)}\|f(x)-f(x^{+})\|^{2}$. 
    In summary, the differences between Theorem 5 of~\citet{robinson2020contrastive} and Theorem~\ref{thm:guarantee} of our work are: (i) In the numerator of the upper bound in the proof of Theorem~8 of~\citet{robinson2020contrastive}, the term mentioned above appears. On the other hand, in Theorem~\ref{thm:guarantee} of our work, the quantity $\mathfrak{a}(f)$ appears. 
    (ii) Our upper bound includes the quantity $\Delta_{\textup{min}}(f)$.
    (iii) We also note that the proof techniques used in Theorem~\ref{thm:guarantee} in our work are different from~\citet{robinson2020contrastive}.
    \item Note that the label employed in the analysis by~\citet{robinson2020contrastive} is a random variable, while our analysis employ the deterministic labeling function.
\end{itemize}

\subsection{Detailed Comparison to~\citet{huang2021towards,zhao2023arcl}}
\label{appsubsec:comparison to huang}

\citet{huang2021towards} present the generalization bounds that utilizes the 1-NN classifier (for the definition of the 1-NN classifier introduced in~\citet{huang2021towards}, see Section~2 in their paper).
Besides, \citet{zhao2023arcl} extend the results of~\citet{huang2021towards}.
Thus, it is worth discussing the differences between the results by~\citet{huang2021towards,zhao2023arcl} and our Theorem~\ref{thm:guarantee}.
We summarize the differences below:

\begin{itemize}
    \item \citet{huang2021towards} show that if the centers of clusters in the feature space are sufficiently apart from each other (note that they call it \emph{divergence}), then their supervised error function is upper bounded by the \emph{alignment} term up to several constants and parameters.
    Hence, their results do not show that the \textit{divergence} relates directly to the supervised error, i.e., the \textit{divergence} term does not appear in their upper bounds of the supervised error.
    On the other hand, we show that the quantities related to the \textit{divergence} in the RKHS can also contribute to upper-bounding the supervised error (see Theorem~\ref{thm:guarantee}).
    \item In~\citet{huang2021towards,zhao2023arcl}, it is little investigated to what range of encoder models their results can apply. On the other hand, our Theorem~\ref{thm:guarantee} requires only the meaningfulness (Definition~\ref{def:meaningful encoders}) of encoders belonging to $\mathcal{F}$. Especially, suppose $k$ is the linear kernel, then Theorem~\ref{thm:guarantee} in our study refines Theorem~1 of~\citet{huang2021towards} in this sense.
    \item \citet{huang2021towards,zhao2023arcl} utilize the notion termed $(\sigma,\delta)$\textit{-augmentation}, while our analysis utilizes Assumption~\ref{assumption:mixture of clusters} based on the definition of the similarity function $\textup{sim}(\cdot,\cdot;\lambda)$.
    \item \citet{huang2021towards,zhao2023arcl} often use the assumption that the encoder $f$ is a Lipschitz function. Meanwhile, our main result does not require that $f$ should be a Lipschitz function.
    \item \citet{zhao2023arcl} consider the squared loss for the downstream classification task (see Theorem~3.2 in their paper). On the other hand, we consider the classification error.
\end{itemize}

\section{Connections between KCL and Normalized Cut}
\label{appsec:connections between kcl and normalized cut}

In this section, we present supplementary information of Section~\ref{subsubsec:key ingredient}.
Throughout this section, we assume that $\inf_{x\in\mathbb{X}}w(x)<\infty$ and $\sup_{\overline{x}\in\overline{\mathbb{X}}}\sup_{x\in\mathbb{X}}a(x|\overline{x})<\infty$ hold.
Note that the assumption $\sup_{\overline{x}\in\overline{\mathbb{X}}}\sup_{x\in\mathbb{X}}a(x|\overline{x})<\infty$ implies $\sup_{x\in\mathbb{X}}w(x)<\infty$.

\subsection{The Problem Setup of Normalized Cut}
\label{appsubsec: review of ncut}
In this section, we first explain the population-level normalized cut problem based on~\citet{shi2000normalized,von2007tutorial,terada2019kernel}.
Suppose that there are total $K$ clusters in $\mathbb{X}$.
Following~\citet{terada2019kernel}, the optimization problem of the population-level normalized cut is given as:
\begin{align}
 \label{def:ncut}
 \min_{\mathbb{V}_{1},\cdots,\mathbb{V}_{K}}\sum_{i=1}^{K}\frac{W(\mathbb{V}_{i},\mathbb{V}_{i}^{c})}{\textup{vol}(\mathbb{V}_{i})}
\end{align}
where the minimum in the above problem is taken over all the possible combinations of $K$ disjoint non-empty measurable subsets $\mathbb{V}_{1},\cdots,\mathbb{V}_{K}$ satisfying $\bigcup_{i=1}^{K}\mathbb{V}_{i}=\mathbb{X}$, and $W$ and $\textup{vol}(\cdot)$ are defined as,
\begin{align}
  \label{def:cut capacity}
  W(\mathbb{V}_{i},\mathbb{V}_{i}^{c})&=\int_{(x,x')\in\mathbb{V}_{i}\times\mathbb{V}_{i}^{c}}\text{sim}(x,x';\lambda)w(x)w(x')d\nu_{\mathbb{X}}^{\otimes 2}(x,x'),\\
  \label{def:volume}
  \textup{vol}(\mathbb{V}_{i})&=\int_{\mathbb{V}_{i}}w(x)d\nu_{\mathbb{X}}(x),
\end{align}
where $\nu_{\mathbb{X}}^{\otimes 2}:=\nu_{\mathbb{X}}\otimes \nu_{\mathbb{X}}$ is the product measure.
Here also note that $\textup{vol}(\cdot)$ is the volume of a set $\mathbb{V}_{i}$.
\citet{terada2019kernel} consider the case that a reproducing kernel is used as similarity measurement: see Theorem 7 in~\citet{terada2019kernel}.
Note that some existing work deals with the measurable partition problems such as the ratio cut and Cheeger cut~\citep{trillos2016consistency}.

Denote by $L^{2}(\mathbb{X},P_{\mathbb{X}})$, the Hilbert space over the field $\mathbb{R}$ consisting of real-valued and squared-integrable function defined on $\mathbb{X}$ for $P_{\mathbb{X}}$-a.e., with its inner product $\langle f,g\rangle_{L^{2}(\mathbb{X},P_{\mathbb{X}})}=\int f(x)g(x)dP_{\mathbb{X}}(x)$.
Let $U:\mathbb{R}^{K}\to L^{2}(\mathbb{X},P_{\mathbb{X}})$ be a linear operator defined as,
\begin{align}
\label{def:assignment matrix}
 (Uz)(\cdot)=\sum_{i=1}^{K}\frac{\mathbbm{1}_{\mathbb{V}_{i}}(\cdot)}{\sqrt{\textup{vol}(\mathbb{V}_{i})}}z_{i},\;\;z=(z_1,\cdots,z_{K})^{\top},
\end{align}
where $\mathbbm{1}_{\mathbb{V}_{i}}(x)=1$ if $x\in\mathbb{V}_{i}$ and 0 if $x\notin \mathbb{V}_{i}$, and $z_{i}:=\langle z,e_{i}\rangle_{\mathbb{R}^{K}}$ for each $i\in[K]$ with an orthonormal basis $\{e_{i}\}_{i=1}^{K}$ of $\mathbb{R}^{K}$.
Note that under the setting that $|\mathbb{X}|<\infty$, the linear operator $U$ is equal to $(\mathbbm{1}_{\mathbb{V}_{j}}(x_{i})/\sqrt{\textup{vol}(\mathbb{V}_{j})})_{ij}$.
Therefore, the definition of $U$ matches that of the classical theory of normalized cut~\citep{shi2000normalized}.
Moreover, every augmented data $x\in \mathbb{X}$ belongs to one of the subsets $\mathbb{V}_{1},\cdots,\mathbb{V}_{K}$.
Since $\mathbb{V}_{1},\cdots,\mathbb{V}_{k}$ are disjoint, linear operator $U$ is bounded, and the adjoint operator $U^{\dag}$ exists uniquely.
Here, \citet{von2007tutorial} explain that the objective function of the normalized cut problem can be rewritten as a combinatorial optimization problem.
Applying the arguments presented by~\citet{von2007tutorial} to our setup, we have
\begin{align}
\label{eq:normalized cut rearranged}
  \sum_{i=1}^{K}\frac{W(\mathbb{V}_{i},\mathbb{V}_{i}^{c})}{\text{vol}(\mathbb{V}_{i})}&=-\textup{Tr}(U^{\dag}AU)+(1-\lambda)K,
\end{align}
where $A:L^{2}(\mathbb{X},P_{\mathbb{X}})\to L^{2}(\mathbb{X},P_{\mathbb{X}})$ is a Hilbert-Schmidt integral operator defined as
\begin{equation}
  \label{def: adjacency matrix}
  A\psi(\cdot)=\int \text{sim}(\cdot,x;\lambda)\psi(x)w(x)d\nu_{\mathbb{X}}(x)\quad \psi\in L^{2}(\mathbb{X},P_{\mathbb{X}}),
\end{equation}
and $-\textup{Tr}(U^{\dag}AU)=-\sum_{i=1}^{K}\langle U^{\dag}AUe_{i},e_{i}\rangle_{\mathbb{R}^{K}}$.
The proof of \eqref{eq:normalized cut rearranged} closely follows that of~\citet{von2007tutorial}; in Appendix~\ref{appsubsec:proof of (7)}, we present the proof of an extended version.
Here the following proposition shows the well-definedeness of $A$.

\begin{proposition}
\label{prop:A is well-defined}
Suppose the setting described in Section~\ref{subsec:problem setup} holds, $\inf_{x\in\mathbb{X}}w(x)>0$, and $\sup_{\overline{x}\in\overline{\mathbb{X}}}\sup_{x\in\mathbb{X}}a(x|\overline{x})<\infty$ holds. Then, the integral operator $A$ is well-defined.
\end{proposition}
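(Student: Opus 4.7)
The plan is to show that the kernel $\operatorname{sim}(\cdot,\cdot;\lambda)$ is essentially bounded on $\mathbb{X}\times\mathbb{X}$, and then to invoke the elementary fact that a bounded kernel against a probability measure induces a bounded operator on the corresponding $L^{2}$ space. Since $\inf_{x\in\mathbb{X}} w(x) > 0$, this essentially reduces to bounding the density ratio $w(x,x')/(w(x)w(x'))$ uniformly.

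First I would rewrite $A$ in a form better suited to analysis in $L^{2}(\mathbb{X},P_{\mathbb{X}})$. Using the relation $dP_{\mathbb{X}}(x) = w(x)d\nu_{\mathbb{X}}(x)$, the defining formula becomes
\begin{align*}
A\psi(x) = \int \operatorname{sim}(x,x';\lambda)\,\psi(x')\,dP_{\mathbb{X}}(x'),
\end{align*}
so that $A$ is realized as an integral operator on $L^{2}(\mathbb{X},P_{\mathbb{X}})$ with kernel $\operatorname{sim}(\cdot,\cdot;\lambda)$ against the probability measure $P_{\mathbb{X}}$.

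The key step is to establish a uniform bound on the kernel. Setting $M_{a} := \sup_{\overline{x}\in\overline{\mathbb{X}}}\sup_{x\in\mathbb{X}} a(x|\overline{x})$, the definition $w(x,x') = \mathbb{E}_{\overline{x}\sim P_{\overline{\mathbb{X}}}}[a(x|\overline{x})a(x'|\overline{x})]$ yields $w(x,x') \leq M_{a}\,w(x')$, and symmetrically $w(x,x') \leq M_{a}\,w(x)$. Combined with $m_{w} := \inf_{x\in\mathbb{X}} w(x) > 0$, this gives the pointwise bound
\begin{align*}
\bigl|\operatorname{sim}(x,x';\lambda)\bigr| \;\leq\; \frac{M_{a}}{m_{w}} + \lambda \;=:\; B \;<\;\infty .
\end{align*}

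With this bound in hand, well-definedness is routine. For any $\psi\in L^{2}(\mathbb{X},P_{\mathbb{X}})$, Cauchy–Schwarz and the fact that $P_{\mathbb{X}}$ is a probability measure give $|A\psi(x)|^{2} \leq B^{2}\|\psi\|_{L^{2}(P_{\mathbb{X}})}^{2}$ for every $x$, so $A\psi \in L^{2}(\mathbb{X},P_{\mathbb{X}})$ and $A$ is a bounded linear operator. In fact, since $\operatorname{sim}$ is bounded and $P_{\mathbb{X}}^{\otimes 2}$ is finite, the kernel lies in $L^{2}(P_{\mathbb{X}}^{\otimes 2})$, so $A$ is Hilbert–Schmidt as asserted. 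The only non-mechanical step is the bound $w(x,x')\leq M_{a}\min\{w(x),w(x')\}$; everything else is standard integral-operator theory.
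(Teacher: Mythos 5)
Your proof is correct and uses the same two hypotheses the paper's proof relies on, but you organize the argument somewhat differently, and your version is cleaner. The paper expands $\operatorname{sim}^{2}$ into three terms and checks finiteness of an integral term-by-term (as written, the paper integrates the squared diagonal $\operatorname{sim}^{2}(x,x;\lambda)$ against $w(x)\,d\nu_{\mathbb{X}}(x)$, which is most naturally read as a typo for the double integral defining the Hilbert--Schmidt norm); you instead isolate the key pointwise bound $w(x,x')\leq M_{a}\min\{w(x),w(x')\}$, deduce the uniform estimate $\bigl|\operatorname{sim}(x,x';\lambda)\bigr|\leq M_{a}/m_{w}+\lambda$, and from that obtain both boundedness and the Hilbert--Schmidt property essentially for free because $P_{\mathbb{X}}$ is a probability measure. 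This is a tidier route: the uniform bound on the similarity function is the real content, and once it is stated explicitly, everything else is standard. One small remark: your bound on $|\operatorname{sim}|$ is not tight (the true range is $[-\lambda,\, M_{a}/m_{w}-\lambda]$, so $|\operatorname{sim}|\leq\max\{\lambda,\, M_{a}/m_{w}-\lambda\}$ suffices), but the looser constant is perfectly adequate for the conclusion.
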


\begin{proof}
We can evaluate
\begin{align*}
    \left|\int\textup{sim}^{2}(x,x;\lambda)w(x)d\nu_{\mathbb{X}}(x)\right|
    &=\left|\int\left(\frac{w(x,x)}{w(x)w(x)}-\lambda\right)^{2}w(x)d\nu_{\mathbb{X}}(x)\right|\\
    &\leq \int\left(\left(\frac{w(x,x)}{w(x)w(x)}\right)^{2}+2\lambda\frac{w(x,x)}{w(x)w(x)}+\lambda^{2}\right)w(x)d\nu_{\mathbb{X}}(x)\\
    &<+\infty,
\end{align*}
where we use the assumptions that $\inf_{x\in\mathbb{X}}w(x)>0$, $\sup_{\overline{x}\in\overline{\mathbb{X}}}\sup_{x\in\mathbb{X}}a(x|\overline{x})<\infty$.
\end{proof}

Suppose the dimension of the RKHS $\mathcal{H}_{k}$ associated with the kernel function $k$ is greater than or equal to $K$.
In the following section, it is convenient to redefine \eqref{def:ncut} as,
\begin{align*}
    \min_{\mathbb{V}_{1},\cdots,\mathbb{V}_{K}}\sum_{i=1}^{\infty}\frac{W(\mathbb{V}_{i},\mathbb{V}_{i}^{c})}{\textup{vol}(\mathbb{V}_{i})},
\end{align*}
where we define $\mathbb{V}_{j}=\emptyset$ for every $j>K$.
Also, let us redefine \eqref{def:assignment matrix} as the linear operator $U:\mathcal{H}_{k}\to L^{2}(\mathbb{X},P_{\mathbb{X}})$,
\begin{align*}
    (U\psi)(\cdot)=\sum_{i=1}^{\infty}\frac{\mathbbm{1}_{\mathbb{V}_{i}}(\cdot)}{\sqrt{\textup{vol}(\mathbb{V}_{i})}}\langle \psi,e_{i}\rangle_{\mathcal{H}_{k}},
\end{align*}
where $\{e_{j}\}_{j=1}^{\infty}$ is an orthonormal basis of $\mathcal{H}_{k}$ (if $\mathcal{H}_{k}$ is finite dimensional, then we understand that $\{e_{j}\}_{j=1}^{\infty}$ consists of finitely many non-zero elements), and we define $\mathbbm{1}_{\mathbb{V}_{i}}(\cdot)/\sqrt{\textup{vol}(\mathbb{V}_{i})}=0$ for every $i>K$ as notations.
Then, we have the following identity that is analogous of \eqref{eq:normalized cut rearranged}:
\begin{align}
    \label{eq:normalized cut rkhs}
    \sum_{i=1}^{\infty}\frac{W(\mathbb{V}_{i},\mathbb{V}_{i}^{c})}{\textup{vol}(\mathbb{V}_{i})}=-\textup{Tr}(U^{\dag}AU)+(1-\lambda)K.
\end{align}
For the sake of completeness, we provide the proof of the identity \eqref{eq:normalized cut rkhs} in Appendix~\ref{appsubsec:proof of (7)}.

\subsection{Connecting KCL and Normalized Cut via RKHS}
\label{appsubsec:generalized explanation}

Let $k:\mathbb{S}^{d-1}\times\mathbb{S}^{d-1}$ be a continuous, symmetric, and positive-definite kernel function whose RKHS $\mathcal{H}_{k}$ is $K$-dimensional Hilbert space ($K$ is either finite or $\infty$); For the theory of reproducing kernels, see e.g., \citet{aronszajn1950theory,berlinet2004reproducing,steinwart2008support}.
Many kernel functions satisfy these conditions, e.g. the Gaussian kernel, the polynomial kernel, and the linear kernel.
Since $\mathbb{S}^{d-1}$ is separable, the RKHS $\mathcal{H}_{k}$ has an orthonormal basis that is at most countable (e.g., see~\citet{berlinet2004reproducing}).
Let $\{e_{j}\}_{j=1}^{\infty}$ be a countable orthonormal basis of $\mathcal{H}_{k}$, where $\{e_{j}\}_{j=1}^{\infty}$ includes only finitely many non-zero elements if $\mathcal{H}_{k}$ is finite-dimensional.
Note that our construction is valid regardless of the choice of $\{e_{j}\}_{j=1}^{\infty}$.
Recall the problem setup presented in Section~\ref{subsec:problem setup}.
Then, the linear operator $H:\mathcal{H}_{k}\to L^{2}(\mathbb{X},P_{\mathbb{X}})$ is defined as
\begin{align}
 \label{eq:H cond}
 (H\varphi)(\cdot)=\langle h(f(\cdot)),\varphi\rangle_{\mathcal{H}_{k}},
\end{align}
for $\varphi\in\mathcal{H}_{k}$.
Let $\|\cdot\|_{\mathcal{H}_{k}}$ be the norm of the RKHS $\mathcal{H}_{k}$.
Then the following holds for the linear operator $H$ defined in \eqref{eq:H cond}:
\begin{proposition}
  \label{proposition: boundedness}
  The linear operator $H$ is well-defined, i.e., $H\varphi\in L^{2}(\mathbb{X},P_{\mathbb{X}})$ for every $\varphi\in\mathcal{H}_{k}$. Also, $H$ is continuous, i.e., for a sequence $\varphi_{j}$ converging strongly to $\varphi$ in $\mathcal{H}_{k}$, we have that $H\varphi_{j}$ is convergent to $H\varphi$.
\end{proposition}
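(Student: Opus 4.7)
The plan is to derive both claims from a single Cauchy--Schwarz estimate together with the uniform boundedness of $k(z,z)$ on $\mathbb{S}^{d-1}$. First I would verify measurability of $H\varphi$: since $k$ is continuous on $\mathbb{S}^{d-1}\times\mathbb{S}^{d-1}$ and $f:\mathbb{X}\to\mathbb{S}^{d-1}$ is measurable (any $f\in\mathcal{F}$ is a normalized version of some $f_{0}\in\mathcal{F}_{0}$, which is a measurable map into $\mathbb{R}^{d}$), the map $x\mapsto k(f(x),z)=\psi(f(x)^{\top}z)$ is measurable for every fixed $z\in\mathbb{S}^{d-1}$; expanding $\varphi$ in the orthonormal basis $\{e_{j}\}$ of $\mathcal{H}_{k}$ and using the reproducing property, $(H\varphi)(x)=\langle h(f(x)),\varphi\rangle_{\mathcal{H}_{k}}=\sum_{j}\langle \varphi,e_{j}\rangle_{\mathcal{H}_{k}}\,e_{j}(f(x))$ is a pointwise limit of measurable functions and hence measurable.

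Next I would establish the key norm bound. By Cauchy--Schwarz in $\mathcal{H}_{k}$ and the reproducing property $\|h(z)\|_{\mathcal{H}_{k}}^{2}=k(z,z)=\psi(1)$ under Assumption~\ref{assumption:kernel},
\begin{align*}
|(H\varphi)(x)|^{2}=|\langle h(f(x)),\varphi\rangle_{\mathcal{H}_{k}}|^{2}\leq \|h(f(x))\|_{\mathcal{H}_{k}}^{2}\|\varphi\|_{\mathcal{H}_{k}}^{2}=\psi(1)\|\varphi\|_{\mathcal{H}_{k}}^{2}.
\end{align*}
(Without Assumption~\ref{assumption:kernel}, one replaces $\psi(1)$ by $\sup_{z\in\mathbb{S}^{d-1}}k(z,z)$, which is finite because $k$ is continuous and $\mathbb{S}^{d-1}$ is compact.) Integrating against the probability measure $P_{\mathbb{X}}$ yields
\begin{align*}
\|H\varphi\|_{L^{2}(\mathbb{X},P_{\mathbb{X}})}^{2}=\int_{\mathbb{X}}|(H\varphi)(x)|^{2}\,dP_{\mathbb{X}}(x)\leq \psi(1)\|\varphi\|_{\mathcal{H}_{k}}^{2}<+\infty,
\end{align*}
so $H\varphi\in L^{2}(\mathbb{X},P_{\mathbb{X}})$ for every $\varphi\in\mathcal{H}_{k}$, which settles well-definedness.

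Finally, continuity is immediate from linearity of $H$ combined with the same bound applied to $\varphi_{j}-\varphi$: since $\|H(\varphi_{j}-\varphi)\|_{L^{2}(\mathbb{X},P_{\mathbb{X}})}\leq \sqrt{\psi(1)}\,\|\varphi_{j}-\varphi\|_{\mathcal{H}_{k}}$, strong convergence $\varphi_{j}\to\varphi$ in $\mathcal{H}_{k}$ forces $H\varphi_{j}\to H\varphi$ in $L^{2}(\mathbb{X},P_{\mathbb{X}})$. No step here is substantially obstructive; the only mild subtlety is the measurability check for $H\varphi$, which I would handle via the basis expansion argument above rather than directly, since $\varphi$ need not be continuous on $\mathbb{S}^{d-1}$ a priori.
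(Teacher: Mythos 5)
Your proof is correct and follows essentially the same approach as the paper: Cauchy--Schwarz plus the boundedness of $k$ on the compact sphere $\mathbb{S}^{d-1}$ to get the $L^{2}$ bound, then linearity for continuity. The only difference is that you explicitly verify measurability of $H\varphi$ via the basis expansion, a point the paper leaves implicit; this is a small but legitimate tightening rather than a different argument.
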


\begin{proof}
  For any $\varphi\in\mathcal{H}_{k}$, we have
  \begin{align*}
    \int |(H\varphi)(x)|^{2}w(x)d\nu_{\mathbb{X}}(dx)
    &=\int|\langle h(f(x)),\varphi\rangle_{\mathcal{H}_{k}}|^{2}w(x)d\nu_{\mathbb{X}}(x)\\
    &\leq \int \|h(f(x))\|_{\mathcal{H}_{k}}^{2}\|\varphi\|_{\mathcal{H}_{k}}^{2}w(x)d\nu_{\mathbb{X}}(x)\\
    &\leq \|\varphi\|_{\mathcal{H}_{k}}^{2}\mathbb{E}_{x}\left[k(f(x),f(x))\right]<+\infty.
  \end{align*}
  Here in the second inequality we use the Cauchy-Schwarz inequality, and in the last equality we use the fact that $\mathbb{S}^{d-1}$ is compact and $k$ is continuous.
  Furthermore, if $\varphi_{j}\to\varphi$ in the sense of strongly convergence in $\mathcal{H}_{k}$, then we have
  \begin{align*}
    \|\langle h(f(\cdot)),\varphi_{j}\rangle_{\mathcal{H}_{k}}-\langle h(f(\cdot)),\varphi\rangle_{\mathcal{H}_{k}}\|_{L^{2}(\mathbb{X},P_{\mathbb{X}})}^{2}
    &=\int |\langle h(f(x)),\varphi_{j}-\varphi\rangle_{\mathcal{H}_{k}}|^{2}w(x)d\nu_{\mathbb{X}}(x)\\
    &\leq \|\varphi_{j}-\varphi\|_{\mathcal{H}_{k}}^{2}\mathbb{E}_{x}\left[k(f(x),f(x))\right]\\
    &\longrightarrow 0\quad(j\to\infty).
  \end{align*}
  Thus $H\varphi_{j}$ converges to $H\varphi$, and we end the proof.
\end{proof}
Proposition \ref{proposition: boundedness} implies that $H$ is bounded. Therefore, the adjoint operator $H^{\dag}:L^{2}(\mathbb{X},P_{\mathbb{X}})\to\mathcal{H}_{k}$ exists uniquely. 

Now let us recall the definition of the similarity function $\text{sim}(\cdot, \cdot;\lambda)$ with the fixed $\lambda$ in \eqref{def:similarity function}, and we consider to relax the combinatorial problem \eqref{def:ncut} using the linear operator $H$ defined in \eqref{eq:H cond} as follows:
we replace the linear operator $U$ in \eqref{eq:normalized cut rearranged} with $H$, which results in the objective function $-\text{Tr}(H^{\dag}AH)$.
Then, the following proposition holds.
\begin{proposition}
 \label{prop:operator ncut to expectation based ncut}
 We have
 \begin{align*}
 -\textup{Tr}(H^{\dag}AH)=-\mathbb{E}_{x,x^{+}}\left[k(f(x),f(x^{+}))\right]+\lambda \mathbb{E}_{x,x^{-}}\left[k(f(x),f(x^{-}))\right].
 \end{align*}
\end{proposition}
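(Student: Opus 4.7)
The plan is to expand the trace by unfolding the adjoint pairing, carry out the integration against the integral kernel defining $A$, and then collapse the resulting sum over an orthonormal basis using the reproducing property.

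First, I would write
\begin{align*}
\textup{Tr}(H^{\dag}AH)=\sum_{j}\langle H^{\dag}AHe_{j},e_{j}\rangle_{\mathcal{H}_{k}}=\sum_{j}\langle AHe_{j},He_{j}\rangle_{L^{2}(\mathbb{X},P_{\mathbb{X}})},
\end{align*}
which is valid since $H$ is bounded by Proposition~\ref{proposition: boundedness} and $A$ is Hilbert–Schmidt (Proposition~\ref{prop:A is well-defined}). Substituting the definitions of $A$ and $H$ and expanding the $L^{2}(\mathbb{X},P_{\mathbb{X}})$ inner product (so $dP_{\mathbb{X}}(x)=w(x)d\nu_{\mathbb{X}}(x)$) yields, for each $j$,
\begin{align*}
\langle AHe_{j},He_{j}\rangle_{L^{2}(\mathbb{X},P_{\mathbb{X}})}=\int\!\!\int \textup{sim}(x,x';\lambda)\,\langle h(f(x')),e_{j}\rangle_{\mathcal{H}_{k}}\langle h(f(x)),e_{j}\rangle_{\mathcal{H}_{k}}w(x)w(x')d\nu_{\mathbb{X}}^{\otimes 2}(x,x').
\end{align*}

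Next I would justify interchanging the sum over $j$ with the double integral. Because $\mathbb{S}^{d-1}$ is compact and $k$ is continuous, $M:=\sup_{z\in\mathbb{S}^{d-1}}k(z,z)<\infty$, so Parseval gives $\sum_{j}|\langle h(f(x)),e_{j}\rangle_{\mathcal{H}_{k}}|^{2}=k(f(x),f(x))\leq M$, and Cauchy–Schwarz produces the dominating bound
\begin{align*}
\sum_{j}\left|\langle h(f(x')),e_{j}\rangle_{\mathcal{H}_{k}}\langle h(f(x)),e_{j}\rangle_{\mathcal{H}_{k}}\right|\leq M.
\end{align*}
Combined with the integrability of $|\textup{sim}(x,x';\lambda)|w(x)w(x')$ on $\mathbb{X}\times\mathbb{X}$ (which follows from $w(x,x')$ being a joint density and the $\lambda$ term contributing a finite constant), Fubini–Tonelli applies. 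After swapping, Parseval collapses the sum to the reproducing kernel:
\begin{align*}
\sum_{j}\langle h(f(x)),e_{j}\rangle_{\mathcal{H}_{k}}\langle h(f(x')),e_{j}\rangle_{\mathcal{H}_{k}}=\langle h(f(x)),h(f(x'))\rangle_{\mathcal{H}_{k}}=k(f(x),f(x')).
\end{align*}

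Finally, I would plug in the identity $\textup{sim}(x,x';\lambda)w(x)w(x')=w(x,x')-\lambda w(x)w(x')$ coming from the definition \eqref{def:similarity function}, which splits the double integral into the two expectation terms:
\begin{align*}
\textup{Tr}(H^{\dag}AH)=\mathbb{E}_{x,x^{+}}\!\left[k(f(x),f(x^{+}))\right]-\lambda\,\mathbb{E}_{x,x^{-}}\!\left[k(f(x),f(x^{-}))\right].
\end{align*}
Negating both sides gives the claim. The only non-routine point is the Fubini justification above; the computation itself is linear-algebraic once the trace is unfolded against the orthonormal basis.
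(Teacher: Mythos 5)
Your proof is correct and takes essentially the same route as the paper: unfold the trace through the adjoint pairing $\langle H^{\dag}AHe_{j},e_{j}\rangle_{\mathcal{H}_{k}}=\langle AHe_{j},He_{j}\rangle_{L^{2}}$, swap the basis sum with the double integral via a Parseval/Cauchy--Schwarz dominating bound, collapse the sum to $k(f(x),f(x'))$, and split using $\textup{sim}(x,x';\lambda)w(x)w(x')=w(x,x')-\lambda w(x)w(x')$. The only organizational difference is that the paper decomposes $A=A_{\textup{pos}}-\lambda A_{\textup{neg}}$ at the outset and treats each trace separately (invoking dominated convergence rather than Fubini for the interchange), while you keep $\textup{sim}$ intact and split at the end --- a cosmetic reordering that does not alter the argument.
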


\begin{proof}
From the definition of $\textup{sim}(x,x';\lambda)$,
\begin{align*}
    (A\psi)(x)&=\int \textup{sim}(x,x';\lambda)\psi(x')w(x')d\nu_{\mathbb{X}}(x')\\
    &=\int \left(\frac{w(x,x')}{w(x)w(x')}-\lambda\right)\psi(x')w(x')d\nu_{\mathbb{X}}(x')\\
    &=\underbrace{\int \frac{w(x,x')}{w(x)w(x')}\psi(x')w(x')d\nu_{\mathbb{X}}(x')}_{:=(A_{\textup{pos}}\psi)(x)}-\lambda\underbrace{\int \psi(x')w(x')d\nu_{\mathbb{X}}(x')}_{:=(A_{\textup{neg}}\psi)(x)}.
\end{align*}
  Firstly, let us proof the identity
  \begin{align*}
    \textup{Tr}(H^{\dag}A_{\textup{pos}}H)=\mathbb{E}_{x,x^{+}}\left[k(f(x),f(x^{+}))\right].
  \end{align*}
  The proof is described as follows:
  From the definition of $H$,
  \begin{equation*}
    (He_{i})(x)=\langle h(f(x)),e_{i}\rangle_{\mathcal{H}_{k}}\quad x\in\mathbb{X}.
  \end{equation*}
  Then we have
  \begin{align*}
    (A_{\textup{pos}}He_{i})(x)
    &=\int \frac{w(x,x')}{w(x)w(x')}w(x')\langle h(f(x')),e_{i}\rangle_{\mathcal{H}_{k}}d\nu_{\mathbb{X}}(x')\\
    &=\int\frac{w(x,x')}{w(x)}\langle h(f(x')),e_{i}\rangle_{\mathcal{H}_{k}}d\nu_{\mathbb{X}}(x').
  \end{align*}
  Here, the adjoint operator $H^{\dag}$ satisfies the following identity; For $\psi\in L^{2}(\mathbb{X},P_{\mathbb{X}})$,
  \begin{equation*}
  \langle He_{i},\psi\rangle_{L^{2}(\mathbb{X},P_{\mathbb{X}})}=\langle e_{i},H^{\dag}\psi\rangle_{\mathcal{H}_{k}}.
  \end{equation*}
  Utilizing this relation yields the following representation:
  \begin{align*}
    &\;\;\;\;H^{\dag}A_{\textup{pos}}He_{j}\\
    &=\sum_{i=1}^{\infty}\langle H^{\dag}A_{\textup{pos}} H e_{j},e_{i}\rangle_{\mathcal{H}_{k}}e_{i}\\
    &=\sum_{i=1}^{\infty}\langle A_{\textup{pos}}He_{j}, He_{i}\rangle_{L^{2}(\mathbb{X},P_{\mathbb{X}})}e_{i}\\
    &=\sum_{i=1}^{\infty} \left(\int w(x)\langle h(f(x)),e_{i}\rangle_{\mathcal{H}_{k}}\int \frac{w(x,x')}{w(x)}\langle h(f(x')),e_{j}\rangle_{\mathcal{H}_{k}}d\nu_{\mathbb{X}}(x')d\nu_{\mathbb{X}}(x)\right)e_{i}\\
    &=\sum_{i=1}^{\infty}\left(\int\int w(x,x')\langle h(f(x)),e_{i}\rangle_{\mathcal{H}_{k}}\langle h(f(x')),e_{j}\rangle_{\mathcal{H}_{k}}d\nu_{\mathbb{X}}(x)d\nu_{\mathbb{X}}(x')\right)e_{i}.
  \end{align*}
  Therefore,
  \begin{align*}
    \textup{Tr}(H^{\dag}A_{\textup{pos}}H)&=\sum_{j=1}^{\infty}\langle H^{\dag}A_{\textup{pos}}He_{j},e_{j}\rangle_{\mathcal{H}_{k}}\\
    &=\sum_{j=1}^{\infty}\int\int w(x,x')\langle h(f(x)),e_{j}\rangle_{\mathcal{H}_{k}}\langle h(f(x')),e_{j}\rangle_{\mathcal{H}_{k}}d\nu_{\mathbb{X}}(x)d\nu_{\mathbb{X}}(x')\\
    &=\int\int w(x,x')\sum_{j=1}^{\infty}\langle h(f(x)),e_{j}\rangle_{\mathcal{H}_{k}}\langle h(f(x')),e_{j}\rangle_{\mathcal{H}_{k}}d\nu_{\mathbb{X}}(x)d\nu_{\mathbb{X}}(x')\\
    &=\int\int w(x,x')\langle h(f(x)),h(f(x'))\rangle_{\mathcal{H}_{k}}d\nu_{\mathbb{X}}(x)d\nu_{\mathbb{X}}(x')\\
    &=\mathbb{E}_{x,x^{+}}\left[ k(f(x),f(x^{+}))\right].
  \end{align*}
  Note that the third equality above is due to the Dominated Convergence Theorem.
  Indeed, the sum $\sum_{j=1}^{n}\langle h(f(x)),e_{j}\rangle_{\mathcal{H}_{k}}\langle h(f(x')),e_{j}\rangle_{\mathcal{H}_{k}}$ converges pointwisely to $\langle h(f(x)),h(f(x))\rangle_{\mathcal{H}_{k}}$ on $\mathbb{X}\times\mathbb{X}$, and
  \begin{align*}
    \left|\sum_{j=1}^{n}\langle h(f(x)),e_{j}\rangle_{\mathcal{H}_{k}}\langle h(f(x')),e_{j}\rangle_{\mathcal{H}_{k}}\right|&\leq \sum_{j=1}^{n}\left|\langle h(f(x)),e_{j}\rangle_{\mathcal{H}_{k}}\langle h(f(x')),e_{j}\rangle_{\mathcal{H}_{k}}\right|\\
    &\leq \left(\sum_{j=1}^{n}\langle h(f(x)),e_{j}\rangle_{\mathcal{H}_{k}}^{2}\right)^{1/2}\left(\sum_{j=1}^{n}\langle h(f(x')),e_{j}\rangle_{\mathcal{H}_{k}}^{2}\right)^{1/2}\\
    &\leq \left(\sum_{j=1}^{\infty}\langle h(f(x)),e_{j}\rangle_{\mathcal{H}_{k}}^{2}\right)^{1/2}\left(\sum_{j=1}^{\infty}\langle h(f(x')),e_{j}\rangle_{\mathcal{H}_{k}}^{2}\right)^{1/2}\\
    &=\|h(f(x))\|_{\mathcal{H}_{k}}\|h(f(x'))\|_{\mathcal{H}_{k}}\\
    &\leq \sup_{x\in\mathbb{X}}k(f(x),f(x))<+\infty.
  \end{align*}
  On the other hand, it is obvious that,
  \begin{align*}
      \langle H^{\dag}A_{\textup{neg}}He_{j},e_{j}\rangle_{\mathcal{H}_{k}}&=\langle A_{\textup{neg}}He_{j},He_{j}\rangle_{L^{2}(\mathbb{X},P_{\mathbb{X}})}\\
      &=\int w(x)\langle h(f(x)),e_{j}\rangle_{\mathcal{H}_{k}}\int w(x')\langle h(f(x')),e_{j}\rangle_{\mathcal{H}_{k}}d\nu_{\mathbb{X}}(x')d\nu_{\mathbb{X}}(x)\\
      &=\int\int \langle h(f(x)),e_{j}\rangle_{\mathcal{H}_{k}}\langle h(f(x')),e_{j}\rangle_{\mathcal{H}_{k}}w(x)w(x')d\nu_{\mathbb{X}}(x)d\nu_{\mathbb{X}}(x').
  \end{align*}
  Hence we obtain,
  \begin{align*}
      \textup{Tr}(H^{\dag}A_{\textup{neg}}H)
      &=\sum_{j=1}^{\infty}\langle H^{\dag}A_{\textup{neg}}He_{j},e_{j}\rangle_{\mathcal{H}_{k}}\\
      &=\int\int \langle h(f(x)),h(f(x'))\rangle_{\mathcal{H}_{k}}w(x)w(x')d\nu_{\mathbb{X}}(x)d\nu_{\mathbb{X}}(x')\\
      &=\mathbb{E}_{x,x^{-}}\left[k(f(x),f(x^{-}))\right].
  \end{align*}
  Hence, we obtain the desired results and end the proof.
\end{proof}

\subsubsection{Comparison with Related Work from the Graph Cut Viewpoint}
\label{appsubsubsec:comparison with haochen from the graph cut viewpoint}

\citet{haochen2021provable} has already investigated links between the population-level spectral clustering and contrastive learning.
However, our integral kernel \eqref{def:similarity function} introduced in Section~\ref{subsubsec:key ingredient} is slightly different from that of~\citet{haochen2021provable}, since 1) we divide $w(x,x')$ by $w(x)w(x')$ in the first term rather than by $\sqrt{w(x)w(x')}$ (see Appendix~F in~\citet{haochen2021provable}), 2) we also incorporate the hyperparemeter $\lambda$.

Note that~\citet{tian2022understanding} introduces a unified framework termed $\alpha$-CL, which connects various contrastive losses from the coordinate-wise optimization perspective.
In~\citet{tian2022understanding}, the contrastive covariance plays a central role in the theoretical analysis.
On the other hand, we use the similarity function defined in Section~\ref{subsec:assumptions and definitions}, and thus the approach of our analysis is different from the contrastive covariance of~\citet{tian2022understanding}.

\subsection{Proof of \eqref{eq:normalized cut rkhs}}
\label{appsubsec:proof of (7)}

\begin{proof}[Proof of \eqref{eq:normalized cut rkhs}]
  For the proof of \eqref{eq:normalized cut rkhs}, we closely follow the approaches presented in Section~5 of~\citet{von2007tutorial}.
  Since we consider the population-level normalized cut, we present the proof of \eqref{eq:normalized cut rkhs} for the sake of completeness.
  
  Let us define the identity operator $D:L^{2}(\mathbb{X},P_{\mathbb{X}})\to L^{2}(\mathbb{X},P_{\mathbb{X}})$ as $D\psi=\psi$ for $\psi\in L^{2}(\mathbb{X},P_{\mathbb{X}})$.
  From the definitions of $D$ and $U$, we have
  \begin{equation*}
    DUe_{i}=
    \begin{cases}
      \frac{\mathbbm{1}_{\mathbb{V}_{i}}(\cdot)}{\sqrt{\textup{vol}(\mathbb{V}_{i})}}&\quad (i\leq K),\\
      0&\quad (i>K).
    \end{cases}
  \end{equation*}
  Hence, we have the following for $i\leq K$:
  \begin{equation*}
    \langle U^{\dag}DUe_{i},e_{i}\rangle_{\mathcal{H}_{k}}=\int \frac{w(x)\mathbbm{1}_{\mathbb{V}_{i}}(x)^{2}}{\textup{vol}(\mathbb{V}_{i})}d\nu_{\mathbb{X}}(x)=1.
  \end{equation*}
  On the other hand, for $i\leq K$ we have
  \begin{align*}
    &\;\;\;\;\langle U^{\dag}AUe_{i},e_{i}\rangle_{\mathcal{H}_{k}}\\
    &=\int \int (w(x,x')-\lambda w(x)w(x'))\frac{\mathbbm{1}_{\mathbb{V}_{i}}(x)}{\sqrt{\textup{vol}(\mathbb{V}_{i})}}\frac{\mathbbm{1}_{\mathbb{V}_{i}}(x')}{\sqrt{\textup{vol}(\mathbb{V}_{i})}}d\nu_{\mathbb{X}}(x')d\nu_{\mathbb{X}}(x)
  \end{align*}
  Therefore, we obtain the following:
  \begin{align*}
    \textup{Tr}(U^{\dag}(D-A)U)&=\sum_{i=1}^{\infty}\langle U^{\dag}(D-A)Ue_{i},e_{i}\rangle_{\mathcal{H}_{k}}\\
    &=\sum_{i=1}^{K}\langle U^{\dag}(D-A)Ue_{i},e_{i}\rangle_{\mathcal{H}_{k}}\\
    &=\lambda K+\frac{1}{2}\sum_{i=1}^{K}\int\int \left(w(x,x')-\lambda w(x)w(x')\right)\left(\frac{\mathbbm{1}_{\mathbb{V}_{i}}(x)}{\sqrt{\textup{vol}(\mathbb{V}_{i})}}-\frac{\mathbbm{1}_{\mathbb{V}_{i}}(x')}{\sqrt{\textup{vol}(\mathbb{V}_{i})}}\right)^{2}d\nu_{\mathbb{X}}^{\otimes 2}(x,x')\\
    &=\lambda K+\sum_{i=1}^{K}\int_{x\in\mathbb{V}_{i}}\int_{x'\in\mathbb{V}_{i}^{c}} \frac{\left(w(x,x')-\lambda w(x)w(x')\right)}{\textup{vol}(\mathbb{V}_{i})}d\nu_{\mathbb{X}}(x')d\nu_{\mathbb{X}}(x)\\
    &=\lambda K+\sum_{i=1}^{K}\int_{x\in\mathbb{V}_{i}}\int_{x'\in\mathbb{V}_{i}^{c}} \frac{\textup{sim}(x,x';\lambda)}{\textup{vol}(\mathbb{V}_{i})}w(x)w(x')d\nu_{\mathbb{X}}(x')d\nu_{\mathbb{X}}(x)\\
    &=\lambda K+\sum_{i=1}^{K}\frac{W(\mathbb{V}_{i},\mathbb{V}_{i}^{c})}{\textup{vol}(\mathbb{V}_{i})}.
  \end{align*}
  Hence we end the proof.
\end{proof}

\section{Experiments}
\label{appsec:experiment details}

\subsection{Experimental setup}
\label{appsubsec:experimental setup}

We provide the setting of the experiments presented in this paper.
The code used in our experiments is based on the official implementation of SimSiam\footnote{\url{https://github.com/facebookresearch/simsiam} (Last accessed: March 25, 2023)} 
and written with PyTorch~\citep{paszke2019pytorch}.
We basically follow the experimental setting of~\citet{chen2021exploring}.
For the sake of completeness, we provide the detail of the setup used in our experiments.
During the stage of pretraining, we construct a trainable encoder model as follows: following~\citet{chen2021exploring}, we use a backbone architecture whose parameters are initialized, followed by the MLP that consists of linear layers, batch normalization~\citep{ioffe2015batch}, and the ReLU activation function.
Note that this type of MLP is called \textit{projection head}~\citep{chen2020simple}.
The output of a trainable encoder model is normalized using the Euclidean norm as several works do~\citep{chen2020simple,dwibedi2021little}.
On the other hand, during the stage of linear evaluation~\citep{chen2020simple,chen2021exploring}, the additional MLP is removed from a trained encoder model, and then a linear classification head is added to the encoder model.
The parameters of the trained encoder model are frozen in this stage, and only the linear head is trained.
In all of the experiments reported in this paper, we use ResNet-18~\citep{he2016deep} as the backbone architecture.
We use the 2-layer multi-layer perceptron for the projection head, where the first linear layer is bias-free, and the last linear layer has the bias term.

For the pretraining, we use the same data augmentation techniques as~\citet{chen2020improved}; Chen and He~\citet{chen2021exploring}.
Note that following Chen and He~\citet{chen2021exploring}, for the CIFAR-10 experiments, we exclude the Gaussian blur augmentation.
For the linear evaluation, we also follow the data augmentation techniques of~\citet{chen2021exploring}.
Note that in both the stage of pretraining and linear evaluation, we set drop\_last to True in the training data loader.

For optimization during both the stage of pretraining and linear evaluation, following~\citet{chen2021exploring}, we use the SGD optimizer.
Inspired by~\citet{haochen2021provable}, we use the cosine-decay learning rate scheduler~\citep{loshchilov2017sgdr} with warmup~\citep{goyal2017accurate}.
Note that we use the cosine-decay learning rate scheduler in both the pretraining and linear evaluation and apply warmup in the pretraining.
Following the implementation of the learning rate scheduler of the official implementation of SCL\footnote{\url{https://github.com/jhaochenz/spectral_contrastive_learning/blob/ee431bdba9bb62ad00a7e55792213ee37712784c/optimizers/lr_scheduler.py} (Last accessed: March 25, 2023)}, we also define our learning rate scheduler by the number of iterations.

\subsubsection{Configurations}
\label{appsubsubsec:configurations}
In all of the experiments reported in this paper, we use the following configurations: 
\paragraph{Pretraining}
For the learning rate, we set the initial learning rate to 0.0005, the base learning rate to 0.05, and the warmup epochs to 10.
Following~\citet{chen2021exploring}, we use the linear scaling~\citep{goyal2017accurate} for the learning rate.
For the setting of the SGD optimizer, we also follow the setting of~\citet{chen2021exploring} used for their CIFAR-10 experiments (see Appendix~D in their paper): the momentum is set 0.9, and the weight decay is set 0.0005.
For the output dimension of encoders, we set 512.
\paragraph{Linear Evaluation}
We follow the configurations of~\citet{chen2021exploring} for linear evaluation: for the SGD optimizer, the momentum is 0.9, the weight decay is 0, the batch size is fixed to 256, and the learning rate is 30.0, where the linear scaling~\citep{goyal2017accurate} is applied to the learning rate.
Note that we train the linear head for 100 epochs.

\subsubsection{Kernel Functions}
\label{appsubsubsec:kernel functions used in the experiments}
In the experiments, we use the following kernel functions:

\paragraph{Gaussian Kernel.} The Gaussian kernel $k_{\textup{Gauss}}$ is defined as,
\begin{align*}
    k_{\textup{Gauss}}(z,z')=\exp\left(-\frac{\|z-z'\|_{2}^{2}}{\sigma^{2}}\right),
\end{align*}
where $\sigma^{2}>0$ is the bandwidth parameter.

\paragraph{Quadratic Kernel.} The Quadratic kernel $k_{\textup{Gauss}}$ is defined as,
\begin{align*}
    k_{\textup{Quad}}(z,z')=\left(z^{\top}z'\right)^{2}.
\end{align*}

\subsubsection{Loss Functions}
\label{appsubsubsec:algorithms for our experiments}

For the implementation of the kernel contrastive loss, we implement the empirical kernel contrastive loss~\eqref{def:empirical kernel contrastive loss}.
Note that in our experiments, the KCL frameworks with the Gaussian kernel and quadratic kernel are called Gaussian KCL (GKCL), and Quadratic KCL (QKCL), respectively.

For comparison, we also perform several reproducing experiments for SimCLR~\citep{chen2020simple} and SCL~\citep{haochen2021provable}.
For the implementation of the objective function of SimCLR, we use lightly.loss.NTXentLoss\footnote{Note that in our experiments, we use Lightly 1.2.25.} of Lightly~\citep{susmelj2020lightly}.
For the implementation of the spectral contrastive loss of SCL, we adapt the implementation of the official SCL code\footnote{\url{https://github.com/jhaochenz/spectral_contrastive_learning/blob/ee431bdba9bb62ad00a7e55792213ee37712784c/models/spectral.py} (Last accessed: March 25, 2023)}.

\subsubsection{Datasets}
\label{appsubsubsec:datasets used in experiments}
In the experiments, we use the following datasets: CIFAR-10~\citep{krizhevsky2009learning}, STL-10~\citep{coates2011analysis}, and ImageNet-100~\citep{tian2020contrastive}.
Note that ImageNet-100 is a subset of the ImageNet-1K dataset~\citep{deng2009imagenet}, where the ImageNet-100 dataset contains images categorized in 100 classes~\citep{tian2020contrastive}.
When extracting images from the original the ImageNet-1K dataset to create the ImageNet-100 dataset, we select the 100 classes used in~\citet{tian2020contrastive}.
We also remark that for the experiments with the STL-10 dataset, we use the mixed dataset that consists of the unlabeled images and the labeled training images for pretraining, the labeled training images for the training of the linear head in the stage of linear evaluation, and the labeled test images for computing the accuracy in linear evaluation.
Throughout the experiments, we use the following image size for each dataset: $32\times 32$ pixels for CIFAR-10, $96\times96$ pixels for STL-10, and $224\times 224$ pixels for ImageNet-100, where the image sizes of CIFAR-10 and STL-10 are the same as the sizes of the original images, respectively, and the image sizes for ImageNet-100 are inspired by those for the ImageNet-1K dataset used in~\citet{chen2020simple,chen2021exploring}.

\subsubsection{Detail of Architectures for the CIFAR-10 Experiments}
\label{appsubsubsec:detail of architectures}
In the experiments with the CIFAR-10 dataset, following the settings of~\citet{he2016deep,chen2021exploring,haochen2021provable}, we modify the original ResNet-18~\citep{he2016deep} as follows: in the implementation code of ResNet\footnote{\url{https://github.com/pytorch/vision/blob/eac3dc7bab436725b0ba65e556d3a6ffd43c24e1/torchvision/models/resnet.py} (Last accessed: March 26, 2023)} of torchvision~\citep{torchvision2016}, we replace the first convolution layer with that whose kernel size is 3, stride is 1, and padding is 1, and the maxpool layer with that whose kernel size is 1 and stride is 1.

\subsubsection{Supplementary Information of the Implementation}
\label{appsubsubsec:information of the implementation}
We use the following packages for the experiments: PyTorch~\citep{paszke2019pytorch}, torchvision~\citep{torchvision2016}, NumPy~\citep{harris2020array}, Lightly~\citep{susmelj2020lightly}, Matplotlib~\citep{hunter2007matplotlib}, and seaborn~\citep{waskom2021seaborn}.

\subsection{Results of Linear Evaluation}
\label{appsubsec:results of linear evaluation}

We perform pretraining and linear evaluation with the CIFAR-10, STL-10, and ImageNet-100 datasets.
In the stage of pretraining, we train the encoder models for 800 epochs.
For the experiments with the CIFAR-10 and STL-10 datasets, the batch sizes are set 256.
Besides, for the experiments with the ImageNet-100 dataset, we set 512 for the batch sizes.
We select the following hyperparameters of the KCL frameworks for all the experiments reported in this subsection: $\sigma^{2}=1$ and $\lambda=8$ for GKCL, and $\lambda=4$ for QKCL.
For SCL, inspired by~\citet{haochen2021provable}, we select $3$ for the radius parameter.
For SimCLR, inspired by~\citet{chen2020simple}, we select $0.1$ for the temperature parameter.
These hyperparameters are also used for all the experiments in this subsection.

The results are shown in Table~\ref{tab:lin eval results}.
In Table~\ref{tab:lin eval results}, the experiments with the CIFAR-10 dataset are performed using one Quadro P6000 GPU.
Besides, the experiments with STL-10 and ImageNet-100 are performed using one Tesla V100S GPU.

\begin{table}[!t]
    \caption{Top-1 and Top-5 accuracy (\%) in linear evaluation for each method.
    For the CIFAR-10 and STL-10 experiments, we perform three trials of "pretraining+linear evaluation," and the results indicate the mean$\pm$standard deviation.
    For the ImageNet-100 experiments, we perform one trial of "pretraining+linear evaluation."
    All the results reported below are obtained as follows: we trained the linear heads for 100 epochs and evaluated the final classification accuracy of the models with the corresponding validation or test dataset.
    The word "repro." is the abbreviation for "reproducing," meaning that we performed several reproducing experiments to compare to the performance of KCL.
    }
    \label{tab:lin eval results}
    \begin{center}
    \begin{small}
    \begin{tabular}{lcccccccccc}
        \toprule
         & \multicolumn{3}{c}{CIFAR-10} & \; & \multicolumn{3}{c}{STL-10} & \; & \multicolumn{2}{c}{ImageNet-100} \\
        \cline{2-4}
        \cline{6-8}
        \cline{10-11}
         & Top-1 & \, & Top-5 & \; & Top-1 & \, & Top-5 & \; & Top-1 & Top-5\\
        \midrule
        SimCLR (repro.)& 90.09$\pm$0.05 & \, & 99.70$\pm$0.01 & \; & 87.23$\pm$0.35 & \, & 99.56$\pm$0.04 & \; & 77.26 & 94.06 \\
        SCL (repro.) & 91.53$\pm$0.10 & \, & 99.71$\pm$0.05 & \; & 86.68$\pm$0.12 & \, & 99.49$\pm$0.07 & \; & 75.22 & 93.36 \\
        \midrule
        GKCL & 90.87$\pm$0.08 & \, & 99.63$\pm$0.00 & \; & 86.69$\pm$0.09 & \, & 99.38$\pm$0.01 & \; & 76.40 & 93.20 \\
        QKCL & 90.62$\pm$0.10 & \, & 99.59$\pm$0.05 & \; & 87.07$\pm$0.20 & \, & 99.37$\pm$0.03 & \; & 77.12 & 93.96 \\
        \bottomrule
    \end{tabular}
    \end{small}
    \end{center}
\end{table}

\subsection{More Experiments}
\label{appsubsec:ablation study}

\subsubsection{Ablation Study on the Weight Parameters and Batch Sizes}
\label{appsubsubsec:ablation study on the weight and batch sizes}
We investigate how the selection of $\lambda$ and batch sizes affect the quality of learned representations.
In the experiments, we use the CIFAR-10 dataset.
We select $\{1, 2, 4, 8, 16, 32\}$ for $\lambda$, and $\{64, 128, 256, 512, 1024\}$ for the batch sizes.
In each run, we pretrain an encoder model for 200 epochs.
We use both GKCL and QKCL and evaluate those results.

\begin{figure}[H]
    \centering
    \includegraphics[width=0.7\linewidth]{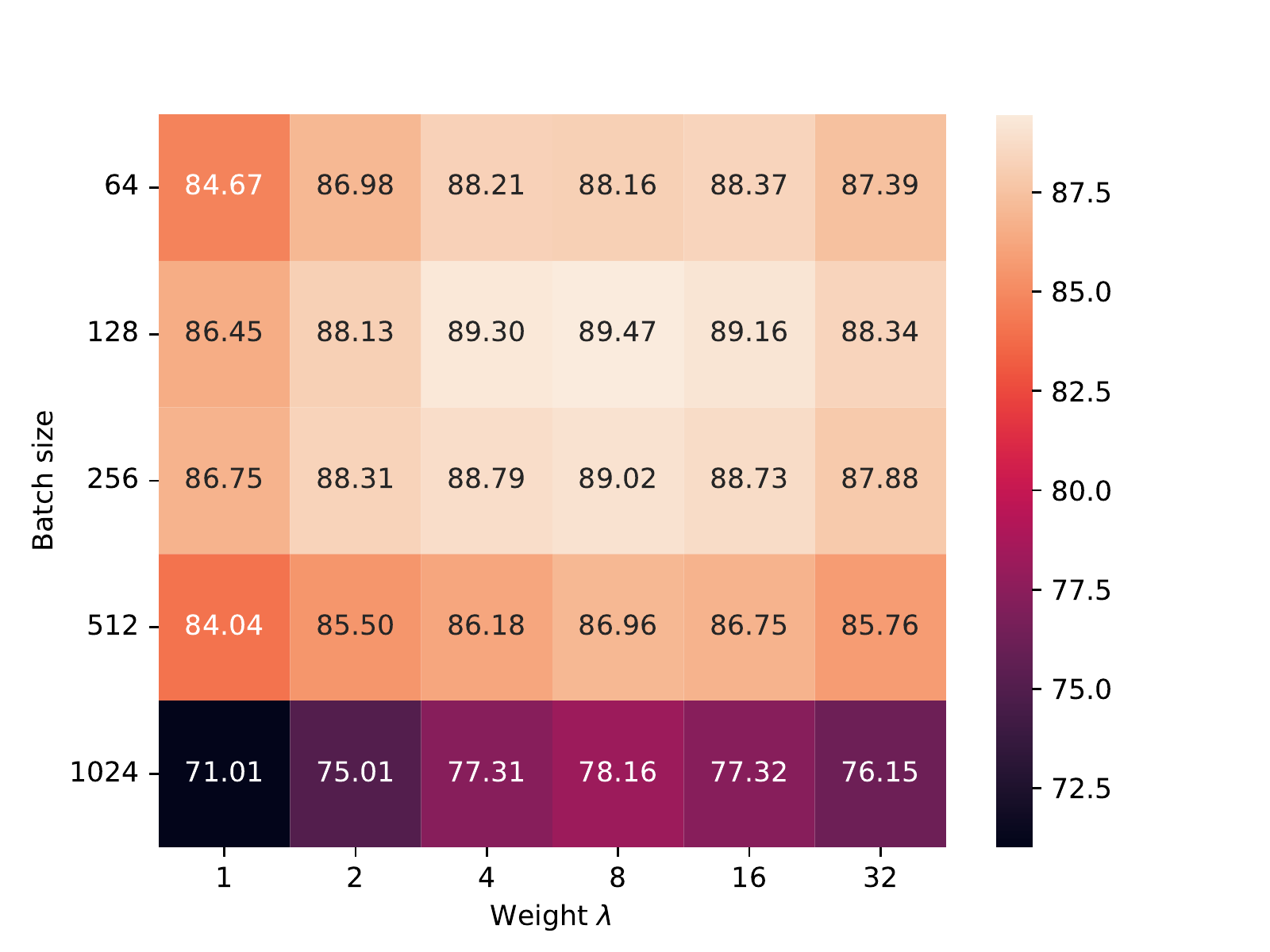}
    \caption{The results of ablation study for GKCL. The number in each cell indicates the Top-1 accuracy (\%).}
    \label{fig:batch size weight ablation gkcl}
\end{figure}
\begin{figure}[h]
    \centering
    \includegraphics[width=0.7\linewidth]{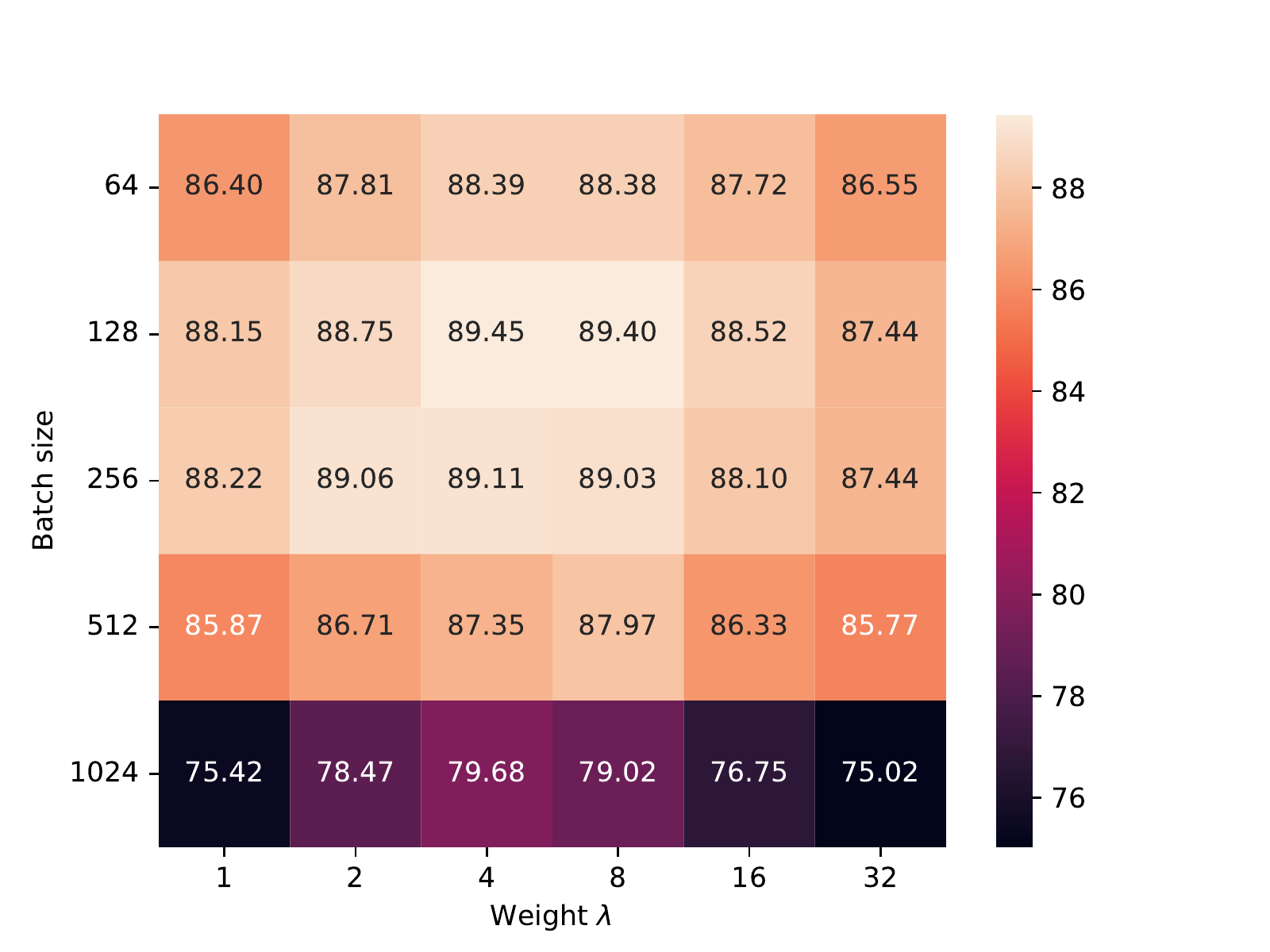}
    \caption{The results of ablation study for QKCL. The number in each cell indicates the Top-1 accuracy (\%).}
    \label{fig:batch size weight ablation qkcl}
\end{figure}

The Top-1 accuracy computed at the end of linear evaluation for GKCL and QKCL are shown in Figure~\ref{fig:batch size weight ablation gkcl} and \ref{fig:batch size weight ablation qkcl}, respectively.
Note that the experiments reported in Figure~\ref{fig:batch size weight ablation gkcl} and \ref{fig:batch size weight ablation qkcl} are performed by using one Tesla V100S GPU.
The results of the experiments indicate that 1) the selection of the value for $\lambda$ affects the quality of the representations learned by KCL, 2) the small batch sizes (e.g., 128 and 256) are more efficient when pretraining encoders, while the large batch sizes (e.g., 1024) degrade the performance.
Note that~\citet{chen2021intriguing} showed similar findings to the first point for the generalized NT-Xent loss.
Besides, \citet{chen2021exploring} point out the efficiency of SimSiam with small batch sizes.

\subsubsection{How Does $\lambda$ Influence the Geometry of Representations Learned?}
\label{appsubsubsec:how does lambda influences the geometry of representations learned}

From Theorem~\ref{thm:decomposition}, minimization of the kernel contrastive loss makes $\lambda\cdot\mathfrak{c}(f)$ smaller, which can imply that the means of the clusters tend to distribute uniformly as $\lambda$ increases.
Motivated by this result, in this subsubsection, we simulate how the mean of the feature vectors belonging to each cluster distributes.
In the experiments, we use the STL-10 dataset.
In the stage of unsupervised pretraining, we use the combination of the unlabeled images and the labeled training images in the STL-10 dataset.
We use GKCL for the pretraining.
The weights used in the experiments are $\{1,2,4,8,16,32,64\}$.
We pretrain the encoder model for 400 epochs in each run.
We set the batch sizes to 256.
After the stage of pretraining, we compute the mean for each class and calculate the cosine similarities between those means.
Since the clusters $\mathbb{M}_{1},\cdots,\mathbb{M}_{K}$ are hard to obtain for the STL-10 dataset, we instead use the labels included in the labeled training images of the STL-10 dataset to compute the mean over the feature vectors of augmented data transformed from raw data in each class.
Note that we draw an augmented image from each raw image when computing the means.
The experiments in this subsubsection are performed by using one Tesla V100S GPU.

\begin{figure}[h]
    \centering
    \includegraphics[width=0.8\linewidth]{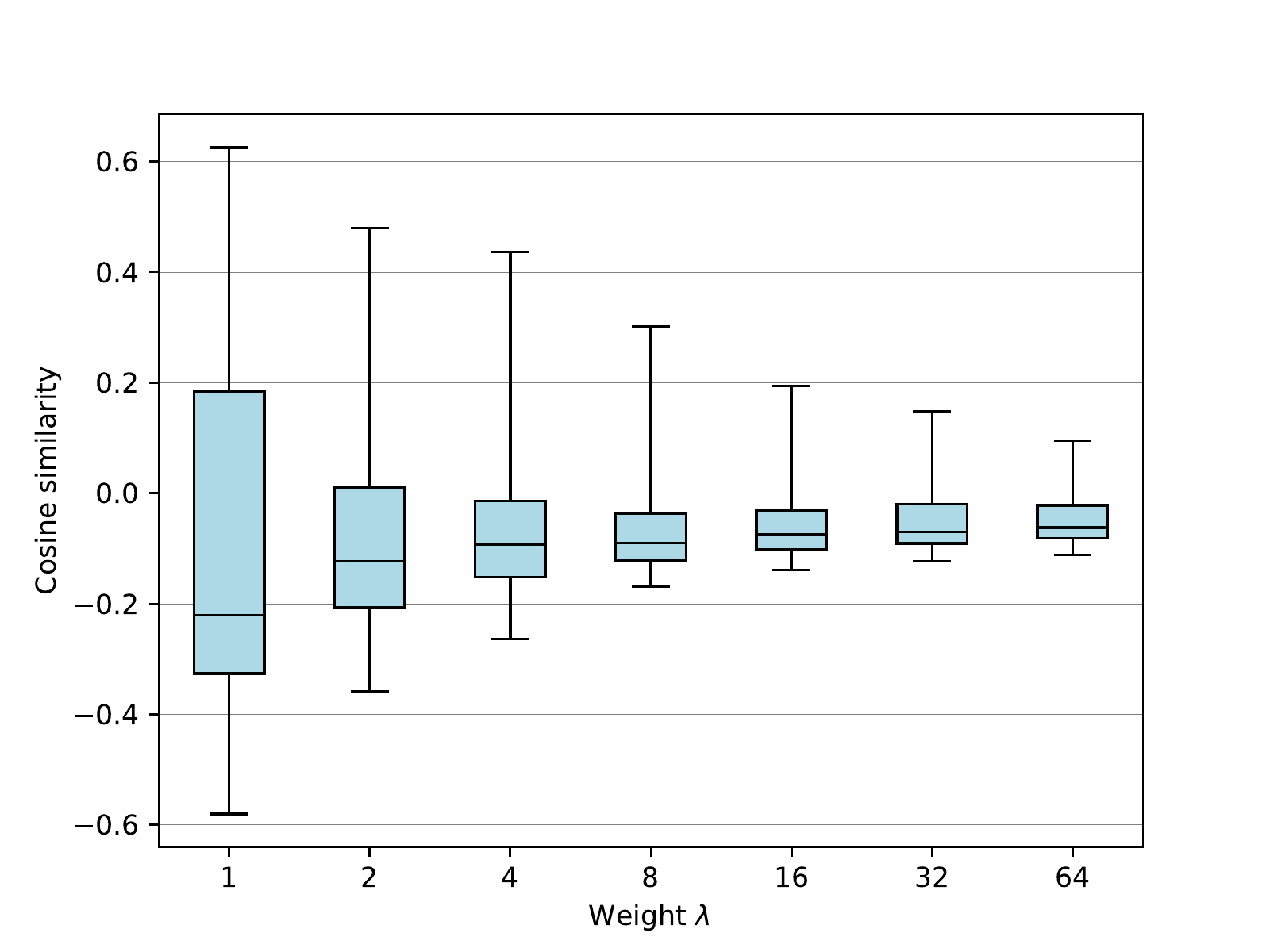}
    \caption{The box plot of the cosine similarities between the means of the different classes for each encoder model pretrained with different $\lambda$. 
    Note that the horizontal lines in each bar represent, in the order from bottom to top, the minimum value, the first quartile, the median, the third quartile, and the maximum value, respectively.}
    \label{fig:boxplot}
\end{figure}

The results are summarized in Table~\ref{fig:boxplot} as a box plot.
The results indicate that the variation becomes smaller as $\lambda$ increases.
Thus, larger $\lambda$ makes the means to distribute more uniformly in this experimental setting.
Note that this result may imply that the clusters $\mathbb{M}_{1},\cdots,\mathbb{M}_{K}$ and the subsets defined with labels have some relation.
We leave the investigation of this question as future work.

\subsection{Too Large $\lambda$ Degrades the Performance in Downstream Classification Tasks}
\label{appsubsec:larger lambda result in bad result}

In this subsection, we report the results of the experiments with different values for $\lambda$.
In the experiments, we use $\{1,2,4,8,16,32,64,128,256,512\}$ for the weight $\lambda$.
For the contrastive learning framework, we use GKCL.
We use two datasets, CIFAR-10 and STL-10, and pretrain the encoder during 400 epochs in each run.
In the stage of pretraining, we set 128 for the batch size.
Each experiment reported in this subsection is performed using one Tesla V100S GPU.

\begin{table}[H]
    \centering
    \caption{Top-1 accuracy (\%) in the results of linear evaluation, where the encoder is pretrained with different $\lambda$ for each run.}
    \label{tab:weight effects}
    \begin{tabular}{lcccc}
    \toprule
     & & \multicolumn{3}{c}{Top-1 Accuracy} \\
    \cline{3-5} 
        $\lambda$ & \; & CIFAR-10 & \; & STL-10   \\
    \midrule
        1 & \; & 89.06 & \; & 83.20 \\
        2 & \; & 90.29 & \; & 84.54 \\
        4 & \; & 90.77 & \; & 85.36 \\
        8 & \; & 90.66 & \; & 85.33 \\
        16 & \; & 90.52 & \; & 84.19 \\
        32 & \; & 89.78 & \; & 83.39 \\
        64 & \; & 88.85 & \; & 81.50 \\
        128 & \; & 86.93 & \; & 79.71 \\
        256 & \; & 85.24 & \; & 77.89 \\
        512 & \; & 82.43 & \; & 76.26 \\
    \bottomrule
    \end{tabular}
\end{table}

The results on the Top-1 accuracy at the end of the linear evaluation are presented in Table~\ref{tab:weight effects}.
The results indicate that too large $\lambda$, such as 512, degrades the performance in the downstream task.

\end{document}